\newtheorem{fact}{Fact}
\newtheorem{assumption}{Assumption}
\newcommand{\abs}[1]{\left\lvert #1 \right\rvert}
\newcommand{\norm}[1]{\left\lVert #1 \right\rVert}
\newcommand{\brk}[1]{\left[ #1 \right]}
\newcommand{\cbrk}[1]{\left\{ #1 \right\}}
\newcommand{\prt}[1]{\left( #1 \right)}
\newcommand{\cI}{\mathcal{I}}
\newcommand{\cF}{\mathcal{F}}
\newcommand{\cH}{\mathcal{H}}
\newcommand{\cM}{\mathcal{M}}
\newcommand{\cS}{\mathcal{S}}
\newcommand{\cA}{\mathcal{A}}
\newcommand{\cO}{\mathcal{O}}
\newcommand{\sA}{\mathscr{A}}
\newcommand{\sD}{\mathscr{D}}
\newcommand{\bP}{\mathbb{P}}
\newcommand{\bE}{\mathbb{E}}
\newcommand{\bR}{\mathbb{R}}
\newcommand{\rO}{\mathrm{O}}
\newcommand{\tb}[1]{\textbf{ #1}}
\newcommand{\eu}[1]{\text{EU}\prt{#1}}
\newcommand{\erm}[1]{U_{\beta}\prt{#1}}
\newcommand{\cT}{\mathcal{T}}
\begin{document}

\title{Bridging Distributional and Risk-sensitive Reinforcement Learning with Provable Regret Bounds}

\author{\name Hao Liang \email haoliang1@link.cuhk.edu.cn \\
       \addr School of Science and Engineering\\
       The Chinese University of Hong Kong, Shenzhen \\
       \AND
       \name Zhi-Quan Luo \email luozq@cuhk.edu.cn \\
       \addr School of Science and Engineering\\
       The Chinese University of Hong Kong, Shenzhen\\}

\editor{My editor}

\maketitle

\begin{abstract}
We study the regret guarantee for risk-sensitive reinforcement learning (RSRL) via distributional reinforcement learning (DRL) methods. In particular, we consider finite episodic Markov decision processes whose objective is the entropic risk measure (EntRM) of return. By leveraging a key property of the EntRM, the independence property, we establish the risk-sensitive distributional dynamic programming framework. We then propose two novel DRL algorithms that implement optimism through two different schemes, including a model-free one and a model-based one. 

We prove that they both attain $\tilde{\mathcal{O}}\prt{\frac{\exp(|\beta| H)-1}{|\beta|}H\sqrt{S^2AK}}$ regret upper bound, where $S$, $A$, $K$, and $H$ represent the number of states, actions, episodes, and the time horizon, respectively. It matches RSVI2 proposed in \cite{fei2021exponential}, with  novel distributional analysis. To the best of our knowledge, this is the first regret analysis that bridges DRL and RSRL in terms of sample complexity. 

Acknowledging the computational inefficiency associated with the model-free DRL algorithm, we propose an alternative DRL algorithm with distribution representation. This approach not only maintains the established regret bounds but also significantly amplifies computational efficiency.

We also  prove a tighter minimax lower bound of $\Omega\prt{\frac{\exp(\beta H/6)-1}{\beta H}H\sqrt{SAT}}$ for the $\beta>0$ case, which recovers the tight lower bound $\Omega(H\sqrt{SAT})$ in the risk-neutral setting. 
\end{abstract}

\begin{keywords}
distributional reinforcement learning, risk-sensitive reinforcement learning, regret bounds, episodic MDP, entropic risk measure
\end{keywords}

\section{Introduction}
Standard reinforcement learning (RL) seeks to find an optimal policy that maximizes the expected return \citep{sutton2018reinforcement}. This approach is often referred to as risk-neutral RL, as it focuses on the mean functional of the return distribution. However, in high-stakes applications, such as finance \citep{davis2008risk,bielecki2000risk}, medical treatment \citep{ernst2006clinical}, and operations research \citep{delage2010percentile}, decision-makers are often risk-sensitive and aim to maximize a risk measure of the return distribution.

Since the pioneering work of \citet{howard1972risk}, risk-sensitive reinforcement learning (RSRL) based on the exponential risk measure (EntRM) has been applied to a wide range of domains \citep{shen2014risk, nass2019entropic, hansen2011robustness}. EntRM offers a trade-off between the expected return and its variance and allows for the adjustment of risk-sensitivity through a risk parameter. However, existing approaches typically require complicated algorithmic designs to handle the non-linearity of EntRM.

Distributional reinforcement learning (DRL) has demonstrated superior performance over traditional methods in some challenging tasks under a risk-neutral setting \citep{bellemare2017distributional, dabney2018distributional, dabney2018implicit}. Unlike value-based approaches, DRL learns the entire return distribution instead of a real-valued value function. Given the distributional information, it is natural to leverage it to optimize a risk measure other than expectation \citep{dabney2018implicit, singh2020improving, ma2020dsac}. Despite the intrinsic connection between DRL and RSRL, existing works on RSRL via DRL approaches lack regret analysis \citep{dabney2018implicit, ma2021conservative, achab2021robustness}.  Consequently, it is challenging to evaluate and improve these DRL algorithms in terms of sample efficiency. Additionally, DRL can be computationally demanding as return distributions are typically infinite-dimensional objects. This complexity raises a pertinent question:
\begin{center}
\textit{Is it feasible for DRL to attain near-optimal regret in RSRL while preserving computational efficiency?}
\end{center}
In this work, we answer this question positively by providing computationally efficient DRL algorithms with regret guarantee. We have developed two types of DRL algorithms, both designed to be computationally efficient, and equipped with principled exploration strategies tailored for tabular EntRM-MDP. Notably, these proposed algorithms apply the principle of optimism in the face of uncertainty (OFU) at a distributional level, effectively balancing the exploration-exploitation dilemma. By conducting the first regret analysis in the field of DRL, we  bridge the gap between computationally efficient DRL and RSRL, especially in terms of sample complexity. Our work  paves the way for deeper understanding and improving the efficiency of RSRL through the lens of distributional approaches.

\subsection{Related Work}  
Related work in DRL has rapidly grown since \cite{bellemare2017distributional}, with numerous studies aiming to improve performance in the risk-neutral setting \citep[see][]{rowland2018analysis,dabney2018distributional,dabney2018implicit, barth2018distributed,yang2019fully,lyle2019comparative,zhang2021distributional}. However, only a few works have considered risk-sensitive behavior, including \cite{dabney2018implicit,ma2021conservative,achab2021robustness}. None of these works have addressed sample complexity.

A large body of work has investigated RSRL using the EntRM in various settings \citep{borkar2001sensitivity, borkar2002q,borkar2002risk,borkar2010learning,bauerle2014more,di2000infinite,di2007infinite,cavazos2011discounted, jaskiewicz2007average,ma2020dsac,mihatsch2002risk,osogami2012robustness,patek2001terminating,shen2013risk,shen2014risk}. However, these works either assume known transition and reward or consider infinite-horizon settings without considering sample complexity.

Our work is related to two recent studies by \cite{fei2020risk} and \cite{fei2021exponential} in the same setting. \cite{fei2020risk} introduced the first regret-guaranteed algorithms for risk-sensitive episodic Markov decision processes (MDPs), but their regret upper bounds contain an unnecessary factor of $\exp(|\beta| H^2)$ and their lower bound proof contains errors, leading to a weaker bound. While \cite{fei2021exponential} refined their algorithm by introducing a doubly decaying bonus that effectively removes the $\exp(|\beta| H^2)$ factor \footnote{A detailed comparison with \cite{fei2021exponential} is given in Section \ref{sec:dis}.}, the issue with the lower bound was not resolved. \cite{achab2021robustness} proposed a risk-sensitive deep deterministic policy gradient framework, but their work is fundamentally different from ours as they consider the conditional value at risk  and focus on discounted MDP with infinite horizon settings.  Moreover, \cite{achab2021robustness} assumes that the model is known. 

\subsection{Contributions}
This paper makes the following primary contributions: \\
1. Formulation of a Risk-Sensitive Distributional Dynamic Programming (RS-DDP) framework. This framework introduces a distributional Bellman optimality equation tailored for EntRM-MDP, leveraging the independence property of the EntRM. \\
2. Proposal of computationally efficient DRL algorithms that enforce the OFU principle in a distributional fashion, along with regret upper bounds of $\tilde{\mathcal{O}}\prt{\frac{\exp(|\beta| H)-1}{|\beta|}H\sqrt{S^2AK}}$. The DRL algorithms not only outperform existing methods empirically but are also supported by theoretical justifications. Furthermore, this marks the first instance of analyzing a DRL algorithm within a finite episodic EntRM-MDP setting.  \\
3. Filling of gaps in the lower bound in \cite{fei2020risk}, resulting in a tight lower bound of $\Omega\prt{\frac{\exp(\beta H/6)-1}{\beta }\sqrt{SAT}}$ for $\beta>0$. This lower bound is dependent of $S$ and $A$ and recovers the tight lower bound in the risk-neutral setting (as $\beta\rightarrow0$).
\begin{table}[h!]
    \centering
\begin{tabular}{|c|c|c|c|}
\hline  Algorithm & Regret bound& Time & Space \\
\hline   
\texttt{RSVI}   &  $\tilde{\mathcal{O}}\prt{\exp(|\beta|H^2)\frac{\exp(|\beta| H)-1}{|\beta|}\sqrt{HS^2AT}}$ & \multirow{3}{*}{$\mathcal{O}\left(T S^2 A\right)$} & \multirow{3}{*}{$\mathcal{O}\left(HSA+T\right)$} \\
\cline{1-2}
\texttt{RSVI2}   &  \multirow{4}{*}{$\tilde{\mathcal{O}}\prt{\frac{\exp(|\beta| H)-1}{|\beta|}\sqrt{HS^2AT}}$} &  &  \\
\cline{1-1}
  \cellcolor{lightgray} \texttt{RODI-Rep}  &  & & \\
  \cline{1-1} \cline{3-4}
  \cellcolor{lightgray} \texttt{RODI-MF}  &  & $\cO(KS^H)$ &$\cO(S^H)$ \\
  \cline{1-1} \cline{3-4}
   \cellcolor{lightgray} \texttt{RODI-MB}  &  & $\mathcal{O}\left(T S^2 A\right)$ &$\cO(HS^2A)$ \\
\hline  lower bound &  $\Omega\prt{\frac{\exp(\beta H/6)-1}{\beta }\sqrt{SAT}}$ & - & - \\
\hline
\end{tabular}
    \caption{Regret bounds and computational complexity comparisons.}
    \label{tab:comp}
\end{table}

\section{Preliminaries}
\label{sec:pre}
We provide the technical background in this section. 
\subsection{Notations}
We write $[M:N]\triangleq\{M,M+1,...,N\}$ and $[N]\triangleq[1:N]$ for any positive integers $M\leq N$. We adopt the convention that $\sum^m_{i=n}a_i\triangleq0$ if $n>m$ and $\prod^m_{i=n}a_i\triangleq1$ if $n>m$. We use $\mathbb{I}\{\cdot\}$  to denote the indicator function. For any $x\in\mathbb{R}$, we define $[x]^+\triangleq\max\{x,0\}$. We denote by $\delta_c$ the Dirac measure at $c$. We denote by $\mathscr{D}(a,b)$, $\mathscr{D}_M$ and $\mathscr{D}$  the set of distributions supported on $[a,b]$, $[0,M]$ and the set of all distributions respectively. We use $(x_1,x_2;p)$ to denote a binary r.v. taking values $x_1$ and $x_2$ with probability $1-p$ and $p$. For a discrete set $x=\cbrk{x_1,\cdots,x_n}$ and a probability vector $p=(p_1,\cdots,p_n)$, the notation $(x,p)$ represents the discrete distribution with $\bP(X=x_i)=p_i$. For a discrete distribution $\eta=(x,p)$, we use $|\eta|=|x|$ to denote the number of atoms of the distribution $\eta$. We use $\tilde{\mathcal{O}}(\cdot)$ to denote $\mathcal{O}(\cdot)$ omitting logarithmic factors. A table of notation is provided in Appendix \ref{app:not}.
\subsection{Episodic MDP}
An episodic MDP is identified by  $\mathcal{M}\triangleq(\mathcal{S},\mathcal{A},(P_h)_{h\in[H]},(r_h)_{h\in[H]},H)$, where $\mathcal{S}$ is the state space, $\mathcal{A}$ the action space, $P_h:\mathcal{S}\times\mathcal{A}\rightarrow\Delta(\mathcal{S})$ the probability transition kernel at step $h$, $r_h:\mathcal{S}\times\mathcal{A}\rightarrow[0,1]$  the collection of  reward functions at step $h$ and $H$  the length of one episode. The agent interacts with the environment for $K$ episodes. At the beginning of episode $k$, Nature selects an initial state $s^k_1$ arbitrarily. In step $h$, the agent takes action $a^k_h$ and observes random reward $r_h(s^k_h,a^k_h)$ and reaches the next state $s^k_{h+1}\sim P_h(\cdot|s^k_h,a^k_h)$. The episode terminates at $H+1$ with $r_{H+1}=0$, then the agent proceeds to next episode.

For each $(k,h)\in[K]\times[H]$, we denote by $\cH^k_h\triangleq\left(s_{1}^{1}, a_{1}^{1}, s_{2}^{1}, a_{2}^{1}, \ldots, s_{H}^{1},a_{H}^{1}, \ldots, s_{h}^{k}, a_{h}^{k}\right)$ the (random) history up to step $h$ of episode $k$. We define $\cF_k\triangleq\mathcal{H}^{k-1}_H$  as the history up to episode $k-1$. We describe the interaction between the algorithm and MDP in two levels. In the level of episode, we define an algorithm as a sequence of function $\sA\triangleq(\sA_k)_{k\in[K]}$, each mapping $\cF_k$ to a policy $\sA_k(\cF_k)\in\Pi$. We denote by  $\pi^k\triangleq\sA_k(\mathcal{F}_k)$ the policy at episode $k$. In the level of step, a (deterministic) policy $\pi$  is  a sequence of functions $\pi=(\pi_h)_{h\in[H]}$ with $\pi_h:\mathcal{S}\rightarrow \mathcal{A}$.  
\subsection{Risk Measure}
Consider two random variables $X\sim F$ and $Y\sim G$. We assert that $Y$ dominates $X$, and correspondingly, $G$ dominates $F$, denoted as $Y\succeq X$ and $G\succeq F$, if and only if for every real number $x$, the inequality $F(x)\ge G(x)$ holds true.  A risk measure, $\rho$, is a function mapping a set of random variables, denoted as $\mathscr{X}$, to the real numbers. This mapping adheres to several crucial properties:
\begin{itemize}
    \item \textbf{M}onotonicity (\textbf{M}):  $X \preceq Y \Rightarrow \rho(X) \leq \rho(Y),\ \forall X, Y\in\mathscr{X}$,
    \item \textbf{T}ranslation-\textbf{i}nvariance (\textbf{TI}): $\rho(X+c)=\rho(X)+c,\ \forall X\in\mathscr{X},\ \forall c\in\mathbb{R}$,
    \item \textbf{D}istribution-\textbf{i}nvariance (\textbf{DI}): $F_{X_1}=F_{X_2} \Rightarrow \rho(X_1)=\rho(X_2)$.
\end{itemize}
A mapping $\rho: \mathscr{X} \rightarrow \mathbb{R}$ qualifies as a risk measure if it satisfies both (\textbf{M}) and (\textbf{TI}). Additionally, a risk measure that also adheres to \textbf{(DI)} is termed a distribution-invariant risk measure. This paper focuses exclusively on distribution-invariant risk measures, allowing us to simplify notation by denoting $\rho(F_{X})$ as $\rho(X)$.

We direct our attention to EntRM, a prominent risk measure in domains requiring risk-sensitive decision-making, such as mathematical finance \citep{follmer2016stochastic}, Markovian decision processes \citep{bauerle2014more}. For a random variable $X \sim F$ and a non-zero coefficient $\beta$, the EntRM is defined as:
\[  U_{\beta}(X)\triangleq \frac{1}{\beta}\log\prt{\bE_{X\sim F}\brk{e^{\beta X}}}=\frac{1}{\beta}\log\prt{\int_{\bR}e^{\beta X}dF(x)}.\]
Given that EntRM satisfies \textbf{(DI)}, we can denote $U_{\beta}(F)$ as $U_{\beta}(X)$ for $X \sim F$. When $\beta$ possesses a small absolute value, employing Taylor's expansion yields
\begin{equation}
    \label{eqt:erm_exp}
    U_{\beta}(X)=\mathbb{E}[X]+\frac{\beta}{2}\mathbb{V}[X]+\mathcal{O}(\beta^2).
\end{equation}
Therefore, a decision-maker aiming to maximize the EntRM value demonstrates risk-seeking behavior (preferring higher uncertainty in $X$) when $\beta > 0$, and risk-averse behavior (preferring lower uncertainty in $X$) when $\beta < 0$. The absolute value of $\beta$ dictates the sensitivity to risk, with the measure converging to the mean functional as $\beta$ approaches zero.

\subsection{Risk-neutral Distributional Dynamic Programming Revisited}
\cite{bellemare2017distributional, rowland2018analysis} have discussed the \emph{infinite-horizon} distributional dynamic programming in the \emph{risk-neutral} setting, which will be referred to as the classical DDP. Now we adapt their results to the finite horizon setting. We define the return for a policy $\pi$ starting from state-action pair $(s,a)$ at step $h$ 
\begin{align*}
Z^{\pi}_h(s,a)\triangleq\sum^H_{h^{\prime}=h}r_{h^{\prime}}(s_{h^{\prime}},a_{h^{\prime}}),\ s_{h}=s, a_{h^{\prime}}=\pi_{h^{\prime}}(s_{h^{\prime}}),  s_{h^{\prime}+1} \sim P_{h^{\prime}}(\cdot|s_{h^{\prime}},a_{h^{\prime}}).
\end{align*}
Define $Y^{\pi}_h(s)\triangleq Z^{\pi}_h(s,\pi_h(s))$, then it is immediate that
\[ Z^{\pi}_h(s,a)=r_h(s,a)+ Y^{\pi}_{h+1}(S^{\prime}), S^{\prime}\sim P_h(\cdot|s,a).\]
There are two sources of randomness in $Z^{\pi}_h(s,a)$: the transition $P_h^{\pi}$ and the next-state return $Y^{\pi}_{h+1}$. Denote by $\nu^{\pi}_h(s)$ and $\eta^{\pi}_h(s,a)$  the cumulative distribution function (CDF) corresponding to $Y^{\pi}_h(s)$ and $Z^{\pi}_h(s,a)$ respectively. Rewriting the random variable in the form of CDF, we have the distributional Bellman equation
\begin{align*}
    \eta^{\pi}_h(s,a)=\sum_{s^{\prime}} P_h(s^{\prime}|s,a)\nu^{\pi}_{h+1}(s^{\prime})(\cdot-r_h(s,a)), \nu^{\pi}_h(s)=\eta^{\pi}_h(s,\pi_h(s)).
\end{align*}
The distributional Bellman equation outlines the backward recursion of the return distribution under a fixed policy. Our focus is primarily on risk-neutral control, aiming to maximize the mean value of the return, as represented by:
\[  \pi^{*}(s)\triangleq\arg\max_{(\pi_1,...,\pi_H)\in\Pi} \bE[Z^{\pi}_1(s)].   \]
Here, $\pi=(\pi_1,...,\pi_H)$ signifies that this is a multi-stage maximization problem. An exhaustive search approach is impractical due to its exponential computational complexity. However, the principle of optimality applies, suggesting that the optimal policy for any tail sub-problem coincides with the tail of the optimal policy, as discussed in \citep{bertsekas2000dynamic}. This principle enables the reduction of the multi-stage maximization problem into several single-stage maximization problems. Let $Z^{*}=Z^{\pi^*}$ and $Y^{*}=Y^{\pi^*}$ denote the optimal return. The risk-neutral Bellman optimality equation can be expressed as follows:
\begin{align*}
    Z^{*}_h(s,a)&=r_h(s,a)+ Y^{*}_{h+1}(S^{\prime}), S^{\prime}\sim P_h(\cdot|s,a) \\
    \pi^*_h(s)&=\arg\max_a\bE[Z^{*}_h(s,a)], Y^*_h(s)=Z^{*}_h(s,\pi^*_h(s)).
\end{align*}
Rewriting this in the form of distributions, we get:
\begin{align*}
    \eta^{*}_h(s,a)&=[P_h \nu^{*}_{h+1}](s,a)(\cdot-r_h(s,a)) \\
    \pi^*_h(s)&=\arg\max_a\bE[\eta^{*}_h(s,a)], \nu^*_h(s)=\eta^{*}_h(s,\pi^*_h(s)).
\end{align*}

\section{Risk-sensitive Distributional Dynamic Programming} 
\label{sec:ddp}
In this section, we establish a novel DDP framework for risk-sensitive control. For the risk-sensitive purpose, we define the action-value function of a policy $\pi$  at step $h$  as $Q^{\pi}_h(s,a)\triangleq U_{\beta}(Z^{\pi}_h(s,a))$, which is the EntRM value of the return distribution, for each $(s,a,h)\in\mathcal{S}\times\mathcal{A}\times[H]$. 
The value function is  defined as $V^{\pi}_h(s)\triangleq Q^{\pi}_h(s,\pi_h(s))=U_{\beta}(Y^{\pi}_h(s))$.
We focus on the control setting, in which the goal is to find an optimal policy to maximize the value function, that is,  
\[  \pi^{*}(s)\triangleq\arg\max_{(\pi_1,...,\pi_H)\in\Pi}V^{\pi_1...\pi_H}_1(s).   \]
In the risk-sensitive setting, however, the principle of optimality does not always hold for general risk measures. For example, the optimal policy for CVaR may be non-Markovian or history-dependent \citep{shapiro2021lectures}.

EntRM satisfies \textit{independence property} (also known as the independence axiom) in economics and decision theory \citep{von1947theory,dentcheva2013common}. For better illustration, we introduce some additional notations. We write $X\ge Y$ or $F\ge G$ if $\erm{X}\ge \erm{Y}$ or $\erm{F}\ge \erm{G}$. This is different from the notion of  stochastic dominance $X\succeq Y$. In fact,  \textbf{(M)} of EntRM implies that
\[ X \succeq Y \Longrightarrow \erm{X}\ge \erm{Y} \Longleftrightarrow X\ge Y.\]
\vspace{-4ex}
\begin{fact}[Independence property]
    $\forall F, G, H \in \mathscr{D}, \theta\in[0,1],$ the following holds
		\begin{align*}
		F\leq G\Longrightarrow \theta F+(1-\theta)H\leq \theta G+(1-\theta)H.
		\end{align*}
\end{fact}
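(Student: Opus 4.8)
The plan is to reduce the statement to a one-dimensional monotonicity check by exploiting the fact that the EntRM is a strictly monotone transformation of the exponential-moment functional, which in turn is \emph{affine} under mixtures. Write $M_F \triangleq \bE_{X\sim F}\brk{e^{\beta X}}$, so that $\erm{F}=\frac{1}{\beta}\log M_F$ (well defined whenever this exponential moment is finite, e.g.\ for the bounded-support distributions arising in our MDP). The crucial observation is that for a mixture one has $d(\theta F+(1-\theta)H)=\theta\,dF+(1-\theta)\,dH$, and hence by linearity of the integral $M_{\theta F+(1-\theta)H}=\theta M_F+(1-\theta)M_H$. Thus, although $\erm{\cdot}$ is itself highly nonlinear, the inner functional from which it is built behaves linearly under mixing.

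Next I would re-express everything in terms of EntRM values. Inverting the definition gives $M_F=e^{\beta\erm{F}}$, so that
\begin{align*}
\erm{\theta F+(1-\theta)H}=\frac{1}{\beta}\log\prt{\theta e^{\beta\erm{F}}+(1-\theta)e^{\beta\erm{H}}}.
\end{align*}
Holding $\theta$ and $H$ fixed, define $g(u)\triangleq\frac{1}{\beta}\log\prt{\theta e^{\beta u}+(1-\theta)e^{\beta\erm{H}}}$, so the left-hand side above is $g(\erm{F})$ and the target right-hand side is $g(\erm{G})$. The claim then reduces to showing that $g$ is nondecreasing, for then $\erm{F}\le\erm{G}$ immediately yields $g(\erm{F})\le g(\erm{G})$, which is exactly the conclusion. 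Differentiating,
\begin{align*}
g'(u)=\frac{1}{\beta}\cdot\frac{\theta\beta e^{\beta u}}{\theta e^{\beta u}+(1-\theta)e^{\beta\erm{H}}}=\frac{\theta e^{\beta u}}{\theta e^{\beta u}+(1-\theta)e^{\beta\erm{H}}}\ge 0,
\end{align*}
which establishes monotonicity.

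The step I would flag as the real obstacle is the sign of $\beta$: the equivalence between orderings of $\erm{\cdot}$ and orderings of the exponential moments $M_{F}$ reverses between the risk-seeking ($\beta>0$) and risk-averse ($\beta<0$) regimes, so a naive argument would need to split into cases. The derivative computation above sidesteps this cleanly, since the factor $\frac{1}{\beta}$ from the logarithm cancels the $\beta$ produced by the chain rule, leaving a manifestly nonnegative quantity irrespective of the sign of $\beta$. (Equivalently, one could run the two-case analysis directly: for $\beta>0$, $\erm{F}\le\erm{G}\iff M_F\le M_G$ and mixing preserves this ordering; for $\beta<0$ both equivalences flip and the conclusion still follows.) It only remains to note that the endpoint cases $\theta\in\{0,1\}$ are trivial and that one should assume $M_H<\infty$ so that $g$ is finite and differentiable everywhere.
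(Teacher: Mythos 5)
Your proof is correct, and it rests on the same key ingredient as the paper's—the linearity of the exponential moment $M_F=\bE_{X\sim F}\brk{e^{\beta X}}$ under mixtures—but it finishes the argument by a genuinely different route. The paper transfers the hypothesis $\erm{F}\le\erm{G}$ into an ordering of the moments $M_F\le M_G$, pushes that ordering through the mixture by linearity, and maps back via $\frac{1}{\beta}\log(\cdot)$; both transfer steps depend on the sign of $\beta$ (they use that $u\mapsto\frac{1}{\beta}\log u$ is increasing, true only for $\beta>0$), so the paper proves only the $\beta>0$ case, writes the chain with strict inequalities, and declares $\beta<0$ analogous. You instead collapse everything into the monotonicity of the single scalar map $g(u)=\frac{1}{\beta}\log\prt{\theta e^{\beta u}+(1-\theta)e^{\beta\erm{H}}}$, and the computation $g'(u)=\frac{\theta e^{\beta u}}{\theta e^{\beta u}+(1-\theta)e^{\beta\erm{H}}}\ge0$ is sign-uniform because the $1/\beta$ from the logarithm cancels the $\beta$ from the chain rule. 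What this buys is a single-case proof covering both the risk-seeking and risk-averse regimes, with the non-strict inequality obtained directly as stated (the paper's strict-inequality chain needs the trivial equality case $\erm{F}=\erm{G}\Rightarrow M_F=M_G$ appended to yield the fact verbatim); the only cost is the mild regularity caveat you already flag, namely finiteness of the exponential moments, which is automatic for the bounded-support distributions the paper works with.
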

\begin{proof}
	We only prove the case that $\beta>0$. The case that $\beta<0$ follows analogously. For any two distributions $F$ and $G$ such that $U_{\beta}(F)>U_{\beta}(G)$, we have
	\[ U_{\beta}(F)=\frac{1}{\beta}\log\int_{\mathbb{R}} \exp(\beta x)dF(x)>\frac{1}{\beta}\log\int_{\mathbb{R}} \exp(\beta x)dG(x)= U_{\beta}(G),\]
	which implies $\int_{\mathbb{R}}\exp(\beta x)d F(x)> \int_{\mathbb{R}}\exp(\beta x)d G(x)$.
	Thus for any distribution $H$, 
	\begin{align*}
	U_{\beta}(\theta F+(1-\theta)H)&= \frac{1}{\beta}\log\int_{\mathbb{R}} \exp(\beta x)d(\theta F(x)+(1-\theta)H(x))\\
	&=\frac{1}{\beta}\log\prt{\theta\int_{\mathbb{R}}\exp(\beta x)d F(x) + (1-\theta)\int_{\mathbb{R}}\exp(\beta x)d H(x) } \\
	&> \frac{1}{\beta}\log\prt{\theta\int_{\mathbb{R}}\exp(\beta x)d G(x) + (1-\theta)\int_{\mathbb{R}}\exp(\beta x)d H(x) }\\
	&=U_{\beta}(\theta G+(1-\theta)H).
	\end{align*}
This finishes the proof.
\end{proof}
Moreover, the property (\textbf{TI}) entails that the EntRM value of the current return $Z^{\pi}_h(s,a)$ equals the sum of the immediate reward $r_H(s,a)$ and the value of the future return $Y^{\pi}_h(s^{\prime})$
\begin{align*}
    U_{\beta}(Z^{\pi}_h(s,a))= U_{\beta}(r_h(s,a)+ Y^{\pi}_h(s^{\prime})) &= r_h(s,a)+U_{\beta}(Y^{\pi}_h(s^{\prime}))=r_h(s,a) + U_{\beta}( [P_h\nu^{\pi}_{h+1}](s,a)),
\end{align*}
where we write $[P_h\nu^{\pi}_{h+1}](s,a)=\sum_{s^{\prime}} P_h(s^{\prime}|s,a)\nu^{\pi}_{h+1}(s^{\prime})$.
We will show that \textbf{(I)} and \textbf{(T)} suggest that the optimal future return $Y^*_h(s^{\prime})$ consists in the optimal current return $Z^*_h(s,a)$
\[	Z^*_h(s,a)=r_h(s,a)+Y^*_h(s^{\prime}).	\] 
These observations implies the  principle of optimality. For notational simplicity, we write $\pi_{h_1:h_2}=\{\pi_{h_1},\pi_{h_1+1},\cdots,\pi_{h_2}\}$ for two positive integers $h_1<h_2\leq H$.
\begin{proposition}[Principle of optimality]
\label{prop:princ}
Let $\pi^{*}=\pi^*_{1:H}$ be an optimal policy. Fixing $h\in[H]$, then the truncated optimal policy $\pi^*_{h:H}$ is optimal for the sub-problem $\max_{\pi_{h:H}\in\Pi_{h:H}}V^{\pi}_h$.
\end{proposition}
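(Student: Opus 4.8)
The plan is to establish the principle of optimality through a backward interchange argument, whose engine is a distributional monotonicity lemma that propagates an ordering of return distributions from step $h$ back to step $1$. Concretely, I would first isolate the following claim: if two policies $\pi$ and $\pi'$ agree on steps $1,\dots,h-1$ and their step-$h$ return distributions satisfy $\nu^{\pi'}_h(s)\ge\nu^{\pi}_h(s)$ in the EntRM order $\erm{\cdot}$ for every $s\in\cS$, then $\nu^{\pi'}_{h'}(s)\ge\nu^{\pi}_{h'}(s)$ for all $h'\le h$ and all $s$; in particular $V^{\pi'}_1(s_1)\ge V^{\pi}_1(s_1)$. This lemma is exactly the distributional reason why a local improvement cannot be cancelled as it travels backward through the recursion, and it is where \textbf{(TI)} and Fact 1 (independence) do the work.

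I would prove the lemma by backward induction on $h'$ from $h-1$ down to $1$. At step $h'<h$ both policies select the same action $a=\pi_{h'}(s)$, so the distributional Bellman equation gives $\nu^{\pi'}_{h'}(s)=\sum_{s'}P_{h'}(s'\mid s,a)\,\nu^{\pi'}_{h'+1}(s')(\cdot-r_{h'}(s,a))$, and similarly for $\pi$. The inductive hypothesis supplies $\nu^{\pi'}_{h'+1}(s')\ge\nu^{\pi}_{h'+1}(s')$ for every $s'$. Applying \textbf{(TI)} shifts each ordered pair by the constant $r_{h'}(s,a)$ while preserving the order, and then the mixture over $s'$ with weights $P_{h'}(\cdot\mid s,a)$ preserves the order by a telescoping application of the independence property: writing the two mixtures as a chain of hybrids that swap one component $F_{s'}\le G_{s'}$ at a time and invoking Fact 1 at each swap, with $\theta$ the current weight and the normalized remainder playing the role of $H$, yields $\nu^{\pi'}_{h'}(s)\ge\nu^{\pi}_{h'}(s)$. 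This closes the induction and gives the $h'=1$ conclusion.

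With the lemma in hand, I would argue the proposition by contradiction. Suppose $\pi^*_{h:H}$ is not optimal for $\max_{\pi_{h:H}}V^{\pi}_h$; then some tail policy is strictly better at some state. To convert this into a genuine contradiction with the global optimality of $\pi^*$, I would first construct a single Markovian tail policy $\tilde\pi_{h:H}$ that weakly dominates $\pi^*_{h:H}$ at every state and strictly at some state reachable under $\pi^*$ from $s_1$; such a policy is obtained by the same backward recursion, choosing at each $(h',s)$ an action whose return distribution is $\erm{\cdot}$-maximal, so domination holds state-wise and is inherited upward by the lemma. Splicing $\tilde\pi_{h:H}$ onto $\pi^*_{1:h-1}$ and applying the lemma then yields $V^{(\pi^*_{1:h-1},\tilde\pi_{h:H})}_1(s_1)>V^{\pi^*}_1(s_1)$, contradicting optimality; hence $\pi^*_{h:H}$ is optimal.

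The main obstacle is twofold, and both parts are distributional rather than scalar. First, Fact 1 is stated only for binary mixtures, so lifting it to the genuine transition mixture over all next states requires the telescoping hybrid argument above, and one must check that each normalized remainder is a legitimate element of $\sD$. Second, optimality of a tail policy concerns all reachable starting states at step $h$, whereas the hypothesis provides only global optimality from $s_1$; the delicate point is ensuring that a state-wise improvement is not washed out by the transition weights when pulled back to $s_1$, which is guaranteed by the strict-inequality version of Fact 1 (established in its proof) along a reachable trajectory of positive probability. A general monotone risk measure would fail here, so the argument genuinely relies on the EntRM-specific independence property.
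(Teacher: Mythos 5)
Your proposal is correct and takes essentially the same route as the paper's proof: a contradiction argument that splices an improving tail policy onto $\pi^*_{1:h-1}$ and propagates the strict improvement backward to step $1$ using translation invariance together with the independence property lifted from binary to general transition mixtures (the telescoping-hybrid lifting you describe is exactly the paper's appendix Lemma \ref{lem:ind_mul}), with strictness preserved along a positive-probability trajectory. The differences are purely organizational---you isolate the backward propagation as an explicit monotonicity lemma and construct the state-wise dominating tail policy by greedy backward recursion, whereas the paper takes an optimal tail policy as given and performs the one-step pullback plus induction inline---so your version is somewhat more explicit (notably about the existence of a simultaneously optimal tail policy) but mathematically the same argument.
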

\begin{proof}
Suppose that the truncated policy $\pi^*_{h:H}$ is not optimal for this subproblem, then there exists an optimal policy $\tilde{\pi}_{h:H}$  such that 
	$$
	\exists \tilde{s}_h \quad \text{occurring with positive probability}, \quad V^{\tilde{\pi}_{h:H}}_h(\tilde{s}_h ) > V^{\pi^*_{h:H}}_h(\tilde{s}_h ).
	$$ 
There exists a state $\tilde{s}_{h-1} $ with $P_{h-1}(\tilde{s}_h|\tilde{s}_{h-1},\pi^{*}_{h-1}(\tilde{s}_{h-1}))>0$ such that
	\begin{align*}
	\erm{\nu^{\pi^{*}_{h-1}, \tilde{\pi}_{h:H}}_{h-1}(\tilde{s}_{h-1})}&=r_{h-1}(\tilde{s}_{h-1},\pi^{*}_{h-1}(\tilde{s}_{h-1}) ) + U_{\beta}\prt{  \brk{P_{h-1}\nu^{\tilde{\pi}_{h:H}}_{h}}(\tilde{s}_{h-1},\pi^{*}_{h-1}(\tilde{s}_{h-1}) } \\
	&> r_{h-1}(\tilde{s}_{h-1},\pi^{*}_{h-1}(\tilde{s}_{h-1}) ) + U_{\beta}\prt{  \brk{P_{h-1}\nu^{\pi^*_{h:H}}_{h}}(\tilde{s}_{h-1},\pi^{*}_{h-1}(\tilde{s}_{h-1}) } \\
	&=U_{\beta}\prt{\nu^{\pi^{*}_{h-1:H}}_{h-1}(\tilde{s}_{h-1})},
	\end{align*}
where the inequality is due to \textbf{(I)} of $U_{\beta}$. It follows that  $(\pi^{*}_{h-1}, \tilde{\pi}_{h:H})$ is a strictly better policy than $\pi^{*}_{h-1:H}$  for the subproblem from $h-1$ to $H$.  Using induction, we deduce that $(\pi^{*}_{1:h-1}, \tilde{\pi}_{h:H})$ is a strictly better policy than $\pi^{*}=\pi^{*}_{1:H}$. This is contradicted to the assumption that $\pi^{*}$ is an optimal policy. 
\end{proof}
Furthermore, the principle of optimality induces the \textit{distributional Bellman optimality equation} in the risk-sensitive setting.
\begin{proposition}[Distributional Bellman optimality equation]
\label{prop:DP_alg}
The optimal policy $\pi^*$ is given by the following backward recursions:
\begin{equation}
\label{eqt:dbu}
\begin{aligned}
    &\nu^*_{H+1}(s)=\psi_0,\  \eta^*_h(s,a)=[P_h\nu^*_{h+1}](s,a) (\cdot- r_h(s,a)),\\
    &\pi^*_h(s)=\arg\max_{a\in\mathcal{A}}Q^*_h(s,a)=U_{\beta}(\eta^*_h(s,a)),\nu^*_h(s)=\eta^*_h(s,\pi^*_h(s)),
\end{aligned}
\end{equation}
where $F(\cdot-c)$ denotes the CDF obtained by shifting $F$ to the right by $c$. Furthermore, the sequence $(\eta^*_h)_{h\in[H]}$ and $(\nu^*_h)_{h\in[H]}$ represent the sequence of distributions corresponding to the optimal returns at each step.
\end{proposition}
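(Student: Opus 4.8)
The plan is to prove the claim by backward induction on $h$, running from $h=H+1$ down to $h=1$, and establishing simultaneously that (i) $V^*_h(s):=U_{\beta}(\nu^*_h(s))$ equals the optimal value $\max_{\pi_{h:H}\in\Pi_{h:H}}V^{\pi_{h:H}}_h(s)$ of the tail subproblem, and (ii) the CDFs $\nu^*_h$ and $\eta^*_h$ produced by the recursion are the genuine return distributions of the induced policy $\pi^*_{h:H}$. The base case $h=H+1$ is immediate: the return beyond the horizon is the constant $0$, so $\nu^*_{H+1}=\psi_0$ is the point mass at $0$ and both claims hold trivially.

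For the inductive step I would fix $h$ and an arbitrary tail policy $\pi_{h:H}=(\pi_h,\pi_{h+1:H})$. Writing $a=\pi_h(s)$ and applying translation-invariance \textbf{(TI)} to $Z^{\pi}_h(s,a)=r_h(s,a)+Y^{\pi}_{h+1}(S')$ gives $V^{\pi_{h:H}}_h(s)=r_h(s,a)+U_{\beta}\prt{[P_h\nu^{\pi_{h+1:H}}_{h+1}](s,a)}$, which cleanly separates the immediate reward from the continuation distribution. By the induction hypothesis $\nu^{\pi_{h+1:H}}_{h+1}(s')\le\nu^*_{h+1}(s')$ in the EntRM order for every $s'$, since $\nu^*_{h+1}$ is optimal. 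The crux is then to lift this state-wise comparison through the transition mixture, i.e. to show $U_{\beta}\prt{[P_h\nu^{\pi_{h+1:H}}_{h+1}](s,a)}\le U_{\beta}\prt{[P_h\nu^*_{h+1}](s,a)}$. Granting this, the right-hand side equals $U_{\beta}(\eta^*_h(s,a))=Q^*_h(s,a)\le\max_{a'}Q^*_h(s,a')$, so $V^{\pi_{h:H}}_h(s)\le V^*_h(s)$; and choosing the tail $\pi^*_{h+1:H}$ together with $\pi^*_h(s)=\arg\max_a Q^*_h(s,a)$ turns both inequalities into equalities. This proves (i), and then (ii) follows by applying the fixed-policy distributional Bellman equation to $\pi^*_{h:H}$.

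I expect the mixture-monotonicity step to be the main obstacle, and it is exactly where the independence property (Fact 1) enters. My plan is to enumerate the successor states $s'_1,\dots,s'_n$ with weights $p_i=P_h(s'_i|s,a)$ and interpolate between the two mixtures by swapping one component at a time: replacing $\nu^{\pi}_{h+1}(s'_j)$ by $\nu^*_{h+1}(s'_j)$ while holding the remaining mass fixed writes each mixture as $p_j(\cdot)+(1-p_j)R_j$ with a common remainder $R_j$, so Fact 1 yields a single-step inequality, and transitivity of the EntRM order (it is induced by the real-valued functional $U_{\beta}$) chains the $n$ swaps into the desired conclusion. Alternatively one can bypass the interpolation and argue directly from the definition, since $\nu^{\pi}_{h+1}(s')\le\nu^*_{h+1}(s')$ means $\int e^{\beta x}d\nu^{\pi}_{h+1}(s')\le\int e^{\beta x}d\nu^*_{h+1}(s')$ componentwise (for $\beta>0$, with the sign bookkeeping handled by the $1/\beta$ factor), whence the $P_h$-weighted averages of these moment generating functions are ordered and $U_{\beta}$ preserves that order. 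I would present the Fact 1 version to keep the argument aligned with the independence-property theme of the paper. The remaining bookkeeping, namely that the recursion's policy is Markov and that $\eta^*_h,\nu^*_h$ coincide with the return CDFs, is routine given the fixed-policy equation.
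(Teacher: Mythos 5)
Your proposal is correct and follows essentially the same route as the paper's proof: backward induction in which the state-wise EntRM comparison supplied by the induction hypothesis is lifted through the transition mixture via the independence property, then combined with translation-invariance to compare action-values and conclude optimality of the greedy policy. The only difference is explicitness — the paper asserts the mixture-lifting inequality directly inside its chain of inequalities (the multi-component justification is deferred to Lemma \ref{lem:ind_mul} in the appendix), whereas you spell it out with the one-swap-at-a-time interpolation, which is in fact exactly how the paper proves that appendix lemma.
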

\begin{proof}
	Throughout the proof we omit $*$ for the ease of notation. The proof follows from induction. Notice that $\eta_h(s_h)$ and $V_h(s_h)$ are the return distribution and value function for state $s_h$ at step $h$ following policy $\pi_{h:H}$ respectively. At step $H$, it is obvious that $\pi_H$ is the optimal policy that maximizes the EntRM value at the final step. Fixing $h \in [H-1]$, we assume that $\pi_{h+1:H}$ is the optimal policy for the  subproblem
	$$ 
	V^{\pi_{h+1:H}}_{h+1}(s_{h+1}) =\max_{\pi^{\prime}_{h+1:H}} V^{\pi^{\prime}_{h+1:H}}_{h+1}(s_{h+1}), \forall s_{h+1}.
	$$
	In other words, $\forall \pi^{\prime}_{h+1:H}, \forall s_{h+1}$:
	\begin{align*}
	U_{\beta}(\nu_{h+1}(s_{h+1}))&=U_{\beta}(\nu^{\pi_{h+1:H}}_{h+1}(s_{h+1}))  \ge U_{\beta}(\nu^{\pi^{\prime}_{h+1:H}}_{h+1}(s_{h+1})).
	\end{align*}
	It follows that $\forall s_{h}$,
	\begin{align*}
	V_h(s_h)&=Q_h(s_h,\pi_h(s_h)) = U_{\beta}(\nu^{\pi_{h:H}}_{h}(s_{h})) = \max_{a_h}U_{\beta}(\eta_h(s_h,a_h)) \\
	&= \max_{a_h} \{ r_h(s_h,a_h) + U_{\beta}\prt{\brk{P_h\nu_{h+1}}(s_h,a_h)} \} \\
	&\ge \max_{a_h} \cbrk{ r_h(s_h,a_h) + \max_{\pi^{\prime}_{h+1:H}} U_{\beta}\prt{\brk{P_h\nu^{\pi^{\prime}_{h+1:H}}_{h+1}}(s_h,a_h)} } \\
	&=  \max_{\pi^{\prime}_h} \cbrk{ r_h(s_h,a_h) + \max_{\pi^{\prime}_{h+1:H}} U_{\beta}\prt{\brk{P_h\nu^{\pi^{\prime}_{h+1:H}}_{h+1}}(s_h,a_h)} } \\
	&= \max_{\pi^{\prime}_{h:H}} \cbrk{ r_h(s_h,a_h) +  U_{\beta}\prt{\brk{P_h\nu^{\pi^{\prime}_{h+1:H}}_{h+1}}(s_h,\pi^{\prime}_h(s_h))} } \\
	&=\max_{\pi^{\prime}_{h:H}} U_{\beta}\prt{\nu^{\pi^{\prime}_{h+1:H}}_{h}(s_{h})}.
	\end{align*}
	Hence  $V_h$ is the optimal value function at step $h$ and $\pi_{h:H}$ is the optimal policy for the  sub-problem from $h$ to $H$. The induction is completed.
\end{proof}
For simplicity, we define the \emph{distributional Bellman operator} $\mathcal{T}(P,r):\mathscr{D}^{\mathcal{S}}\rightarrow\mathscr{D}^{\mathcal{S}\times\mathcal{A}}$ with associated model $(P,r)=(P(s,a), r(s,a))_{(s,a)\in\mathcal{S}\times\mathcal{A}}$ as 
\begin{equation*}
\label{eqt:bell}
        [\mathcal{T}(P,r)\nu](s,a)\triangleq[P\nu](s,a)(\cdot-r(s,a)),\ \forall (s,a)\in\mathcal{S}\times\mathcal{A}.
\end{equation*}
Denote by $\cT_h\triangleq\cT(P_h,r_h)$, then we can rewrite  Equation \ref{eqt:dbu} in a compact form:
\begin{equation}
\begin{aligned}
&\nu^{*}_{H+1}(s)=\psi_0,\ \eta^{*}_h(s,a)=[\cT_h\nu^{*}_{h+1}](s,a), \\
&\pi^*_h(s)=\arg\max_{a\in\mathcal{A}}U_{\beta}(\eta^*_h(s,a)),\ \nu^{*}_h(s)=\eta^{*}_h(s,\pi^*_h(s)).
\end{aligned}
\end{equation}
\vspace{-2ex}
\paragraph{Discussion about the independence property} Another property closely related to \textbf{(I)} is the tower (\textbf{T}) property \citep{kupper2009representation}. 
\vspace{-0ex}
\begin{definition}[Tower property]
        A risk measure $\rho$ satisfies the tower property if for two r.v.s $X$ and $Y$, we have
        \[ \rho(X) = \rho(\rho(X|Y)),\]
        where $\rho(\cdot|Y)$ is taken w.r.t. the conditional distribution.
\end{definition}
We can show that the following implications hold
\[ \textbf{(T)}\Longrightarrow\textbf{(I)}\Longrightarrow \text{DP}. \]
$\textbf{(T)}\Longrightarrow\textbf{(I)}$: Suppose \textbf{(T)} holds. Let $X_1\sim F, X_2\sim H, Y_1\sim G, Y_2\sim H$. Let $I \sim (1,2;1-\theta)$ be a binary r.v. independent of $X$ and $Y$. Given $F\leq G$, we have
    \begin{align*}
    \textbf{(DI)} &\Longrightarrow U_{\beta}(X_1)=U_{\beta}(F)\leq U_{\beta}(G)=U_{\beta}(Y_1) \\
    &\Longrightarrow U_{\beta}(X_I|I)\sim (U_{\beta}(X_1),U_{\beta}(X_2);1-\theta) \preceq (U_{\beta}(Y_1),U_{\beta}(Y_2);1-\theta)\sim U_{\beta}(Y_I|I).
    \end{align*}
    Next, Fact \ref{fct:mix} implies
    \[ \text{Fact \ref{fct:mix}} \Longrightarrow X_I\sim \theta F +(1-\theta)H, Y_I\sim \theta G +(1-\theta)H. \]
    \vspace{-5ex}
    \begin{fact}[Mixture distribution]
    \label{fct:mix}
        Let $X_i\sim F_i$ for $i\in[n]$. Let $I$ be a discrete r.v. \emph{independent of $(X_i)_i$} with $\bP(I=i)=\theta_i$. Then $X_I\sim \sum_{i\in[n]}\theta_i F_i$.
    \end{fact}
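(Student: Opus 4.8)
The plan is to verify the claimed identity at the level of cumulative distribution functions, since two random variables share a law precisely when their CDFs agree everywhere, and the assertion $X_I\sim\sum_{i\in[n]}\theta_i F_i$ is an assertion about distributions. To make the meaning of the selected variable unambiguous I would first write $X_I=\sum_{i\in[n]}X_i\,\mathbb{I}\{I=i\}$, so that on the event $\{I=i\}$ one has $X_I=X_i$ by construction.

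First I would fix an arbitrary $x\in\bR$ and decompose the event $\{X_I\le x\}$ according to the value of the selector $I$. By the law of total probability,
\[ \bP(X_I\le x)=\sum_{i\in[n]}\bP(X_I\le x,\ I=i)=\sum_{i\in[n]}\bP(X_i\le x,\ I=i), \]
where the second equality is exactly the selection identity $X_I=X_i$ on $\{I=i\}$. The key step is then to invoke the independence hypothesis: since $I$ is independent of the family $(X_j)_{j}$, the events $\{X_i\le x\}$ and $\{I=i\}$ are independent, so each joint probability factors as $\bP(X_i\le x)\,\bP(I=i)=F_i(x)\,\theta_i$. Substituting yields $\bP(X_I\le x)=\sum_{i\in[n]}\theta_i F_i(x)$, which is precisely the mixture CDF $\sum_{i\in[n]}\theta_i F_i$ evaluated at $x$. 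As $x$ was arbitrary, the two distributions coincide, proving the claim.

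I do not expect a genuine obstacle here, as the argument is entirely elementary; the one point demanding care is the explicit use of the independence of $I$ from $(X_i)_i$, which is exactly the hypothesis emphasized in the statement. Without it the factorization of the joint probability fails and the conclusion need not hold, so the proof should flag precisely where independence enters. I would also remark that the identical computation extends verbatim to a countable index set via monotone convergence, although the stated finite version is all that the subsequent application of Fact~\ref{fct:mix} requires.
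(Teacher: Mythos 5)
Your proof is correct. The paper states Fact~\ref{fct:mix} without any proof, treating it as elementary background, so there is no paper argument to compare against; your derivation --- decomposing $\bP(X_I\le x)$ by the law of total probability over $\{I=i\}$, using the selection identity $X_I=X_i$ on that event, and invoking independence to factor the joint probability into $\theta_i F_i(x)$ --- is precisely the standard argument the paper implicitly relies on, and your emphasis on where independence enters is exactly the right point to flag.
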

    It follows that 
    \begin{align*}
        U_{\beta}(X_I|I) \preceq U_{\beta}(Y_I|I) &\Longrightarrow U_{\beta}(U_{\beta}(X_I|I))\leq  U_{\beta}(U_{\beta}(Y_I|I)) \\
        &\Longrightarrow U_{\beta}(X_I) \leq U_{\beta}(Y_I) \\
        &\Longrightarrow U_{\beta}(\theta F+(1-\theta)H))\leq U_{\beta}(\theta G+(1-\theta)H),
    \end{align*}
    where the first and second implication is due to \textbf{(M)} and \textbf{(T)} of $U_{\beta}$. \\
$\textbf{(T)} \Longrightarrow \text{DP}$: Fix  $h\in[H-1]$ and $(s,a)\in\cS\times\cA$. Using \textbf{(T)} leads to a decomposition of the \emph{risk-sensitive value function} as follows:
\begin{align*}
    Q_h(s,a)&=\erm{Z_h(s,a)}=\erm{r_h(s,a)+Y_{h+1}(S)}=r_h(s,a)+\erm{Y_{h+1}(S)}\\
    &=r_h(s,a)+\erm{\erm{Y_{h+1}(S)|S}}\\
    &=r_h(s,a)+\erm{V_{h+1}(S)}.
\end{align*}
We call it the risk-sensitive \emph{Value} Bellman equation, which relates the action-value function $Q_h$ at step $h$ to the next-step value functions $V_{h+1}$. We can further derive the \textit{Value Bellman optimality equation} with \textbf{(M)} and \textbf{(T)}. Observe that
\begin{align*}
    V_{h+1}(s)\ge V'_{h+1}(s),\forall s \Longrightarrow V_{h+1}(S)|S\succeq V'_{h+1}(S)|S \\
    \Longrightarrow  \erm{V_{h+1}(S)}\ge \erm{V'_{h+1}(S)} \Longrightarrow Q_{h}(s,a)\ge Q'_{h}(s,a),
\end{align*}
which implies the Value Bellman optimality equation
\begin{align*}
     Q^*_h(s,a)&=r_h(s,a)+\erm{V^*_{h+1}(S)}\\
     \pi^*_h(s)&=\arg\max Q^*_h(s,a), V^*_h(s)= Q^*_h(s,\pi^*_h(s)).
\end{align*}
We make the following summary. \\
(i) Both \textbf{(T)} and \textbf{(I)} imply the principle of dynamic programming, but \textbf{(I)} is considered a weaker assumption than \textbf{(T)}. This indicates that while both properties support the formulation of DP, \textbf{(I)} does so under less stringent conditions. \\
(ii) \textbf{(I)} inspires a distributional perspective in RSRL, leading to the concept of DDP. This perspective involves running DP in the language of random variables or distributions, as opposed to traditional scalar values. In contrast,  \textbf{(T)} primarily supports the classical Value Bellman equation. It's important to note that both distributional and classical DP contribute to the derivation of optimal policies. However, in our work, \textbf{(I)} plays a crucial and irreplaceable role. DDP, enabled by \textbf{(I)}, facilitates the algorithm design and regret analysis in distributional RL, which is not achievable solely with \textbf{(T)}.

Finally, the regret of an algorithm $\mathscr{A}$ interacting with an MDP $\mathcal{M}$ for $K$ episodes is defined as
\begin{align*}
    \text{Regret}(\mathscr{A},\mathcal{M},K)&\triangleq \sum_{k=1}^K V^*_1(s^k_1)-V^{\pi^k}_h(s^k_1).
\end{align*}
Note that the regret is a random variable since $\pi^k$ is a random quantity. We denote by $\mathbb{E}[\text{Regret}(\mathscr{A},\mathcal{M},K)]$ the expected regret.  
We will omit $\mathscr{A}$ and $\mathcal{M}$ for simplicity.

\section{RODI-MF}
\label{sec:rodi_mf}
In this section, we introduce\tb{M}odel-\hspace{-1ex}\tb{F}ree \textbf{R}isk-sensitive  \textbf{O}ptimistic \textbf{D}istribution \textbf{I}teration (\texttt{RODI-MF}), as detailed in Algorithm \ref{alg:RODI-MF}.
\begin{algorithm}[H]
	\caption{\texttt{RODI-MF}}
	\label{alg:RODI-MF}
	\begin{algorithmic}[1]
		\State{Input: $T$ and $\delta$}
		\State{Initialize $N_h(\cdot,\cdot)\leftarrow0$; $\eta_h(\cdot,\cdot), \nu_h(\cdot)\leftarrow \delta_{H+1-h}$  $\forall h\in[H]$}
		\For{$k = 1:K$}
		\For{$h = H:1$}
		\If{$N_h(\cdot,\cdot)>0$}
		\State{$\eta_h(\cdot,\cdot)\leftarrow
			\frac{1}{N_h(\cdot,\cdot)}\sum_{\tau\in[k-1]}\mathbb{I}^{\tau}_h(\cdot,\cdot)\nu_{h+1}(s^{\tau}_{h+1})(\cdot-r_h(\cdot,\cdot))$}
		\EndIf{}
		\State{$c_h(\cdot,\cdot)\leftarrow \sqrt{\frac{2S}{N_h(\cdot,\cdot)\vee 1}\iota}$}
		\State{$\eta_h(\cdot,\cdot)\leftarrow
			\rO^{\infty}_{c_h(\cdot,\cdot)}\eta_h(\cdot,\cdot)$}
		\State{$\pi_h(\cdot)\leftarrow\arg\max_{a}U_{\beta}(\eta_h(\cdot,a))$}
		\State{$\nu_h(\cdot)\leftarrow\eta_h(\cdot,\pi_h(\cdot))$}
		\EndFor
		\State{Receive $s^k_1$}
		\For{$h = 1:H$}
		\State{$a^k_h\leftarrow\pi_h(s^k_h)$ and transit to $s^k_{h+1}$}
		\State{$N_h(s^k_h,a^k_h)\leftarrow N_h(s^k_h,a^k_h)+1$}
		\EndFor
		\EndFor
	\end{algorithmic} 
\end{algorithm}
We begin by establishing additional notations. For two Cumulative Distribution Functions (CDFs) $F$ and $G$, the supremum distance between them is defined as $\Vert F-G \Vert_{\infty}\triangleq\sup_x |F(x)-G(x)|$. We define the $\ell_1$ distance between two Probability Mass Functions (PMFs) with the same support $P$ and $Q$ as $\norm{P-Q}_1\triangleq \sum_i|P_i-Q_i|$. Furthermore, the set $B_{\infty}(F,c):=\{G\in\mathscr{D}|\Vert G-F\Vert_{\infty}\leq c\}$  denotes the supremum norm ball of CDFs centered at $F$ with radius $c$. Analogously, $B_{1}(P,c)$ represents the $\ell_1$ norm ball of PMFs centered at $P$ with radius $c$.

In each episode, Algorithm \ref{alg:RODI-MF} comprises two distinct phases: the planning phase and the interaction phase. During the planning phase, the algorithm executes an optimistic variant of the approximate Risk-Sensitive Distributional Dynamic Programming (RS-DDP), progressing backward from step $H+1$ to step 1 within each episode. This process results in a policy to be employed during the subsequent interaction phase. We offer further details about the two phases as follows:

\emph{Planning phase (Line 4-12)}. The algorithm undertakes a sample-based distributional Bellman update in Lines 5-7. To clarify, we append the episode index $k$ to the variables in Algorithm \ref{alg:RODI-MF} corresponding to episode $k$. For instance, $\eta_h^k$ represents $\eta_h$ in episode $k$. Specifically, for visited state-action pairs, Line 6 essentially performs an approximate DDP.  Let $\mathbb{I}^k_h(s,a)\triangleq\mathbb{I}\{(s^{k}_h,a^{k}_h)=(s,a)\}$ and $N^k_h(s,a)\triangleq \sum_{\tau\in[k-1]}\mathbb{I}^{\tau}_h(s,a) $. For a given tuple  $(s,a,k,h)$ with $N^k_h(s,a)\ge1$, , the empirical transition model $\hat{P}^k_h(\cdot|s,a)$ is defined as:
\vspace{-0ex}
\[  \hat{P}^k_h(s^{\prime}|s,a)\triangleq\frac{1}{N^k_h(s,a)}\sum_{\tau\in[k-1]}\mathbb{I}^{\tau}_h(s,a)\cdot\mathbb{I}\{s^{\tau}_{h+1}=s^{\prime}\}. \]
For any $\nu\in\mathscr{D}^{\mathcal{S}}$, the following holds:
\vspace{-0ex}
\begin{align*}
\brk{\hat{P}^k_h \nu}(s,a)&=\sum_{s^{\prime}\in\mathcal{S}}\hat{P}^k_h(s^{\prime}|s,a)\nu(s^{\prime})=\frac{1}{N^k_h(s,a)}\sum_{s^{\prime}\in\mathcal{S}}\sum_{\tau\in[k-1]}\mathbb{I}^{\tau}_h(s,a)\cdot\mathbb{I}\{s^{\tau}_{h+1}=s^{\prime}\}\nu(s^{\prime})\\
&=\frac{1}{N^k_h(s,a)}\sum_{\tau\in[k-1]}\mathbb{I}^{\tau}_h(s,a)\cdot\sum_{s^{\prime}\in\mathcal{S}}\mathbb{I}\{s^{\tau}_{h+1}=s^{\prime}\}\nu(s^{\tau}_{h+1})\\
&=\frac{1}{N^k_h(s,a)}\sum_{\tau\in[k-1]}\mathbb{I}^{\tau}_h(s,a)\nu(s^{\tau}_{h+1}).
\end{align*}
Thus, the update formula in Line 6 of Algorithm \ref{alg:RODI-MF} can be reformulated as:
\begin{align*}
\eta^k_h(s,a) 
&=\brk{\hat{P}^k_h\nu^k_{h+1}}(s,a)(\cdot-r_h(s,a))=\brk{\hat{\cT}^k_h\nu^k_{h+1}}(s,a).
\end{align*}
Conversely, for unvisited state-action pairs, the return distribution remains aligned with the highest plausible reward $H+1-h$. Subsequently, the algorithm calculates the optimism constants $c^k_h$ (Line 8) and applies the distributional optimism operator $\rO^{\infty}_{c^k_h}$ (Line 9) to obtain the optimistically plausible return distribution $\eta^k_h$. The selection of $c^k_h$ will be elaborated later. The optimistic return distribution yields the optimistic value function, from which the algorithm derives the greedy policy $\pi^k_h$ to be applied during the interaction phase.

\emph{Interaction phase (Line 14-17).}  In Lines 15-16, the agent interacts with the environment under policy $\pi^k$ and refreshes the counts $N^k_h$ based on newly gathered observations. 

\subsection{Connection to Exponential Utility}
Our analysis explores the relationship between EntRM and Exponential Utility (EU). The EU is defined as follows:
\[ E_{\beta}(F) \triangleq e^{\beta U_{\beta}(F)}=\int_{\bR} e^{\beta x} dF(x),\]
where it serves as an exponential transformation of the EntRM.  Notably, this transformation preserves the order in the sense that for any non-zero $\beta$,
$\forall \beta\neq0$,
\[  U_{\beta}(F)\ge U_{\beta}(G) \Longleftrightarrow \text{sign}(\beta) E_{\beta}(F)\ge \text{sign}(\beta)E_{\beta}(G).   \]
Leveraging this property, we derive the distributional Bellman optimality equation in terms of EU as follows:
\begin{equation}
\label{eqt:eerm_opt}
\begin{aligned}
\nu^*_{H+1}(s)&=\psi_0,\  \eta^*_h(s,a)=[P_h\nu^*_{h+1}](s,a) (\cdot-r_h(s,a)),\\
\pi^*_h(s)&=\arg\max_{a\in\mathcal{A}} \text{sign}(\beta) E_{\beta}(\eta^*_h(s,a)),\ \nu^*_h(s)=\eta^*_h(s,\pi^*_h(s)).
\end{aligned}
\end{equation}
\begin{proposition}[Equivalence between EntRM and EU]
\label{prop:eq_eerm}
The policy $\pi^*$, generated by Equation \ref{eqt:eerm_opt}, is optimal for both the EntRM and EU. Moreover, the return distribution generated aligns with the optimal return distribution for EntRM.
\end{proposition}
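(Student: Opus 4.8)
The plan is to exploit the order-preserving property of the exponential transform $E_{\beta}(F)=e^{\beta U_{\beta}(F)}$ to show that the backward recursion of Equation \ref{eqt:eerm_opt} is \emph{literally the same} recursion as the risk-sensitive distributional Bellman optimality equation of Proposition \ref{prop:DP_alg}, differing only in the surrogate used to rank actions. Once this is established, optimality for EntRM is inherited directly from Proposition \ref{prop:DP_alg}, and optimality for EU follows by applying the same monotone transform once more at the level of the initial return.

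First I would pin down the action-ranking equivalence. Writing $g(u)\triangleq \text{sign}(\beta)e^{\beta u}$, we have $g'(u)=\abs{\beta}e^{\beta u}>0$, so $g$ is strictly increasing for either sign of $\beta$. Since $E_{\beta}(\eta)=e^{\beta \erm{\eta}}$ gives $\text{sign}(\beta)E_{\beta}(\eta)=g(\erm{\eta})$, strict monotonicity yields, for all actions $a,a'$,
\[ \erm{\eta_h(s,a)}\ge \erm{\eta_h(s,a')} \iff \text{sign}(\beta) E_{\beta}(\eta_h(s,a))\ge \text{sign}(\beta) E_{\beta}(\eta_h(s,a')). \]
This is exactly the order-preserving property recorded above. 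Hence the two criteria induce the same total preorder on $\cA$, so their maximizer sets coincide: $\arg\max_a \erm{\eta_h(s,a)}=\arg\max_a \text{sign}(\beta)E_{\beta}(\eta_h(s,a))$.

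Next I would run a backward induction on $h$ from $H+1$ down to $1$, using a common tie-breaking rule in both arg-max operators. The terminal distributions $\nu^*_{H+1}(s)=\psi_0$ agree, and the distributional update $\eta^*_h(s,a)=[P_h\nu^*_{h+1}](s,a)(\cdot-r_h(s,a))$ together with the selection $\nu^*_h(s)=\eta^*_h(s,\pi^*_h(s))$ are identical in Equation \ref{eqt:dbu} and Equation \ref{eqt:eerm_opt}; only the rule for $\pi^*_h(s)$ differs. By the maximizer equivalence just proved and the shared tie-breaking, the greedy action chosen under Equation \ref{eqt:eerm_opt} equals the one chosen in Proposition \ref{prop:DP_alg} at every $(s,h)$. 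Therefore the full triples $(\eta^*_h,\nu^*_h,\pi^*_h)_{h\in[H]}$ generated by the two recursions coincide. Since Proposition \ref{prop:DP_alg} certifies that these are the optimal EntRM return distributions and an optimal EntRM policy, $\pi^*$ is optimal for EntRM and the generated distribution is an optimal EntRM return distribution, which proves the second assertion.

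Finally, for EU optimality I would apply the monotone transform globally rather than action-wise. For every policy $\pi$, $E_{\beta}(Z^{\pi}_1(s))=e^{\beta V^{\pi}_1(s)}$, so $\text{sign}(\beta)E_{\beta}(Z^{\pi}_1(s))=g(V^{\pi}_1(s))$. Because $\pi^*$ maximizes $V^{\pi}_1(s)=\erm{Z^{\pi}_1(s)}$ over all policies (Proposition \ref{prop:DP_alg}) and $g$ is strictly increasing, we obtain $\text{sign}(\beta)E_{\beta}(Z^{\pi^*}_1(s))\ge \text{sign}(\beta)E_{\beta}(Z^{\pi}_1(s))$ for all $\pi$, i.e.\ $\pi^*$ also maximizes the EU objective. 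The only delicate points are the sign bookkeeping for $\beta<0$ (both monotonicities reverse, but the $\text{sign}(\beta)$ factor restores the correct direction, which is precisely why $g$ is uniformly increasing) and consistent tie-breaking so that the two recursions select identical distributions; I expect this sign/tie bookkeeping to be the only, and quite mild, obstacle, since everything else reduces to the already-proved recursion of Proposition \ref{prop:DP_alg}.
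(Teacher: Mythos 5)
Your proposal is correct and takes essentially the same route as the paper's own proof: a backward induction showing that the recursion of Equation \ref{eqt:eerm_opt} and the EntRM recursion of Proposition \ref{prop:DP_alg} generate identical $(\eta^*_h,\pi^*_h,\nu^*_h)$, driven by the order-preservation of the exponential transform. Your explicit handling of tie-breaking in the arg-max and of the global EU-optimality claim (which the paper's proof leaves implicit) simply fills in details rather than changing the approach.
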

\begin{proof}
The proof employs induction. The only difference between Equation \ref{eqt:eerm_opt} and Equation \ref{eqt:dbu} lies in the policy generation step. For clarity, denote the quantities generated by the respective equations as $(\cdot)^*$ and $\tilde{(\cdot)}^*$. The base case with  $\eta^*_H=\tilde{\eta}^*_H$ is evident. It follows that $\pi^*_H(s)=\tilde{\pi}^*_H(s)$ for each $s$, due to the preserved order under the exponential transformation. Subsequently, it holds that $\nu^*_H=\tilde{\nu}^*_H$. Assuming $\nu^*_{h+1}=\tilde{\nu}^*_{h+1}$ for $h+1\in[2:H]$, we establish that $\eta^*_h=\tilde{\eta}^*_h$, $\pi^*_h=\tilde{\pi}^*_h$ and $\nu^*_{h}=\tilde{\nu}^*_{h}$. This completes the induction process.
\end{proof}

We further present two important properties of EU, instrumental in formulating the regret upper bounds: \emph{Lipschitz continuity} and \emph{linearity}. Denote by $L_{M}$  the Lipschitz constant of $E_{\beta}:\mathscr{D}_M\rightarrow\mathbb{R}$ with respect to the infinity norm $\norm{\cdot}_{\infty}$, which satisfies: 
\[  E_{\beta}(F)-E_{\beta}(G) \leq L_M \norm{F-G}_{\infty}, \forall F,G \in \mathscr{D}_M.    \]
Lemma \ref{lem:eu_lip} establishes a \emph{tight} Lipschitz constant for EU, linking the distance between distributions to the difference in their EU values.  
\begin{lemma}[Lipschitz property of EU]
\label{lem:eu_lip}
$E_{\beta}$ is Lipschitz continuous with respect to the supremum norm over $\mathscr{D}_M$ with $L_M=\abs{\exp(\beta M)-1}$, Moreover,  $L_{M}$ is tight with respect to both $\beta$ and $M$.
\end{lemma}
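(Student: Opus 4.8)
The plan is to reduce the difference $E_{\beta}(F)-E_{\beta}(G)$ to an integral of the CDF gap $F-G$ against the weight $e^{\beta x}$ via Riemann--Stieltjes integration by parts, bound it by $\norm{F-G}_{\infty}$ times $\int_0^M e^{\beta x}\,dx$, and finally exhibit an explicit extremal pair attaining equality.

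First I would write, using that both $F$ and $G$ are supported on $[0,M]$,
\[
E_{\beta}(F)-E_{\beta}(G)=\int_{[0,M]} e^{\beta x}\,d(F-G)(x),
\]
and set $H\triangleq F-G$. Integrating by parts (the Stieltjes version, valid for CDFs with atoms) produces a boundary term $\big[e^{\beta x}H(x)\big]$ evaluated at the endpoints. Both contributions vanish: at the upper limit $H(M)=F(M)-G(M)=1-1=0$, and at the lower limit $H(0^{-})=0$ since both CDFs vanish below the common support. This yields the key identity
\[
E_{\beta}(F)-E_{\beta}(G)=-\beta\int_0^M e^{\beta x}\,\big(F(x)-G(x)\big)\,dx.
\]
Taking absolute values and pulling out the supremum norm then gives
\[
\abs{E_{\beta}(F)-E_{\beta}(G)}\le \abs{\beta}\,\norm{F-G}_{\infty}\int_0^M e^{\beta x}\,dx=\abs{\beta}\,\norm{F-G}_{\infty}\cdot\frac{e^{\beta M}-1}{\beta}.
\]
A short sign check finishes the upper bound: for $\beta>0$ the numerator $e^{\beta M}-1$ is positive, while for $\beta<0$ it is negative and cancels the sign carried by $\abs{\beta}/\beta$, so in both cases $\abs{\beta}\cdot\frac{e^{\beta M}-1}{\beta}=\abs{e^{\beta M}-1}=L_M$.

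For tightness I would exhibit the two-point extremal pair $F=\delta_0$ and $G=\delta_M$. Then $E_{\beta}(F)=1$ and $E_{\beta}(G)=e^{\beta M}$, so $\abs{E_{\beta}(F)-E_{\beta}(G)}=\abs{e^{\beta M}-1}$, whereas the CDF gap $F-G$ equals $1$ on $[0,M)$ and $0$ elsewhere, giving $\norm{F-G}_{\infty}=1$. Hence the ratio equals $L_M$ exactly, for every $\beta$ and every $M$, which shows the constant is tight in both parameters simultaneously.

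The main subtlety is the integration-by-parts step: one must invoke the Stieltjes form that remains valid when $F$ or $G$ has atoms, and evaluate the lower boundary at $0^{-}$ rather than $0$ so that any mass of $F-G$ at the endpoints is correctly absorbed and the boundary terms genuinely vanish. Everything else---the sign bookkeeping for the two regimes of $\beta$ and the verification of the extremal distributions---is routine once this reduction is established.
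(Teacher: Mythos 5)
Your proof is correct and follows essentially the same route as the paper's: Stieltjes integration by parts to transfer the difference onto $d e^{\beta x}$, a sup-norm bound on the resulting integral, and a two-point extremal pair supported on $\{0,M\}$ for tightness (your pair $\delta_0,\delta_M$ is the $\mu_1=0,\mu_2=1$ instance of the paper's scaled-Bernoulli family). The only cosmetic differences are that you handle both signs of $\beta$ in a single computation with a final sign check, and that you are more explicit about the $0^{-}$ boundary convention for the integration by parts; both are fine.
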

The proof is deferred to Appendix \ref{app:pf}. It's worth noting that as $\beta$ approaches zero, $L_M$ tends to zero, aligning with the observation that $\lim_{\beta\rightarrow0}E_{\beta}=1$. Another key property of EU is the linearity:
\[ E_{\beta}(\theta F + (1-\theta) G)= \theta E_{\beta}(F) + (1-\theta) E_{\beta}(G).\]        
This property significantly refines the regret bounds. In contrast, the non-linearity of EntRM may result in an exponential factor of $\exp(|\beta| H)$ in error propagation across time steps, potentially leading to a compounded factor of $\exp(|\beta| H^2)$ in the regret bound.

\subsection{Distributional Optimism over the Return Distribution} 
For the purpose of clarity, we will focus on the scenario where $\beta>0$ in the subsequent discussion. The case for $\beta<0$ can be approached using analogous reasoning. We commence by formally defining optimism at the distributional level.
\begin{definition}
 Given two CDFs $F$ and $G$, we say that $F$ is more optimistic than $G$  if $F\ge G$.
\end{definition}
This definition aligns with the intuitive notion that a more optimistic distribution should possess a larger EntRM value. Given that the exponential transformation preserves order, $F$ is more optimistic than $G$ if and only if $E_{\beta}(F)\ge E_{\beta}(G)$. Following the methodology of \cite{keramati2020being}, we introduce the distributional optimism operator $\rO^{\infty}_c:\sD(a,b)\mapsto\sD(a,b)$ for a level $c\in(0,1) $ as
\[  (\rO^{\infty}_c F)(x)\triangleq[F(x)-c\mathbb{I}_{[a,b)}(x)]^+.\]
This operator shifts the distribution $F$ down by a maximum of $c$ over $[a,b)$,  ensuring that the resulting function $\rO^{\infty}_c F$ remains a valid CDF in $\mathscr{D}(a,b)$ and optimistically dominates all other CDFs within the same space\footnote{For a more comprehensive explanation, please refer to \cite{liang2023distribution}.}.
\begin{lemma}
\label{lem:opt_opt}
Let $\rho$ be a functional (not necessarily a risk measure) satisfying \textbf{(M)}. For any $G\in\mathscr{D}(a,b)$, it holds that if $G\in B_{\infty}(F,c)$, then $G \preceq  \rO^{\infty}_{c} F$. Moreover, it holds that
	\[\rO^{\infty}_c F \in \arg\max_{G\in B_{\infty}(F,c)\cap\mathscr{D}(a,b)}\rho(G).\]
\end{lemma}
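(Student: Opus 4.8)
The plan is to establish the two assertions in sequence, the second being an immediate consequence of the first together with monotonicity. For the first assertion I would fix an arbitrary $G\in B_{\infty}(F,c)\cap\mathscr{D}(a,b)$ and prove the pointwise CDF inequality $G(x)\ge(\rO^{\infty}_c F)(x)$ for every $x$, which by the definition of stochastic dominance is precisely $G\preceq\rO^{\infty}_c F$. I would split on whether $x$ lies in $[a,b)$. On $[a,b)$ the operator is active, so $(\rO^{\infty}_c F)(x)=[F(x)-c]^+=\max\{F(x)-c,0\}$; here I combine the trivial bound $G(x)\ge0$ (valid for any CDF) with the sup-norm constraint $G(x)\ge F(x)-\norm{G-F}_{\infty}\ge F(x)-c$ to obtain $G(x)\ge\max\{F(x)-c,0\}=(\rO^{\infty}_c F)(x)$. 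For $x\notin[a,b)$ the indicator vanishes and $(\rO^{\infty}_c F)(x)=[F(x)]^+=F(x)$; since $F$ and $G$ are both supported on $[a,b]$ they agree in this region ($0$ for $x<a$ and $1$ for $x\ge b$), so the inequality holds with equality.

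For the second assertion I would first confirm that $\rO^{\infty}_c F$ is itself feasible, i.e.\ lies in $B_{\infty}(F,c)\cap\mathscr{D}(a,b)$. That $\rO^{\infty}_c F$ is a valid CDF in $\mathscr{D}(a,b)$ is the defining property of the operator recalled just before the lemma, so I would cite that construction. Membership in $B_{\infty}(F,c)$ follows because the operator lowers the CDF by at most $c$ on $[a,b)$ and leaves it unchanged elsewhere, whence $\norm{\rO^{\infty}_c F-F}_{\infty}\le c$. Given feasibility, the maximization claim is direct: for any competitor $G$ in the feasible set the first part yields $G\preceq\rO^{\infty}_c F$, and applying \textbf{(M)} to $\rho$ gives $\rho(G)\le\rho(\rO^{\infty}_c F)$. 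Since $\rO^{\infty}_c F$ is feasible and attains this upper bound over the feasible set, it belongs to the $\arg\max$.

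I do not expect a serious technical obstacle; the argument is essentially a two-case analysis followed by a single invocation of monotonicity. The points requiring care are the boundary behavior at $x=a$ and $x=b$ (using the support conventions $F(x)=0$ for $x<a$ and $F(x)=1$ for $x\ge b$ consistently, and handling the half-open indicator $\mathbb{I}_{[a,b)}$ correctly at $x=b$, where the operator leaves $F$ untouched) and the verification that $\rO^{\infty}_c F$ is a genuine element of the feasible set rather than merely a pointwise-defined function. This last point is the subtlest ingredient, but it is exactly what the construction of $\rO^{\infty}_c$ guarantees, so I would invoke it rather than reprove it.
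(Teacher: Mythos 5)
Your proposal is correct and follows essentially the same route as the paper's proof: both derive the pointwise bound $G(x)\ge\max(F(x)-c,0)=(\rO^{\infty}_c F)(x)$ from the sup-norm constraint together with nonnegativity of CDFs, and then invoke \textbf{(M)} to conclude the maximization claim. Your additional care about the region outside $[a,b)$ and the explicit feasibility check of $\rO^{\infty}_c F$ are details the paper leaves implicit (deferring to the construction of the operator stated just before the lemma), but they do not constitute a different argument.
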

\begin{proof}
Consider any $G\in\mathscr{D}([a,b])\cap B_{\infty}(F,c)$. By the definition of $B_{\infty}(F,c)$, we have $\sup_{x\in[a,b]} |F(x)-G(x)|\leq c$. Therefore, for any $x\in[a,b]$, we have $G(x)\ge\max(F(x)-c,0)=(\rO^{\infty}_c F)(x)$. Since $\rO^{\infty}_c F$ dominates any $G$ in $\mathscr{D}([a,b])\cap B_{\infty}(F,c)$ and considering \textbf{(M)} of $\rho$, we arrive at the conclusion. 
\end{proof}
We define the EU value produced by the algorithm as $W^k_h(s)\triangleq E_{\beta}(\nu^k_h(s))$ and $J^k_h(s,a)\triangleq E_{\beta}(\eta^k_h(s,a))$ for all $(s,a,k,h)$. Similarly, we define $W^*_h(s)\triangleq E_{\beta}(\nu^*_h(s))$ and $J^*_h(s,a)\triangleq E_{\beta}(\eta^*_h(s,a))$ for all $(s,a,h)$. We define $\iota=\log(SAT/\delta)$ for any $\delta\in(0,1)$ and introduce the \emph{good event}  as follows:
\begin{align*}
\mathcal{G}_{\delta}\triangleq&\left\{ \norm{\hat{P}^k_h(\cdot|s,a)-P_h(\cdot|s,a)}_{1}\leq\sqrt{\frac{2S}{N^k_h(s,a)\vee 1}\iota}, \forall (s,a,k,h)\in\mathcal{S}\times\mathcal{A}\times[K]\times[H]\right\},  
\end{align*}
This event encapsulates the scenario where the empirical distributions concentrates around the true distributions with respect to the $\ell_1$ norm. Leveraging Lemma \ref{lem:bd_opt_cons}, \textbf{(M)} of EU, and inductive reasoning, we arrive at Proposition \ref{prop:main_opt_r}, which asserts that the sequence ${W^k_1(s^k_1)}_{k\in[K]}$ is consistently optimistic compared to the optimal value sequence ${W^*_1(s^k_1)}_{k\in[K]}$.
\begin{proposition}[Optimism]
\label{prop:main_opt_r} 
Conditioned on event $\mathcal{G}_{\delta}$, the sequence  $\{W^k_1(s^k_1)\}_{k\in[K]}$ produced by Algorithm \ref{alg:RODI-MF} are all greater than or equal to $W^*_1(s^k_1)$, i.e., 
\[ W^k_1(s^k_1)=E_{\beta}(\nu^k_1(s^k_1)) \ge E_{\beta}(\nu^*_1(s^k_1))=W^*_1(s^k_1), \forall k \in [K]. \]
\end{proposition}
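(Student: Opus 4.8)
The plan is to prove the stronger \emph{pointwise} claim $J^k_h(s,a)\ge J^*_h(s,a)$ for every $(s,a,h,k)$ by backward induction on $h$; the proposition is the special case $h=1$, $s=s^k_1$ after maximizing over actions. Throughout I would condition on $\mathcal{G}_{\delta}$ and take $\beta>0$, the case $\beta<0$ being identical after reversing each order relation. For the base case $h=H+1$ both $\nu^k_{H+1}$ and $\nu^*_{H+1}$ equal $\psi_0$, so $W^k_{H+1}=W^*_{H+1}$. I would dispose of unvisited pairs at the outset: whenever $N^k_h(s,a)=0$ the algorithm skips Line 6 and leaves $\eta^k_h(s,a)=\delta_{H+1-h}$, while the optimal return $Z^*_h(s,a)$ is supported on $[0,H{+}1{-}h]$, so $J^*_h(s,a)=E_{\beta}(\eta^*_h(s,a))\le e^{\beta(H+1-h)}=E_{\beta}(\delta_{H+1-h})=J^k_h(s,a)$.

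For the inductive step, fix $h$ and a visited pair $(s,a)$, and assume $W^k_{h+1}(s')\ge W^*_{h+1}(s')$ for all $s'$. The key idea is to split the gap into a transition part and a next-stage part while keeping the \emph{same} learned next-stage distribution $\nu^k_{h+1}$ in the intermediate object, so that the two different orders never have to coexist inside one norm ball. Write $\hat\eta^k_h(s,a)\triangleq[\hat P^k_h\nu^k_{h+1}](s,a)(\cdot-r_h(s,a))$ for the Line-6 output, so that $\eta^k_h(s,a)=\rO^{\infty}_{c^k_h(s,a)}\hat\eta^k_h(s,a)$. By Lemma \ref{lem:bd_opt_cons} the $\ell_1$ concentration inside $\mathcal{G}_{\delta}$ passes to a supremum bound on the mixed CDFs, and since a shift by $r_h(s,a)$ preserves the supremum distance,
\[
\norm{\hat\eta^k_h(s,a)-[P_h\nu^k_{h+1}](s,a)(\cdot-r_h(s,a))}_{\infty}\le\norm{\hat P^k_h(\cdot|s,a)-P_h(\cdot|s,a)}_1\le c^k_h(s,a).
\]
Hence $[P_h\nu^k_{h+1}](s,a)(\cdot-r_h(s,a))\in B_{\infty}(\hat\eta^k_h(s,a),c^k_h(s,a))$, and Lemma \ref{lem:opt_opt} gives $[P_h\nu^k_{h+1}](s,a)(\cdot-r_h(s,a))\preceq\eta^k_h(s,a)$. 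Applying \textbf{(M)} of $E_{\beta}$ yields $J^k_h(s,a)=E_{\beta}(\eta^k_h(s,a))\ge E_{\beta}\prt{[P_h\nu^k_{h+1}](s,a)(\cdot-r_h(s,a))}$.

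It remains to handle the next-stage part at the scalar EU level, where the linearity and translation-invariance of $E_{\beta}$ convert it into a termwise use of the induction hypothesis (all weights $P_h(s'|s,a)\ge0$, and $\beta>0$ preserves the sign):
\[
E_{\beta}\prt{[P_h\nu^k_{h+1}](s,a)(\cdot-r_h(s,a))}=e^{\beta r_h(s,a)}\sum_{s'}P_h(s'|s,a)\,W^k_{h+1}(s')\ge e^{\beta r_h(s,a)}\sum_{s'}P_h(s'|s,a)\,W^*_{h+1}(s')=J^*_h(s,a).
\]
Chaining the two displays gives $J^k_h(s,a)\ge J^*_h(s,a)$ for all $(s,a)$. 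Finally, since $\pi^k_h$ is greedy with respect to $U_{\beta}(\eta^k_h(s,\cdot))$ — equivalently, by order preservation under the exponential transformation, with respect to $J^k_h(s,\cdot)$ — we obtain $W^k_h(s)=\max_a J^k_h(s,a)\ge\max_a J^*_h(s,a)=W^*_h(s)$, which closes the induction and, at $h=1$, proves the claim.

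The one step requiring genuine care — and the main obstacle — is the decomposition itself. A single supremum-norm ball cannot absorb both the transition error and the next-stage error, because the inductive control of $\nu^k_{h+1}$ over $\nu^*_{h+1}$ is available only in the EU/EntRM order, not in supremum norm, and the optimism operator $\rO^{\infty}_c$ is \emph{not} monotone with respect to the EU order. The device that rescues the argument is to compare the algorithm's distribution against the true-transition mixture of the \emph{identical} learned distribution $\nu^k_{h+1}$: this reduces the role of $\rO^{\infty}_c$ to dominating a pure transition perturbation, which is a legitimate $B_{\infty}$-ball statement (Lemmas \ref{lem:bd_opt_cons} and \ref{lem:opt_opt}), and defers the $\nu^k_{h+1}$-versus-$\nu^*_{h+1}$ comparison to the scalar EU level, where linearity makes the induction hypothesis apply termwise.
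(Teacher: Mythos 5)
Your proof is correct and follows essentially the same route as the paper's: a backward induction establishing the pointwise bound $J^k_h(s,a)\ge J^*_h(s,a)$, with the transition error absorbed by the distributional optimism operator (via the $\ell_1$-to-$\norm{\cdot}_{\infty}$ bound and stochastic dominance) and the $\nu^k_{h+1}$-versus-$\nu^*_{h+1}$ comparison deferred to the scalar EU level using linearity, translation, and the induction hypothesis. The only differences are cosmetic: you start the induction at $h=H+1$ rather than $h=H$, keep the reward shift inside the $B_{\infty}$ ball (invoking Lemmas \ref{lem:mix_dis} and \ref{lem:opt_opt}) where the paper factors it out and applies Lemma \ref{lem:bd_opt_cons} directly, and these choices are interchangeable.
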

We first present a series of lemmas, specifically Lemma \ref{lem:main_event} through Lemma \ref{lem:bd_opt_cons}, which is used in the proof of Proposition \ref{prop:main_opt_r}.
\begin{lemma}[High probability good event]
	\label{lem:main_event}
	For any $\delta\in(0,1)$, the event $\mathcal{G}_{\delta}$ is true with probability at least $1-\delta$.
\end{lemma}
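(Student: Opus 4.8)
The plan is to prove this via the standard two-step argument: a dimension-dependent $\ell_1$ concentration inequality for empirical multinomial distributions, followed by a union bound over state-action-step triples and over all possible visitation counts. The quantity $\norm{\hat{P}^k_h(\cdot|s,a)-P_h(\cdot|s,a)}_1$ is exactly the $\ell_1$ error of an empirical distribution over $S$ categories, so the right tool is the Bretagnolle--Huber--Carol (equivalently Weissman) multinomial deviation bound, and the factor $2S$ inside the radius together with $\iota=\log(SAT/\delta)$ is engineered precisely so the union bound closes.

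First I would reduce the deviation of $\hat{P}^k_h(\cdot|s,a)$ to a fixed-sample-size problem. Fix a triple $(s,a,h)$. By the Markov property of $\cM$, the next states observed on successive visits to $(s,a)$ at step $h$ form an i.i.d. sequence $X_1,X_2,\dots$ with common law $P_h(\cdot|s,a)$; this rests on the standard filtration argument that, conditioned on $(s^\tau_h,a^\tau_h)=(s,a)$, the transition $s^\tau_{h+1}\sim P_h(\cdot|s,a)$ is independent of the visitation history. Crucially, for every episode $k$ the empirical kernel $\hat{P}^k_h(\cdot|s,a)$ coincides with the empirical distribution $\hat{P}^{(n)}$ of the first $n:=N^k_h(s,a)$ elements of this sequence, since counts increment only upon a visit. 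Hence the event $\mathcal{G}_\delta$ is implied by the event that, for every $(s,a,h)$ and every \emph{deterministic} count value $n\in\{1,\dots,K-1\}$, one has $\norm{\hat{P}^{(n)}-P_h(\cdot|s,a)}_1\le\sqrt{2S\iota/n}$; the case $n=0$ is vacuous, as the bound is never invoked and $\ell_1$ distances are at most $2$.

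Next I would apply Bretagnolle--Huber--Carol: for $n$ i.i.d. draws over $S$ categories, $\bP\prt{\norm{\hat{P}^{(n)}-P}_1\ge\lambda}\le 2^S e^{-n\lambda^2/2}$. Taking $\lambda=\sqrt{2S\iota/n}$ gives $n\lambda^2/2=S\iota$, so each per-$(s,a,h,n)$ failure probability is at most $2^S e^{-S\iota}=2^S(SAT/\delta)^{-S}$. Since this bound is free of $n$, I union bound over the at most $SAH\cdot K=SAT$ relevant tuples (using $T=KH$), yielding total failure probability at most $SAT\cdot 2^S(SAT/\delta)^{-S}=(2\delta)^S(SAT)^{1-S}$, which is bounded by $\delta$ for $S\ge2$; the exponential decay $e^{-S\iota}$ in the dimension dominates the $2^S$ factor, which is exactly why $S$ sits inside $\iota$. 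The degenerate case $S=1$ is immediate, as there is a single next state and the error is identically zero.

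I expect the main obstacle to be the i.i.d./measurability reduction of the first step, not the concentration arithmetic. Because $N^k_h(s,a)$ is a random, history-dependent count, one cannot apply a fixed-$n$ tail bound directly to $\hat{P}^k_h$; the argument must fix a deterministic $n$, invoke the bound for the well-defined first-$n$-samples empirical distribution, and only then union bound over the $K$ possible count values. Making precise both that this family of fixed-$n$ events covers $\mathcal{G}_\delta^c$ and that successive next-state observations at a fixed $(s,a,h)$ are genuinely i.i.d. under the adaptively chosen policies is the delicate point; everything downstream is a routine application of Bretagnolle--Huber--Carol and a union bound.
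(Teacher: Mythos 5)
Your proposal is correct, and its skeleton matches the paper's: a dimension-dependent $\ell_1$ concentration bound for empirical multinomials (your Bretagnolle--Huber--Carol bound and the paper's Fact~\ref{fct:l1_con} from \cite{weissman2003inequalities} are the same inequality up to the cosmetic absorption of the $2^S$ factor into the $\sqrt{2S\iota/n}$ radius), the same treatment of the $N^k_h(s,a)=0$ case via the trivial bound $\norm{\cdot}_1\leq 2$, and a union bound over $SAT$ events that fixes $\iota=\log(SAT/\delta)$. The one genuine difference is how the random, history-dependent count $N^k_h(s,a)$ is handled. The paper conditions on the event $\{N^k_h(s,a)=n\}$, applies the fixed-$n$ bound conditionally, and sums over $n$; you instead observe that the successive next-state observations at a fixed $(s,a,h)$ can be realized as a pre-generated i.i.d.\ sequence with law $P_h(\cdot|s,a)$, identify $\hat{P}^k_h(\cdot|s,a)$ with the empirical measure of its first $N^k_h(s,a)$ terms, and union bound over the deterministic values $n\in[K-1]$. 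Your route is the more careful one at exactly this point: conditioning on $\{N^k_h(s,a)=n\}$ does not in general preserve the i.i.d.\ law of the $n$ samples, because under an adaptive algorithm the decision to revisit $(s,a)$ can depend on the outcomes of earlier transitions from $(s,a)$, so the paper's conditional invocation of Fact~\ref{fct:l1_con} is the standard informal shortcut, whereas your union over deterministic sample counts (at the cost of a $K$-fold union bound, which the paper's $\iota$ already pays for anyway) is the argument that makes it airtight. Both yield the same event probability $1-\delta$ and the same radius, so nothing downstream changes.
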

We will verify the distributional optimism conditioned on $\mathcal{G}_{\delta}$.
\begin{lemma}
	\label{lem:mix_dis}
	For any $F_i\in\mathscr{D}$ and any $\theta,\theta^{\prime}\in\Delta_n$ with any $n\ge2$, it holds that
	\[  \norm{\sum_{i=1}^n \theta_i F_i -\sum_{i=1}^n \theta^{\prime}_i F_i}_{\infty}\leq \norm{\theta-\theta^{\prime}}_1.\]
\end{lemma}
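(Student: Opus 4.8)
The plan is to reduce the supremum-norm statement to a pointwise estimate and then take a supremum. Fix an arbitrary $x\in\mathbb{R}$. Since a finite mixture of CDFs is evaluated coordinatewise, I would first use linearity to write the difference of the two mixtures at $x$ as a single sum
\[ \sum_{i=1}^n \theta_i F_i(x) - \sum_{i=1}^n \theta'_i F_i(x) = \sum_{i=1}^n (\theta_i-\theta'_i) F_i(x). \]

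Next, applying the triangle inequality and invoking the elementary fact that every CDF satisfies $0\le F_i(x)\le 1$, hence $\abs{F_i(x)}\le 1$, I would bound the right-hand side by
\[ \abs{\sum_{i=1}^n (\theta_i-\theta'_i) F_i(x)} \le \sum_{i=1}^n \abs{\theta_i-\theta'_i}\,F_i(x) \le \sum_{i=1}^n \abs{\theta_i-\theta'_i} = \norm{\theta-\theta'}_1. \]
Because the final bound does not depend on $x$, taking the supremum over $x$ on the left-hand side gives exactly $\norm{\sum_i \theta_i F_i - \sum_i \theta'_i F_i}_{\infty}\le \norm{\theta-\theta'}_1$, which completes the argument.

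There is essentially no substantive obstacle here: the only structural property of the $F_i$ used is that they are $[0,1]$-valued, and the simplex constraints $\theta,\theta'\in\Delta_n$ are not even needed for the stated bound. I would remark, however, that the additional relation $\sum_i(\theta_i-\theta'_i)=0$ can be exploited to sharpen the constant. Replacing each $F_i(x)$ by $F_i(x)-\tfrac12$ (admissible precisely because the coefficients sum to zero) and using $\abs{F_i(x)-\tfrac12}\le\tfrac12$ yields the tighter estimate $\tfrac12\norm{\theta-\theta'}_1$. Since the lemma as stated requires only the coarser inequality, the short computation above is already sufficient, and I would keep the proof at that level of generality so it applies verbatim to the mixtures arising from the empirical versus true transition kernels in the optimism analysis.
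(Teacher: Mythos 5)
Your proof is correct and follows essentially the same argument as the paper: write the pointwise difference of the mixtures as a single sum, apply the triangle inequality together with the bound $0\le F_i(x)\le 1$, and take the supremum over $x$. Your closing remark that the zero-sum constraint $\sum_i(\theta_i-\theta'_i)=0$ permits centering each $F_i$ at $\tfrac{1}{2}$ to obtain the sharper constant $\tfrac{1}{2}\norm{\theta-\theta'}_1$ is a valid observation beyond what the paper does, but the core argument is identical.
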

\begin{proof}
	\begin{align*}
	\norm{\sum_{i=1}^n \theta_i F_i -\sum_{i=1}^n \theta^{\prime}_i F_i}_{\infty}&=\sup_{x\in\mathbb{R}}\abs{\sum_{i=1}^n(\theta_i F -\theta^{\prime}_i) F_i(x)} \leq \sup_{x\in\mathbb{R}} \sum_{i=1}^n|\theta_i  -\theta^{\prime}_i| F_i(x) \\
	&\leq \sum_{i=1}^n|\theta_i  -\theta^{\prime}_i| =\norm{\theta-\theta^{\prime}}_1.
	\end{align*}
\end{proof}
\begin{lemma}[Bound on the optimistic constant]
	\label{lem:bd_opt_cons}
	For any bounded distributions $\{F_i\}_{i\in[n]}$ and any $\theta,\theta^{\prime}\in\Delta_n$ it holds that 
	if $c\ge \Vert\theta-\theta^{\prime}\Vert_1$, then
	\[ \sum_{i=1}^n\theta_i F_i \preceq \rO^{\infty}_c\prt{\sum_{i=1}^n\theta^{\prime}_i F_i}. \]
\end{lemma}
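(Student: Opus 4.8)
The plan is to reduce the statement to a direct composition of Lemma~\ref{lem:mix_dis} and the first half of Lemma~\ref{lem:opt_opt}. Write $F \triangleq \sum_{i=1}^n \theta_i F_i$ and $G \triangleq \sum_{i=1}^n \theta'_i F_i$ for the two mixture CDFs. Since each $F_i$ is a bounded distribution, I would first fix a common interval $[a,b]$ containing all the supports (take $a$ as the smallest left endpoint and $b$ as the largest right endpoint among the finitely many $F_i$). Because a finite convex combination of CDFs in $\mathscr{D}(a,b)$ is again a CDF in $\mathscr{D}(a,b)$, both $F$ and $G$ lie in $\mathscr{D}(a,b)$, which is precisely the domain on which the optimism operator $\rO^{\infty}_c$ and Lemma~\ref{lem:opt_opt} operate.

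The first key step controls the distance between the two mixtures. Applying Lemma~\ref{lem:mix_dis} to the shared atoms $\{F_i\}$ and the two weight vectors $\theta,\theta'$ gives $\norm{F-G}_{\infty}\le\norm{\theta-\theta'}_1$. Combining this with the hypothesis $c\ge\norm{\theta-\theta'}_1$ yields $\norm{F-G}_{\infty}\le c$, i.e. $F\in B_{\infty}(G,c)$.

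The second step invokes Lemma~\ref{lem:opt_opt} with $G$ as the center of the ball and $F$ as an arbitrary member of $B_{\infty}(G,c)\cap\mathscr{D}(a,b)$. Since we have just verified $F\in\mathscr{D}(a,b)$ and $F\in B_{\infty}(G,c)$, the dominance conclusion of that lemma gives exactly $F\preceq\rO^{\infty}_c G$, which is the claimed inequality $\sum_i\theta_i F_i\preceq\rO^{\infty}_c\prt{\sum_i\theta'_i F_i}$. Note that only the stochastic-dominance part of Lemma~\ref{lem:opt_opt} is needed here, so no risk measure $\rho$ enters the argument.

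There is no genuine obstacle; once the two cited lemmas are available this is pure bookkeeping. The one point demanding a moment of care is matching the roles correctly: the operator is applied to the $\theta'$-mixture, so $G=\sum_i\theta'_i F_i$ must play the center and $F=\sum_i\theta_i F_i$ the dominated point — swapping the two would target a different and generally false inequality. Beyond that, I would only check that the common support $[a,b]$ is well defined so that every object indeed lives in $\mathscr{D}(a,b)$.
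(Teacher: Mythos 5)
Your proof is correct and takes essentially the same approach as the paper: both hinge on Lemma~\ref{lem:mix_dis} to bound $\norm{\sum_i\theta_i F_i-\sum_i\theta^{\prime}_i F_i}_{\infty}$ by $\norm{\theta-\theta^{\prime}}_1\le c$, and then conclude dominance under $\rO^{\infty}_c$. The only cosmetic difference is that the paper carries out the pointwise $[\,\cdot\,]^+$ comparison inline, whereas you delegate that step to the stochastic-dominance part of Lemma~\ref{lem:opt_opt}, whose proof is precisely that computation.
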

\begin{proof}
Without loss of generality assume $F\in\mathscr{D}_M^n$. By Lemma \ref{lem:mix_dis}, for any $x$
\[ \abs{\sum_{i=1}^n(\theta^{\prime}_i-\theta_i) F_i(x)}\leq  \norm{\theta^{\prime}-\theta}_1.   \]
For any $x\in[0,M+1)$,
	\begin{align*}
     \rO^{\infty}_c\prt{\sum_{i=1}^n\theta^{\prime}_i F_i}(x)&=\brk{\sum_{i=1}^n\theta^{\prime}_i F_i(x)-c}^+ 
	=\brk{\sum_{i=1}^n\theta_i F_i(x)+\sum_{i=1}^n(\theta^{\prime}_i-\theta_i) F_i(x)-c}^+ \\
	&\leq \brk{\sum_{i=1}^n\theta_i F_i(x)+\norm{\theta^{\prime}-\theta}_1-c}^+ 
	\leq \brk{\sum_{i=1}^n\theta_i F_i(x)}^+ = \sum_{i=1}^n\theta_i F_i(x).
	\end{align*}
\end{proof}
Now we give the proof of Proposition \ref{prop:main_opt_r}. 
\begin{proof}
The proof proceeds by induction. We fix $k\in[K]$ and consider each stage $h$ in reverse order. Consider the base case: for any $(s,a)$ such that $N^k_H(s,a)>0$
	\begin{align*}
	J^k_H(s,a)=E_{\beta}(\eta^k_H(s,a))=E_{\beta}(\delta_{r_H(s,a)}) = \exp(\beta r_H(s,a)) =J^*_H(s,a).
	\end{align*}
This equality holds because the reward received at stage $H$ is deterministic and hence the EU is simply the exponential of the scaled reward. For unvisited state-action pairs $(s,a)$ with $N^k_H(s,a)=0$, the EU is given by:
	\begin{align*}
	J^k_H(s,a)=E_{\beta}(\eta^k_H(s,a))=E_{\beta}(\delta_{1}) = \exp(\beta ) \ge J^*_H(s,a).
	\end{align*}
Here, the EU value defaults to $\exp(\beta)$, which is greater than or equal to the optimal EU value for any $(s,a)$. Given these calculations, for any state $s$, the  EU value at stage $H$ satisfies $W^k_H(s)=\max_a J^k_H(s,a)\ge\max_a J^*_H(s,a) =W^*_H(s)$, establishing the base case. Assuming that for stage $h+1$, $W^k_{h+1}(s)\ge W^*_{h+1}(s)$ holds for all states $s$, we now consider stage $h$. For any visited state-action pair $(s,a)$ with $N^k_h(s,a)>0$, we apply Lemma \ref{lem:bd_opt_cons} with $\theta=P_h(s,a),\theta^{\prime}=\hat{P}^k_h(s,a)$, $F=\nu^k_{h+1}$ to obtain
\[[P_h\nu^k_{h+1}](s,a) \preceq \rO^{\infty}_{c^k_h(s,a)}([\hat{P}^k_h\nu^k_{h+1}](s,a)), \]
given that $c^k_h(s,a)=\sqrt{\frac{2S}{N^k_h(s,a)\vee1}\iota}\ge\norm{P_h(\cdot|s,a)-\hat{P}^k_h(\cdot|s,a)}_1$ for $h\in[H-1]$. We have
	\begin{align*}
	J^k_h(s,a)&=E_{\beta}(\rO^{\infty}_{c^k_h(s,a)}([\hat{P}^k_h\nu^k_{h+1}](s,a)(\cdot-r_h(s,a))))\\
	&= \exp(\beta r_h(s,a))E_{\beta}(\rO^{\infty}_{c^k_h(s,a)}([\hat{P}^k_h\nu^k_{h+1}](s,a) ))\\
	&\ge \exp(\beta r_h(s,a)) E_{\beta}([P_h\nu^k_{h+1}](s,a) )\\
	&= \exp(\beta r_h(s,a))\cdot [P_h W^k_{h+1}](s,a)\\
	&\ge \exp(\beta r_h(s,a)) \cdot [P_h W^*_{h+1}](s,a)\\
	&=J^{*}_h(s,a),  
	\end{align*}
where the first inequality is due to \textbf{(M)}, and the second inequality follows from the induction assumption. For unvisited state-action pairs $(s,a)$ at stage $h$, the EU is calculated based on the maximum possible reward, ensuring that:
	\[  J^k_h(s,a) = E_{\beta}(\delta_{H+1-h})=\exp(\beta(H+1-h)) \ge  J^{*}_h(s,a).  \]
Finally, aggregating these values for any state $s$ at stage $h$, we obtain:
	\begin{align*}
	W^k_h(s)=\max_a J^k_h(s,a) \ge \max_a J^*_h(s,a)=W^*_h(s),
	\end{align*}
completing the induction step and thereby the proof.
\end{proof}
\vspace{-1ex}
 
\subsection{Regret Upper Bound of \texttt{RODI-MF}}
\begin{theorem}[Regret upper bound of \texttt{RODI-MF}]
	\label{thm:rodi-mf}
	For any $\delta\in(0,1)$, with probability $1-\delta$, the regret of Algorithm \ref{alg:RODI-MF}  is bounded  as
	\begin{align*}
	\text{Regret}(\texttt{RODI-MF},K) &\leq \mathcal{O}\prt{L_{H}(U_{\beta})H\sqrt{S^2AK\iota}}=\tilde{\mathcal{O}}\prt{\frac{\exp(|\beta| H)-1}{|\beta|}H\sqrt{S^2AK}},
	\end{align*}
 where $L_H(U_{\beta})=\frac{\exp(|\beta| H)-1}{|\beta|}$ is the Lipschitz constant of EntRM over $\sD(0,H)$.
\end{theorem}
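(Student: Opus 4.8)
The plan is to run the whole argument in the exponential-utility (EU) domain, where $E_\beta$ is linear, and to translate back to the entropic objective only at the very end; I focus on $\beta>0$, the $\beta<0$ case being symmetric (hence the $|\beta|$ in the final bound). First I would use optimism to eliminate the unknown optimal policy. By Lemma \ref{lem:main_event} the good event $\mathcal{G}_\delta$ holds with probability $\ge 1-\delta$, and Proposition \ref{prop:main_opt_r} then gives $W^k_1(s^k_1)\ge W^*_1(s^k_1)$, so the algorithm's estimate $V^k_1=\frac1\beta\log W^k_1$ dominates $V^*_1$ and the per-episode regret is at most $V^k_1(s^k_1)-V^{\pi^k}_1(s^k_1)$. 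Since every return lives in $[0,H]$, all EU values lie in $[1,e^{\beta H}]$ and $W^k_1\ge W^*_1\ge W^{\pi^k}_1$; writing the entropic gap as $\frac1\beta\prt{\log W^k_1-\log W^{\pi^k}_1}$ and using that $\frac1\beta\log(\cdot)$ has derivative $\le\frac1\beta$ on $[1,\infty)$, I reduce the problem to $\text{Regret}\le\frac1\beta\sum_k\delta^k_1$, where $\delta^k_h\triangleq W^k_h(s^k_h)-W^{\pi^k}_h(s^k_h)=J^k_h(s^k_h,a^k_h)-J^{\pi^k}_h(s^k_h,a^k_h)$ (the second equality because $\pi^k$ is greedy for the estimate).

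Next I would establish a one-step recursion for $\delta^k_h$ along the realized trajectory. Translation-invariance factors out $e^{\beta r_h}$, and since the optimism operator commutes with the reward shift, $J^k_h(s^k_h,a^k_h)=e^{\beta r_h}E_\beta\prt{\rO^\infty_{c^k_h}[\hat P^k_h\nu^k_{h+1}]}$. Bounding the optimism shift by the tight Lipschitz constant of Lemma \ref{lem:eu_lip} and then invoking the linearity of $E_\beta$ (to rewrite $E_\beta([\hat P\nu])$ as $[\hat P W]$) gives $J^k_h\le e^{\beta r_h}\prt{[\hat P^k_h W^k_{h+1}](s^k_h,a^k_h)+L_{H-h}c^k_h}$, while $J^{\pi^k}_h=e^{\beta r_h}[P_h W^{\pi^k}_{h+1}](s^k_h,a^k_h)$ holds exactly by linearity. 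Splitting $[\hat P^k_h W^k_{h+1}]-[P_h W^{\pi^k}_{h+1}]$ into the propagation term $[P_h(W^k_{h+1}-W^{\pi^k}_{h+1})]$ and the model-error term $[(\hat P^k_h-P_h)W^k_{h+1}]$, I bound the latter on $\mathcal{G}_\delta$ by $\norm{\hat P^k_h-P_h}_1\,\norm{W^k_{h+1}-1}_\infty\le c^k_h L_{H-h}$ (centering by the constant $1$ is what produces the sharp $L_{H-h}=e^{\beta(H-h)}-1$ rather than $e^{\beta(H-h)}$), and write the propagation term as $\delta^k_{h+1}$ plus a martingale difference $\xi^k_h$. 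This yields $\delta^k_h\le e^{\beta r_h}\prt{\delta^k_{h+1}+\xi^k_h+2L_{H-h}c^k_h}$. Unrolling from $h=1$ and using $\sum_{j\le h}r_j\le h$ gives $e^{\beta\sum_{j\le h}r_j}\cdot 2L_{H-h}c^k_h\le 2\prt{e^{\beta H}-e^{\beta h}}c^k_h\le 2\prt{e^{\beta H}-1}c^k_h$, so $\delta^k_1\le 2\prt{e^{\beta H}-1}\sum_h c^k_h+\sum_h e^{\beta\sum_{j\le h}r_j}\xi^k_h$.

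Finally I would sum over $k$. The bonus term is controlled by the standard pigeonhole/Cauchy–Schwarz count $\sum_{k,h}\prt{N^k_h(s^k_h,a^k_h)\vee1}^{-1/2}=\mathcal{O}(H\sqrt{SAK})$, so that $\sum_{k,h}c^k_h=\sqrt{2S\iota}\sum_{k,h}\prt{N^k_h\vee1}^{-1/2}=\mathcal{O}(H\sqrt{S^2AK\iota})$; the martingale sum $\sum_{k,h}e^{\beta\sum_{j\le h}r_j}\xi^k_h$ has increments bounded by $e^{\beta H}-1$ and, by Azuma–Hoeffding, contributes only $\mathcal{O}\prt{(e^{\beta H}-1)\sqrt{HK\iota}}$, which is lower order. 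Combining, $\sum_k\delta^k_1=\mathcal{O}\prt{(e^{\beta H}-1)H\sqrt{S^2AK\iota}}$, and multiplying by $\frac1\beta$ yields the claimed $\mathcal{O}\prt{L_H(U_\beta)H\sqrt{S^2AK\iota}}$. The crux of the argument, and the step I expect to be most delicate, is the error propagation: it is precisely the linearity of $E_\beta$ that keeps the accumulated factor at $e^{\beta\sum_{j\le h}r_j}\le e^{\beta H}$ instead of compounding the one-step distortion into $e^{\beta H^2}$, and in concert with the centering trick it is what pins down the exact factor $L_H(U_\beta)=\frac{e^{\beta H}-1}{\beta}$ rather than a looser $\frac{e^{\beta H}}{\beta}$.
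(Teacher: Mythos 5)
Your overall route coincides with the paper's proof: condition on the good event of Lemma \ref{lem:main_event}, invoke the optimism of Proposition \ref{prop:main_opt_r}, pass from the entropic gap to the EU gap via the Lipschitz bound on $\frac{1}{\beta}\log(\cdot)$ over $[1,\infty)$, derive a per-step recursion for $\delta^k_h$ from the Lipschitz constant of Lemma \ref{lem:eu_lip} and the linearity of $E_\beta$, and close with pigeonhole plus Azuma--Hoeffding. Your ``centering by $1$'' bound $\abs{[(\hat P^k_h-P_h)W^k_{h+1}]}\le\norm{\hat P^k_h-P_h}_1\norm{W^k_{h+1}-\mathbf{1}}_\infty$ is a legitimate substitute for the paper's Lemma \ref{lem:mix_dis}, and your bookkeeping of the unrolled coefficients ($e^{\beta h}L_{H-h}\le e^{\beta H}-1$) matches the paper's.

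There is, however, a genuine gap at the base of the recursion. Your one-step inequality $\delta^k_h\le e^{\beta r_h}\prt{\delta^k_{h+1}+\xi^k_h+2L_{H-h}c^k_h}$ is derived under the assumption that the algorithm performed the empirical Bellman update at $(s^k_h,a^k_h)$, i.e.\ $N^k_h(s^k_h,a^k_h)>0$. But \texttt{RODI-MF} assigns unvisited pairs the maximal distribution $\delta_{H+1-h}$, so for such pairs $J^k_h(s^k_h,a^k_h)=e^{\beta(H+1-h)}$ and your decomposition simply does not apply. This is fatal at $h=H$: there $\delta^k_{H+1}=0$, $\xi^k_H=0$ and $L_{H-H}=L_0=0$, so your right-hand side is exactly $0$, while $\delta^k_H=e^{\beta}-e^{\beta r_H(s^k_H,a^k_H)}$ is strictly positive in any episode in which $(s^k_H,a^k_H)$ has not been visited before (unless $r_H=1$). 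Hence the unrolled bound $\delta^k_1\le 2(e^{\beta H}-1)\sum_h c^k_h+\sum_h e^{\beta\sum_{j\le h}r_j}\xi^k_h$ is false as stated. The paper avoids this by stopping the unrolling at $h=H-1$, retaining the residual $e^{\beta(H-1)}\delta^k_H$, and bounding $\sum_k\delta^k_H\le(e^{\beta}-1)\sum_k\mathbb{I}\{N^k_H(s^k_H,a^k_H)=0\}\le(e^{\beta}-1)SA$, which contributes the additive lower-order term $\prt{e^{\beta(H+1)}-1}SA$ appearing in its final bound. (At steps $h<H$ the unvisited-pair issue is also repairable: since $c^k_h=\sqrt{2S\iota}\ge1$ there, the recursion holds trivially provided the constant is $L_{H+1-h}$ as in the paper rather than your $L_{H-h}$, which can fail for unvisited pairs when $e^{\beta}$ is large.) The fix is routine and the theorem is unaffected, but as written your chain of inequalities breaks at the terminal step. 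A final minor point: the $\beta<0$ case is not literally symmetric --- there the EU values lie in $[e^{\beta H},1]$, so the log-Lipschitz factor becomes $\frac{1}{|\beta|e^{\beta H}}$, which is exactly how the paper recovers $\frac{e^{|\beta|H}-1}{|\beta|}$.
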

\begin{remark}
The regret bounds achieved by \texttt{RODI-MF} match the best-known results in \cite{fei2021exponential}. In particular, \texttt{RODI-MF} attains exponentially improved regret bounds compared to RSVI and RSQ  in \cite{fei2020risk} with a factor of $\exp(|\beta| H^2)$. 
\end{remark}
\begin{remark}
For values of $\beta$ that are close to zero, an expansion using Taylor's series reveals that the EntRM, $U_{\beta}(Z^{\pi})$, can be approximated by the sum of the expected cumulative reward and a term proportional to the variance of the cumulative reward, with higher-order terms contributing insignificantly. Considering that the reward $r_h$ lies in the interval $[0,1]$, both the expected cumulative reward and its variance are bounded by terms linear and quadratic in $H$, respectively. To balance the expected reward and the risk (as quantified by the variance), it is prudent to choose $\beta=\mathcal{O}(1/H)$.
\end{remark}
\begin{remark}
    By choosing $|\beta|=\cO(1/H)$, we can eliminate the exponential dependency on $H$ and achieve polynomial regret bound akin to the risk-neutral setting. Therefore, DRL can achieve $\rO\prt{H\sqrt{HS^2AT}}$ regret bound for RSRL with reasonable risk-sensitivity.
\end{remark}
\begin{proof}
We first prove the case $\beta>0$. Define $\Delta^k_h\triangleq W^k_h-W^{\pi^k}_h=E_{\beta}(\nu^k_h)-E_{\beta}\prt{\nu^{\pi^k}_h} \in D_h^{S}$ 
\[ D_h\triangleq[1-\exp(\beta(H+1-h)), \exp(\beta(H+1-h))-1] \]
and $\delta^k_h\triangleq\Delta^k_h(s^k_h)$. For any $(s,h)$ and any $\pi$, we let $P^{\pi}_h(\cdot|s)\triangleq P_h(\cdot|s,\pi_h(s))$. 
The regret can be bounded as
\begin{align*}
\text{Regret}(K)&=\sum^K_{k=1}\frac{1}{\beta}\log\prt{W^{*}_1(s^k_1)}-\frac{1}{\beta}\log\prt{W^{\pi^k}_1(s^k_1)} \\
&= \sum^K_{k=1}\frac{1}{\beta}\log\prt{W^{*}_1(s^k_1)}-\frac{1}{\beta}\log\prt{V^{k}_1(s^k_1)}+\frac{1}{\beta}\log\prt{W^{k}_1(s^k_1)}-\frac{1}{\beta}\log\prt{V^{\pi^k}_1(s^k_1)}\\
&\leq \sum^K_{k=1}\frac{1}{\beta}\log\prt{W^{k}_1(s^k_1)}-\frac{1}{\beta}\log\prt{W^{\pi^k}_1(s^k_1)}\\
&\leq \frac{1}{\beta}\sum^K_{k=1} W^{k}_1(s^k_1)-W^{\pi^k}_1(s^k_1)=\frac{1}{\beta}\sum^K_{k=1}\delta^k_1,
\end{align*}
where the last inequality follows from Lemma \ref{lem:log} and that both $W^{k}_1(s^k_1)$ and $W^{\pi^k}_1(s^k_1)$ are larger than 1. We can decompose $\delta^k_h$ as follows
\begin{align*}
\delta^k_h &= E_{\beta}\prt{\nu^k_h(s^k_h)}-E_{\beta}\prt{\nu^{\pi^k}_h(s^k_h)}  \\
&=E_{\beta}\prt{O_{c^k_h}\prt{\brk{\hat{P}^{\pi^k}_h\eta^k_{h+1}}(s^k_h)(\cdot-r^k_h) }}-E_{\beta}\prt{\brk{P^{\pi^k}_h\nu^{\pi^k}_{h+1}}(s^k_h)(\cdot-r^k_h)}  \\
&= \exp(\beta r^k_h) \prt{  E_{\beta}\prt{O_{c^k_h}\prt{\brk{\hat{P}^{\pi^k}_h\eta^k_{h+1}}(s^k_h) }}-E_{\beta}\prt{\brk{P^{\pi^k}_h\nu^{\pi^k}_{h+1}}(s^k_h)}    }\\
&=\underbrace{ \exp(\beta r^k_h) \prt{ E_{\beta}\prt{O_{c^k_h}\prt{\brk{\hat{P}^{\pi^k}_h\nu^k_{h+1}}(s^k_h)}}- E_{\beta}\prt{\brk{\hat{P}^{\pi^k}_h\nu^k_{h+1}}(s^k_h)} } }_{(a)}\\
&+ \underbrace{ \exp(\beta r^k_h) \prt{ E_{\beta}\prt{\brk{\hat{P}^{\pi^k}_h\nu^k_{h+1}}(s^k_h)}-E_{\beta}\prt{\brk{P^{\pi^k}_h\nu^k_{h+1}}(s^k_h)} } }_{(b)} \\ 
&+\underbrace{ \exp(\beta r^k_h) \prt{ E_{\beta}\prt{\brk{P^{\pi^k}_h\nu^k_{h+1}}(s^k_h)}-E_{\beta}\prt{\brk{P^{\pi^k}_h\nu^{\pi^k}_{h+1}}(s^k_h)} } }_{(d)}.
\end{align*}
Using the Lipschitz property of EU, we have
\begin{align*}
(a) &\leq  \exp(\beta r^k_h) \cdot L_{H-h}\norm{\rO^{\infty}_{c^k_h}\prt{\brk{\hat{P}^{\pi^k}_h\nu^k_{h+1}}(s^k_h)}-\brk{\hat{P}^{\pi^k}_h\nu^k_{h+1}}(s^k_h)  }_{\infty}\\
&\leq \exp(\beta r^k_h) \cdot L_{H-h} c^k_h\\
&\leq \exp(\beta) (\exp(\beta(H-h))-1) c^k_h \\
&\leq (\exp(\beta(H+1-h))-1) \sqrt{\frac{2S}{(N^k_h\vee1)}\iota}.
\end{align*}
We can bound $(b)$ as
\begin{align*}
(b)&=\exp(\beta r^k_h) \prt{E_{\beta}\prt{\brk{\hat{P}^{\pi^k}_h\nu^k_{h+1}}(s^k_h)}-E_{\beta}\prt{\brk{P^{\pi^k}_h\nu^k_{h+1}}(s^k_h)}}\\
&\leq \exp(\beta) L_{H-h}\norm{\brk{\hat{P}^{\pi^k}_h\nu^k_{h+1}}(s^k_h)-\brk{P^{\pi^k}_h\nu^k_{h+1}}(s^k_h)}_{\infty}\\
&\leq \exp(\beta) (\exp(\beta (H-h))-1)\norm{\hat{P}^{\pi^k}_h(s^k_h)-P^{\pi^k}_h(s^k_h)}_1\\
&\leq (\exp(\beta(H+1-h))-1)\sqrt{\frac{2S}{(N^k_h\vee1)}\iota},
\end{align*}
where the second inequality is due to Lemma \ref{lem:mix_dis}. Observe that
\begin{align*}
(c) = \exp(\beta r^k_h)\brk{P^{\pi^k}_h(V^k_{h+1}-V^{\pi^k}_{h+1})}(s^k_h)= \exp(\beta r^k_h)\brk{P^{\pi^k}_h\Delta^k_{h+1}}(s^k_h) = \exp(\beta r^k_h)(\epsilon^k_h+\delta^k_{h+1}),
\end{align*}
where $\epsilon^k_h\triangleq[P^{\pi^k}_h \Delta^k_{h+1}](s^k_h) -\Delta^k_{h+1}(s^k_{h+1})$ is a  martingale difference sequence with $\epsilon^k_h \in 2D_{h+1}$ a.s. for all $(k,h)\in[K]\times[H]$, and $e^k_h\triangleq\norm{\hat{P}^k_h(s^k_h)-P^{\pi^k}_h(s^k_h)}_1$. Since $(a)+(b)\leq  2L_{H+1-h} c^k_h$, we can bound $\delta^k_h$ recursively as 
\begin{align*}
\delta^k_h &\leq 2L_{H+1-h}c^k_h + \exp(\beta r^k_h)(\epsilon^k_h+\delta^k_{h+1}).
\end{align*}
Repeating the procedure, we get
\begin{align*}
\delta^k_1 &\leq 2\sum^{H-1}_{h=1}L_{H+1-h} \prod^{h-1}_{i=1}\exp(\beta r^k_i) c^k_{h} + \sum^{H-1}_{h=1} \prod^{h}_{i=1}\exp(\beta r^k_i) \epsilon^k_{h}+\prod^{H-1}_{i=1}\exp(\beta r^k_i) \delta^k_{H} \\
&\leq  2\sum^{H-1}_{h=1}(\exp(\beta(H+1-h))-1) \exp(\beta (h-1)) c^k_{h} + \sum^{H-1}_{h=1} \prod^{h}_{i=1}\exp(\beta r^k_i) \epsilon^k_{h}+ \exp(\beta (H-1)) \delta^k_{H}\\
&\leq 2\sum^{H-1}_{h=1}(\exp(\beta H)-1)c^k_{h} + \sum^{H-1}_{h=1} \prod^{h}_{i=1}\exp(\beta r^k_i) \epsilon^k_{h}+ \exp(\beta (H-1)) \delta^k_{H}.
\end{align*}
Thus
\[  \sum_{k=1}^K\delta^k_1\leq 2(\exp(\beta H)-1)\sum^K_{k=1}\sum^{H-1}_{h=1}c^k_{h} + \sum^K_{k=1}\sum^{H-1}_{h=1} \prod^{h}_{i=1}\exp(\beta r^k_i) \epsilon^k_{h}+\sum^K_{k=1} \exp(\beta (H-1)) \delta^k_{H}.\]
Now we bound each term separably. The first  term can be bounded  as
\begin{align*}
2(\exp(\beta (H+1))-1)\sum^K_{k=1}\sum^{H-1}_{h=1}c^k_h &=
2(\exp(\beta (H+1))-1)\sum^{H-1}_{h=1}\sum^K_{k=1} \sqrt{\frac{2S}{(N^k_h\vee1)}\iota}\\
&\leq 4(\exp(\beta (H+1))-1)\sum^{H-1}_{h=1}\sqrt{2S^2AK\iota}\\
&= 4(\exp(\beta (H+1))-1)(H-1)\sqrt{2S^2AK\iota}.
\end{align*}
Observe that
\begin{align*}
\prod^{h}_{i=1}\exp(\beta r^k_i) \epsilon^k_{h}\in \exp(\beta h)D_h&=\exp(\beta h)[1-\exp(\beta (H+1-h)),\exp(\beta (H+1-h))-1]\\
&\subseteq [1-\exp(\beta (H+1)),\exp(\beta (H+1))-1], 
\end{align*}
thus we can bound the second term by Azuma-Hoeffding inequality: with probability at least $1-\delta^{\prime}$, the following holds
\[ \sum^K_{k=1}\sum^{H-1}_{h=1}\prod^{h}_{i=1}\exp(\beta r^k_i)\epsilon^k_{h}\leq (\exp(\beta (H+1))-1)\sqrt{2KH\log(1/\delta^{\prime})}.\]
The third term can be bounded as
\begin{align*}
    &\sum^K_{k=1} \exp(\beta (H-1)) \delta^k_{H}=\exp(\beta (H-1))\sum^K_{k=1} W^k_H(s^k_H) -W^{\pi^k}_H(s^k_H)\\
    &=\exp(\beta (H-1)) \sum^K_{k=1} \mathbb{I}\{N^k_H=0\}\exp(\beta)+\mathbb{I}\{N^k_H>0\}\exp(\beta r_H(s^k_H))-\exp(\beta r_H(s^k_H))\\
    &\leq \exp(\beta (H-1))  (\exp(\beta)-1)\sum^K_{k=1} \mathbb{I}\{N^k_H=0\} \\
    &\leq (\exp(\beta H)-1) SA < (\exp(\beta (H+1))-1)SA
\end{align*}
Using a union bound and let $\delta=\delta^{\prime}=\frac{\tilde{\delta}}{2}$, we have that with probability at least $1-\delta$,
\begin{align*}
\text{Regret}(K)&\leq \frac{\exp(\beta (H+1))-1}{\beta}\prt{  4(H-1)\sqrt{2S^2AK\iota} +  \sqrt{2KH\iota} +  SA}\\
&=\tilde{\mathcal{O}}\prt{\frac{\exp(\beta H)-1}{\beta }\sqrt{HS^2AT}},
\end{align*}
where $\iota\triangleq\log(2SAT/\delta)$.

Now we consider the case $\beta<0$. Using similar arguments, we arrive at
\begin{align*}
\text{Regret}(K)&\leq \sum^K_{k=1}\frac{1}{\beta}\log\prt{W^{k}_1(s^k_1)}-\frac{1}{\beta}\log\prt{W^{\pi^k}_1(s^k_1)}\\
&= \sum^K_{k=1}\frac{1}{-\beta}\prt{\log\prt{W^{\pi^k}_1(s^k_1)} - \log\prt{W^{k}_1(s^k_1)} } \\
&\leq \frac{1}{-\beta\exp(\beta H)}\sum^K_{k=1}  W^{\pi^k}_1(s^k_1)-W^{k}_1(s^k_1),
\end{align*}
where the last inequality is due to that both $W^{\pi^k}_1(s^k_1)$ and $W^{k}_1(s^k_1)$ is larger than or equal to $\exp(\beta H)$. We can finally get 
\begin{align*}
    \text{Regret}(K)&\leq \tilde{\mathcal{O}}\prt{\frac{1-\exp(\beta H)}{-\beta\exp(\beta H)}\sqrt{HS^2AT}}=\tilde{\mathcal{O}}\prt{\frac{\exp(|\beta| H)-1}{|\beta|}\sqrt{HS^2AT}}.
\end{align*}
\end{proof}
\subsection{Computational inefficiency of \texttt{RODI-MF}} 
While \texttt{RODI-MF} enjoys near-optimal regret guarantee, it suffers from computational inefficiency, especially in contexts with a large number of states or a long horizon. For better illustration, let's consider a \textit{Markov Reward Process} with $S$ states at each step. In this setup, the transition kernel is uniform ($P_h(s'|s)=1/S$) for any $(h,s')\in[H-1]\times\cS$, and the reward function is bounded between 0 and 1 ($r_h(s)\in[0,1]$). Starting from the final step $H$, the return distribution $\eta_H(s)$ is a Dirac delta function centered at $r_H(s)$.  Applying the distributional Bellman equation at step at step $H-1$, we get
\[ \eta_{H-1}(s) = \sum_{s'} p_{H-1}(s'|s)\delta_{r_H(s')+r_{H-1}(s)}. \]
Recall that  $|\eta|$ represents the number of atoms (distinct elements) in a discrete distribution $\eta$, indicating the memory required to store this distribution.  Since $|\eta_H(s)|=|\delta_{r_H(s)}|=1$ for each $s\in\cS$, and $\eta_{H-1}(s)$ is a uniform mixture of all $\eta_{H}(s)$ shifted by $r_{H-1}(s)$, we find
\[  |\eta_{H-1}(s)|= \abs{\prt{r_{H-1}(s)+r_H(s'),\frac{1}{S}}_{s'\in\cS}}=\cO(S).\]
Continuing this process backwards through the time steps:
\begin{align*}
    &\abs{\eta_{H-2}(s)} = \cO(S^2) \\
    &\cdots \\\
    &|\eta_1(s)|=\cO(S^{H-1}).
\end{align*}
This analysis shows that the number of atoms in the return distribution \textit{exponentially} increases with the horizon $H$, scaled by the number of states $S$ at each application of the distributional Bellman operator. As a result, the memory and computational requirements to implement an \textit{exact} distributional RL algorithm like \texttt{RODI-MF} become prohibitive, particularly for problems with many states or a long horizon. This exponential growth in complexity highlights the computational challenges associated with \texttt{RODI-MF} and underscores the need for \textit{approximations} for practical implementations. 

\section{DRL with Distribution Representation}
\label{sec:rep}
To address the computational challenges in implementing the distributional Bellman equation, we introduce two versions of \texttt{RODI-MF} in the revised paper that utilize \emph{distribution representation}. A widely used method of distribution representation is the \emph{categorical representation}, as discussed in recent literature \citep{bdr2023}. This approach parameterizes the probability distribution at fixed locations. Specifically, we consider the simplest form of categorical representation that uses only two atoms. We refer to this as the \textit{Bernoulli representation}. It represents the set of all discrete distributions with two distinct atoms, denoted as $\theta=(\theta_1,\theta_2)$. The Bernoulli representation is formally defined as:
$$
\mathscr{F}_{\mathrm{B}}(\theta)=\left\{(1-p) \delta_{\theta_1}+p \delta_{\theta_2}: p \in[0,1]\right\} .
$$
With the Bernoulli representation in mind, let's consider distributional Bellman operator
\[ \eta_{h}(s,a) = [\mathcal{T}_h \nu_{h+1}](s,a)=\sum_{s'}P_h(s'|s,a)\nu_{h+1}(s')(\cdot-r_h(s,a)). \]
This operator essentially performs two basic operations: shifting and mixing. Specifically, it shifts each next-step return distribution by the reward $r_h(s,a)$ and then takes a mixture of these shifted distributions with the mixture coefficients $P_h(s,a)$. However, these operations might change and expand the support of the distributions, leading to:
\[ |\eta_{h}(s,a)| = \cO(S) |\nu_{h+1}|.\]
To improve computational efficiency, we introduce the Bernoulli representation for $\mathcal{T}_h$. Let
\[ \bar{\nu}_{h+1}(s)=(L_{h+1}(s),R_{h+1}(s);q_{h+1}(s))\in \mathscr{F}_{\mathrm{B}}(L_{h+1}(s),R_{h+1}(s))  \]
be a Bernoulli representation of the true return distribution $\nu_{h+1}(s)$, where $L_{h+1}(s)$ and $R_{h+1}(s)$ are the left and right atoms, and $q_{h+1}(s)$ is the probability at $R_{h+1}(s)$. Applying $\mathcal{T}_h$ to $\bar{\nu}_{h+1}$, we obtain
\[ [\mathcal{T}_h \bar{\nu}_{h+1}](s,a)=\prt{r_h(s,a)+L_{h+1}(s'),r_h(s,a)+R_{h+1}(s');p_h(s'|s,a)q_{h+1}(s')}_{s'\in \cS} \not\in \mathscr{F}_{\mathrm{B}}. \]
The result is no longer a Bernoulli distribution but a categorical distribution with $2S$ atoms. This demonstrates that the Bernoulli representation is not \emph{closed} under $\mathcal{T}_h$
\[  \nu \in \mathscr{F}_{\mathrm{B}} \not\Longrightarrow \mathcal{T}_h \nu \in \mathscr{F}_{\mathrm{B}}.\]
To overcome this issue, we introduce the \textit{Bernoulli projection operator}.  This operator serves as a mapping from the space of probability distributions to $\mathscr{F}_{\mathrm{B}}$, and we denote it as $\Pi:\mathscr{P}(\bR)\mapsto\mathscr{F}_{\mathrm{B}}$. Algorithmically, we add a projection step immediately after the application of $\mathcal{T}$, resulting in a \textit{projected distributional Bellman operator} $\Pi \mathcal{T}$. This projection ensures that each iteration of $\eta_{h}=\Pi \mathcal{T}_h \nu_{h+1}$ is representable using a limited amount of memory.

Note that the projection operator is not unique. Previous works \citep{bdr2023} have developed projection operators aiming to find the best approximation to a given probability distribution, as measured by a specific probability metric. Our approach introduces a novel type of Bernoulli projection that \emph{preserves the ERM value}, an essential aspect in risk-sensitive settings. Starting from a Dirac measure $\delta_c$, we define the \emph{value-equivalent Bernoulli projection operator} as:
\[ \Pi\delta_c\triangleq (1-q(c;\theta))\delta_{\theta_1}+q(c;\theta)\delta_{\theta_2}=(\theta_1,\theta_2;q(c;\theta)),\]
where the probability is defined as
\begin{equation}
\label{eqt:ber_prj}
   q(c;\theta)=\frac{e^{\beta c}-e^{\beta \theta_1}}{e^{\beta \theta_2}-e^{\beta \theta_1}}\in[0,1]. 
\end{equation}
It is easy to verify that $U_{\beta}(\Pi\delta_c)=U_{\beta}(\delta_c)=c, \forall c \in [\theta_1,\theta_2]$. Now we extend the definition to the categorical distributions as:
\begin{align*}
\Pi\prt{c_i,p_i}_{i\in[n]}&=\Pi\prt{\sum_{i\in[n]}p_i\delta_{c_i}}\triangleq\sum_i p_i \Pi \delta_{c_i} =\sum_i p_i \prt{ (1-q(c_i;\theta))\delta_{\theta_1}+q(c_i;\theta)\delta_{\theta_2} }\\
    &= \prt{\sum_i p_i (1-q(c_i;\theta))}\cdot\delta_{\theta_1}+\prt{\sum_i p_i q(c_i;\theta)}\cdot\delta_{\theta_2}\\
    &=\prt{\theta_1,\theta_2;\sum_i p_i q(c_i;\theta)}.
\end{align*}
Given that $\eu{\delta_{c_i}} =\eu{\Pi \delta_{c_i}}, \forall i \in[n]$, the linearity of EU implies
\begin{align*}
    \eu{\sum_i p_i\delta_{c_i}}=&\sum_i p_i\eu{\delta_{c_i}}=\sum_i p_i\eu{\Pi \delta_{c_i}} =\eu{\sum_i p_i\Pi \delta_{c_i}}=\eu{\Pi \sum_i p_i\delta_{c_i}}.
\end{align*}
This verifies the value equivalence of $\Pi$. 

To ensure the preservation of the value, the only requirement is that the interval  $[\theta_1,\theta_2]$ covers the support of the input distribution, i.e., $\theta_1\leq \min c_i\leq \max c_i\leq\theta_2$. The projection preserves the risk value of the original distribution, making it a powerful tool for efficient and accurate representation in DRL for RSRL.

Without the knowledge of MDP, \texttt{RODI-MF} deviates from the DDP in two crucial updates:
\begin{align*}
    \hat{\eta}_h &\leftarrow \hat{\mathcal{T}}_h \nu_{h+1} \\
    \tilde{\eta}_h &\leftarrow \rO_c \hat{\eta}_h.
\end{align*}
In \texttt{RODI-MF}, the \textit{approximate distributional Bellman operator} $\hat{\mathcal{T}}$ is applied first, which relies on the empirical transition $\hat{P}$ rather than the true transition $P$. Then, the \textit{distributional optimism operator} $\rO_c$ is used to generate an optimistic return distribution. Drawing from these observations, we propose two DRL algorithms with Bernoulli representation, differing in the order of projection and optimism operator. We term the two algorithms as \texttt{RODI-Rep}.

\subsection{DRL with Bernoulli Representation}
Given that $\eta_h\in D_{H+1-h}$, we set the \emph{uniform} location parameters as
\[  L_{h}\triangleq0, \quad R_{h}\triangleq H+1-h,\]
which is independent of $(s,a)$. We represent each iterate by a Bernoulli distribution
\[ \eta^k_h(s,a) = (1-q^k_h(s,a))\delta_{L_h}+q^k_h(s,a)\delta_{R_h}, \nu^k_h(s) = (1-q^k_h(s))\delta_{L_h}+q^k_h(s)\delta_{R_h},\]
where we overload the notation for $q^k_h(s,a)$ and $q^k_h(s)$. Applying the approximate $\cT_h$ to the Bernoulli represented $\nu^k_{h+1}\in \mathscr{F}_{\mathrm{B}}$ yields
\begin{align*}
    \eta^k_h(s,a)&=[\hat{\cT}_h\nu^k_{h+1}](s,a)= \sum_{s'\in\cS}\hat{P}^k_h(s'|s,a)\nu^k_{h+1}(s')(\cdot-r_h(s,a))\\
    &= \sum_{s'\in\cS}\hat{P}^k_h(s'|s,a)((1-q^k_h(s'))\delta_{L_{h+1}}+q^k_h(s')\delta_{R_{h+1}})(\cdot-r_h(s,a)) \\
    &= (1-[\hat{P}^k_h q^k_{h+1}](s,a))\cdot\delta_{r_h(s,a)+L_{h+1}} + [\hat{P}^k_h q^k_{h+1}](s,a)\cdot\delta_{r_h(s,a)+R_{h+1}} \\
    &= \prt{r_h(s,a)+L_{h+1},r_h(s,a)+R_{h+1};[\hat{P}^k_h q^k_{h+1}](s,a)}. 
\end{align*}
With slight abuse of notation, we let 
\[ L_h(s,a)\triangleq r_h(s,a)+L_{h+1}, \quad R_h(s,a)\triangleq r_h(s,a)+R_{h+1}. \]
$\eta^k_h(s,a)=\prt{L_h(s,a),R_h(s,a);[\hat{P}^k_h q^k_{h+1}](s,a)}$ is a Bernoulli distribution with support not corresponding to $L_h$ and $R_h$. Now, we propose two different algorithms differing in the order of projection and optimism operator.
\paragraph{Optimism-Then-Projection.} 
\texttt{RODI-OTP} applies the optimism operator first, followed by the projection operator:
\begin{align*}
    \eta^k_h &\leftarrow \Pi \rO_c \hat{\mathcal{T}}_h \nu^k_{h+1}. 
\end{align*}
Note that $\eta^k_h \leftarrow\hat{\mathcal{T}}_h \nu^k_{h+1}\in\mathscr{F}_{\mathrm{B}}(r_h(s,a)+L_{h+1},r_h(s,a)+R_{h+1})$. For Bernoulli distribution, the optimism operator admits a simple form
$$\rO_c \prt{a,b;p} = \prt{a,b;\min(p+c,1)}.$$
Applying optimism operator to $\eta^k_h$ yields
\begin{align*}
    \rO_{c^k_h(s,a)}\prt{\eta^k_h(s,a)}&=\rO_{c^k_h(s,a)}\prt{L_h(s,a), R_h(s,a); [\hat{P}^k_h q^k_{h+1}](s,a)}\\
        &=\prt{L_h(s,a), R_h(s,a); \min\prt{[\hat{P}^k_h q^k_{h+1}](s,a)+c^k_h(s,a),1}}.
\end{align*}
We can simplify the update in a parametric form 
\begin{align*}
    q^k_{h}(s,a) &\leftarrow [\hat{P}^k_h q^k_{h+1}](s,a)  \quad \text{parametric Bellman update}\\
    q^k_{h}(s,a) &\leftarrow \min(q^k_{h}(s,a)+c^k_h(s,a),1) \quad \text{optimism operator}.
\end{align*}
Finally, we apply the projection rule (cf. Equation \ref{eqt:ber_prj}) to obtain
\begin{align*}
    q^k_{h}(s,a) &\leftarrow (1-q^k_{h}(s,a))q(L_h(s,a);L_h,R_h)+q^k_{h}(s,a)q(R_h(s,a);L_h,R_h) \\
    &= (1-q^k_{h}(s,a))q^L_h(s,a)+q^k_{h}(s,a)q^R_h(s,a),
\end{align*}
where 
\begin{align*}
    &q^R_h(s,a)\triangleq q(L_h(s,a);L_h,R_h) = \frac{e^{\beta(r_h(s,a)+H-h)}-1}{e^{\beta (H+1-h)}-1}, \\
    &q^L_h(s,a)\triangleq q(R_h(s,a);L_h,R_h) = \frac{e^{\beta r_h(s,a)}-1}{e^{\beta (H+1-h)}-1}.
\end{align*}
\vspace{-2ex}
\begin{remark}
$q^R_h(s,a)$ and $q^L_h(s,a)$ are \emph{fixed} (independent of $k$) and \emph{known}. Therefore we can compute their values for all $(h,s,a)$ in advance.
\end{remark}
\paragraph{Projection-Then-Optimism.} \texttt{RODI-PTO} applies the projection operator first, followed by the optimism operator:
\begin{align*}
    \eta^k_h &\leftarrow \rO_c \Pi \hat{\mathcal{T}}_h \nu^k_{h+1}. 
\end{align*}
The update can be represented in a parametric form
\begin{align*}
    q^k_{h}(s,a) &\leftarrow [\hat{P}^k_h q^k_{h+1}](s,a)  \quad \text{parametric Bellman update}\\
    q^k_{h}(s,a) &\leftarrow (1-q^k_{h}(s,a))q^L_h(s,a)+q^k_{h}(s,a)q^R_h(s,a) \quad \text{projection operator} \\
    q^k_{h}(s,a) &\leftarrow \min(q^k_{h}(s,a)+c^k_h(s,a),1) \quad \text{optimism operator}.
\end{align*}
After applying optimism operator and projection operator, both \texttt{RODI-OTP} and \texttt{RODI-PTO} update the value functions and policies accordingly
\begin{align*}
    Q^k_h(s,a)&\leftarrow \frac{1}{\beta}\log\prt{1-q^k_{h}(s,a)+q^k_{h}(s,a)e^{\beta(H+1-h)}} \\
    \pi^k_h(s)&\leftarrow \arg\max_a Q^k_h(s,a), V^k_h(s)\leftarrow Q^k_h(s,\pi^k_h(s)) \\
    q^k_h(s)&\leftarrow q^k_h(s,\pi^k_h(s)).
\end{align*}
\textbf{Computational complexity.} The \emph{time complexity} of \texttt{RODI-OTP} and \texttt{RODI-PTO} is given as follows: i) computation of $q^L$ and $q^R$: $\cO(HSA)$; ii) parametric Bellman update: $KHSA\cdot\cO(S)$; iii) projection: $KHSA\cdot\cO(1)$; iv) optimism operator: $KHSA\cdot\cO(1)$; v) computation of $Q$-function: $KHSA\cdot\cO(1)$; vi) greedy policy: $KHS\cdot\cO(A\log A)$. Therefore, the total time complexity is given by 
\[  \cO(KHSA(S+\log A), \]
which is the same as that of \texttt{RSVI2}. The \emph{space complexity} of both algorithm is given as follows: i) $q^L$ and $q^R$: $\cO(HSA)$; ii) $N_h(s,a)$: $\cO(HSA)$; iii) trajectory $(s^k_h,a^k_h)_{k,h}$: $\cO(T)$; iv) probabilities $q_h(s,a)$: $\cO(HSA)$; v) action-value function: $\cO(HSA)$. Therefore, their total space complexity is $\cO(HSA+T)$.

\subsection{Optimism of DRL with Representation}
While \texttt{RODI-OTP} and \texttt{RODI-PTO} adapts \texttt{RODI-MF} by Bernoulli representation, they maintain the optimism mainly due to the value-equivalence property of the projection operator.
\paragraph{Optimism of \texttt{RODI-OTP}.} For simplicity, we rewrite the update formula of \texttt{RODI-OTP} as
\begin{align*}
    \hat{\eta}_h(s,a)&=[\hat{\cT}_h\nu_{h+1}](s,a)=\brk{\hat{P}_h \nu_{h+1}}[s,a](\cdot-r_h(s,a))\\
    \tilde{\eta}_h(s,a)&=\rO_{c_h(s,a)}\hat{\eta}_h(s,a)\\
    \eta_h(s,a)&=\Pi \tilde{\eta}_h(s,a)\\
    Q_h(s,a)&=\erm{\eta_h(s,a)},\pi_h(s)=\arg\max_a Q_h(s,a)\\
    \nu_h(s)&=\eta_h(s,\pi_h(s)).
\end{align*}
Define
\[ \check{\eta}_h(s,a)\triangleq[\cT_h\nu_{h+1}](s,a)=\brk{P_h \nu_{h+1}}[s,a](\cdot-r_h(s,a)), \]
which is the Bellman target that replaces $\hat{P}_h$ by the true model $P_h$. Note that $\nu_{h+1}\in \mathscr{F}_{\mathrm{B}}$ is the distribution generated by the algorithm, which is Bernoulli represented, rather than the optimal distribution $\nu^*_{h+1}$. Since 
\begin{align*}
    \norm{\check{\eta}_h(s,a)-\hat{\eta}_h(s,a)}_{\infty} &= \norm{\brk{\hat{P}_h \nu_{h+1}}[s,a](\cdot-r_h(s,a)) - \brk{P_h \nu_{h+1}}[s,a](\cdot-r_h(s,a))}_{\infty} \\
    &= \norm{\brk{\hat{P}_h \nu_{h+1}}[s,a] - \brk{P_h \nu_{h+1}}[s,a]}_{\infty} \\
    &\leq \norm{\hat{P}_h(s,a)-P_h(s,a)}_1\leq c_h(s,a),
\end{align*}
we have
\[  \tilde{\eta}_h(s,a)=\rO_{c_h(s,a)}\hat{\eta}_h(s,a)\succeq  \check{\eta}_h(s,a).\]
We can prove the argument by induction. Fix $h+1\in[2:H+1]$. Suppose $V_{h+1}=\erm{\eta_{h+1}}\ge\erm{\eta^*_{h+1}}=V^*_{h+1}$ for any $s$. It follows that 
\begin{align*}
    Q_h(s,a)&=\erm{\eta_h(s,a)}=\erm{\Pi \tilde{\eta}_h(s,a)}=\erm{\tilde{\eta}_h(s,a)}=\erm{\rO_{c_h(s,a)}\hat{\eta}_h(s,a)}\\
    &\ge\erm{\check{\eta}_h(s,a)}=\erm{\cT_h \nu_{h+1}}\\
    &\ge \erm{\cT_h \nu^*_{h+1}}=Q^*_h(s,a),
\end{align*}
which implies $V_h(s)\ge V_h^*(s)$ for any $s$. The induction is completed.
\paragraph{Optimism of \texttt{RODI-PTO}.}
We rewrite the update of $q_h(s,a)$ in \texttt{RODI-PTO} as:
\begin{align*}
    \hat{q}_{h}(s,a) &\leftarrow [\hat{P}_h q_{h+1}](s,a), \hat{\eta}_{h}(s,a)=\prt{L_h(s,a),R_h(s,a);\hat{q}_h(s,a)}   \\
    \bar{q}_{h}(s,a) &\leftarrow (1-\hat{q}_{h}(s,a))q^L_h(s,a)+\hat{q}_{h}(s,a)q^R_h(s,a),\bar{\eta}_{h}(s,a)=\prt{L_h,R_h;\bar{q}_h(s,a)}  \\
    q_{h}(s,a) &\leftarrow \min(\bar{q}_{h}(s,a)+c_h(s,a),1), \eta_h(s,a)=\prt{L_h,R_h;q_h(s,a)}.
\end{align*}
Define 
\begin{align*}
\check{q}_h(s,a)\triangleq \brk{P_h q_{h+1}}[s,a], \quad
\check{\eta}_h(s,a)\triangleq\prt{L_h(s,a),R_h(s,a);\check{q}_h(s,a)},
\end{align*}
then we have
\[ \Pi \check{\eta}_h(s,a) = \prt{L_h,R_h; (1-\check{q}_{h}(s,a))q^L_h(s,a)+\check{q}_{h}(s,a)q^R_h(s,a)}.\]
$\check{\eta}_h(s,a)$ and $\hat{\eta}_h(s,a)$ are both Bernoulli distributions with the same support, thus
\[ \norm{\check{\eta}_h(s,a)-\hat{\eta}_h(s,a)}_{\infty}=|\check{q}_h(s,a)-\hat{q}_h(s,a)|=\abs{\brk{(\hat{P}_h-P_h)q_{h+1}}(s,a)}\leq \norm{(\hat{P}_h-P_h)(s,a)}_1. \]
We have
\begin{align*}
   &\norm{\Pi\check{\eta}_h(s,a)-\Pi\hat{\eta}_h(s,a)}_{\infty}=\\
   &\abs{(1-\check{q}_{h}(s,a))q^L_h(s,a)+\check{q}_{h}(s,a)q^R_h(s,a) - (1-\hat{q}_{h}(s,a))q^L_h(s,a)-\hat{q}_{h}(s,a)q^R_h(s,a)}\\
   &=\abs{(\check{q}_{h}(s,a)-\hat{q}_{h}(s,a))(q^R_h(s,a)-q^L_h(s,a))}\\
   &=\abs{[(\hat{P}_h-P_h)q_{h+1}](s,a)(q^R_h(s,a)-q^L_h(s,a))}\\
   &= (q^R_h(s,a)-q^L_h(s,a))\norm{\check{\eta}_h(s,a)-\hat{\eta}_h(s,a)}_{\infty}\\
   &\leq (q^R_h(s,a)-q^L_h(s,a))\norm{\hat{P}_h(s,a)-P_h(s,a)}_1 \leq (q^R_h(s,a)-q^L_h(s,a))c_h(s,a)<c_h(s,a).
\end{align*}
Suppose $V_{h+1}=\erm{\eta_{h+1}}\ge\erm{\eta^*_{h+1}}=V^*_{h+1}$ for any $s$. Since $\eta_h(s,a)=\rO_{c_h(s,a)} \Pi\hat{\eta}_h(s,a) \succeq \Pi\check{\eta}_h(s,a)$, we have
\begin{align*}
    Q_h(s,a)&=\erm{\eta_h(s,a)}=\erm{\rO_{c_h(s,a)} \bar{\eta}_h(s,a)}=\erm{\rO_{c_h(s,a)} \Pi_h\hat{\eta}_h(s,a)}\ge\erm{\Pi_h\check{\eta}_h(s,a)}\\
    &=\erm{\check{\eta}_h(s,a)}=\erm{\brk{\cT_h \nu_{h+1}}(s,a))}\ge \erm{\brk{\cT_h \nu^*_{h+1}}(s,a))}=Q^*_h(s,a).
\end{align*}
which implies $V_h(s)\ge V_h^*(s)$ for any $s$. The induction is completed.

\section{RODI-MB}
\label{sec:rodi-mb}
We introduce the \hspace{-1ex}\tb{M}odel-\hspace{-1ex}\tb{B}ased \textbf{R}isk-sensitive  \textbf{O}ptimistic \textbf{D}istribution \textbf{I}teration algorithm (\texttt{RODI-MB}, cf. Algorithm \ref{alg:RODI-MB}). Unlike its model-free counterpart, \texttt{RODI-MB} explicitly maintains and updates an empirical transition model within each episode, making it a model-based approach. However, \texttt{RODI-MB} also encounters issues with computational inefficiency. Remarkably, \texttt{RODI-MB} is equivalent to a non-distributional RL algorithm (Algorithm \ref{alg:ROVI}). This equivalence results in computational efficiency, as it operates on one-dimensional values rather than full distributions. 

\emph{Planning phase} (Line 5-14) Mirroring the structure of Algorithm \ref{alg:RODI-MF}, \texttt{RODI-MB} also employs approximate DDP in conjunction with the OFU principle. Initially, it applies the distributional optimism operator to the empirical transition model $\hat{P}^k_h$, resulting in an optimistic transition model $\tilde{P}^k_h$. The algorithm then utilizes this optimistic model for the Bellman update, generating optimistic return distributions $\eta^k_h$. The subsequent steps remain consistent with those outlined in Algorithm \ref{alg:RODI-MF}.
\begin{algorithm}[H]
	\caption{\texttt{RODI-MB}}
	\label{alg:RODI-MB}
	\begin{algorithmic}[1]
		\State{Input: $T$ and $\delta$}
		\State{$N^1_h(\cdot,\cdot)\leftarrow0$; $\hat{P}^1_h(\cdot,\cdot)\leftarrow \frac{1}{S}\textbf{1}$  $\forall h\in[H]$}
		\For{$k = 1:K$}
		\State{$\nu^k_{H+1}(\cdot)\leftarrow\psi_0$}
		\For{$h = H:1$}
		\If{$N^k_h(\cdot,\cdot)>0$}
		    \State{$\tilde{P}^k_h(\cdot,\cdot)\leftarrow \rO^{1}_{c^k_{h}(\cdot,\cdot)}\prt{\hat{P}^k_h(\cdot,\cdot),\nu^k_{h+1}}$}
		    \State{$\eta^k_h(\cdot,\cdot)\leftarrow
			\brk{\cT\prt{\tilde{P}^k_h,r_h}\nu^{k}_{h+1}}(\cdot,\cdot)$}
		\Else{}
		    \State{$\eta^k_h(\cdot,\cdot)\leftarrow \delta_{H+1-h}$}
		\EndIf{}
		\State{$\pi^k_h(\cdot)\leftarrow\arg\max_{a}U_{\beta}(\eta^k_h(\cdot,a))$}
		\State{$\nu^k_h(\cdot)\leftarrow\eta^k_h(\cdot,\pi^k_h(\cdot))$}
		\EndFor
		\State{Receive $s^k_1$}
		\For{$h = 1:H$}
		\State{$a^k_h\leftarrow\pi^k_h(s^k_h)$ and transit to $s^k_{h+1}$}
		\State{Compute $N^{k+1}_h(\cdot,\cdot)$ and $\hat{P}^{k+1}_h(\cdot,\cdot)$ }
		\EndFor
		\EndFor
	\end{algorithmic}  
\end{algorithm}
\emph{Interaction phase} (Line 16-19) During the interaction phase, the agent engages with the environment using the policy $\pi^k$ and updates the counts $N^{k+1}_h$ and the empirical transition model $\hat{P}^{k+1}_h$ based on newly acquired observations.

\subsection{Distributional Optimism over the Model} 
In the \texttt{RODI-MB} algorithm, we introduce a nuanced approach to generating an optimistic transition model, $\tilde{P}^k_h(s,a)$, from the empirical transition model, $\widehat{P}^k_{h}(s,a)$. This approach is based on the concept of distributional optimism over the space of PMFs rather than CDFs. Specifically, the goal is to compute a return distribution, $\eta^k_h$, from $\tilde{P}^k_h(s,a)$ and the future return $\nu^k_{h+1}$, such that $\eta^k_h\ge\eta^*_h$ with high probability.

The distributional optimism operator, $\rO_c^1$, is defined for PMFs over the space $\sD(\cS)$ with a level $c$, and it operates differently from $\rO_c^{\infty}$ by also considering the future return distribution $\nu$:
\[	\rO_c^1\prt{\widehat{P}(s,a),\nu}\triangleq	\arg\max_{P\in B_1(\widehat{P}(s,a),c)}U_{\beta}([P\nu]).\]
This operator selects a model from within the $\ell_1$ norm ball, $B_1(\widehat{P}(s,a),c)$, that yields the largest EntRM value, $U_{\beta}([P\nu])$. This approach ensures that $\rO_c^1$ generates a model with optimistically biased estimates of the future returns, and it leverages an efficient method to achieve this (as detailed in Appendix \ref{app:property}).

Given that Lemma \ref{lem:main_event} assures the high-probability event $\mathcal{G}_{\delta}$, the analysis primarily focuses on scenarios conditioned on $\mathcal{G}_{\delta}$. Additionally, due to the equivalence between EntRM and EU, the verification of optimism is conducted in terms of EU for $\beta>0$.
\begin{lemma}[Optimistic model]
	\label{lem:opt_model}
For any $(s,a,k,h)$ and $P\in B_1(\hat{P}^k_h(s,a),c^k_{h}(s,a))$, we have
\[  E_{\beta}\prt{\brk{\tilde{P}^k_h\nu^k_{h+1}}(s,a)}\ge E_{\beta}\prt{\brk{P\nu^k_{h+1}}(s,a)}.\]
\end{lemma}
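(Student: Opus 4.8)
The plan is to observe that this lemma is essentially immediate from the way the optimistic model $\tilde{P}^k_h(s,a)$ is \emph{defined}, combined with the order-preserving correspondence between EntRM and EU for $\beta>0$. No concentration argument or hard estimate is needed here; the content has already been front-loaded into the definition of $\rO^1_c$ as an $\arg\max$.

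First I would recall that, by construction, $\tilde{P}^k_h(s,a)=\rO^1_{c^k_h(s,a)}(\hat{P}^k_h(s,a),\nu^k_{h+1})$, which the preceding display defines as a maximizer of $U_\beta([P\nu^k_{h+1}](s,a))$ over the feasible set $B_1(\hat{P}^k_h(s,a),c^k_h(s,a))$. Before invoking maximality I would briefly justify that the $\arg\max$ is attained and that $\tilde{P}^k_h(s,a)$ is a legitimate PMF in the ball: the $\ell_1$ ball intersected with the probability simplex is compact, and by linearity of EU the objective equals $\tfrac1\beta\log\big(\sum_{s'}P(s'|s,a)\,E_\beta(\nu^k_{h+1}(s'))\big)$, i.e. $\tfrac1\beta\log$ of a linear, strictly positive functional of $P$, which is continuous. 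Weierstrass then guarantees a maximizer.

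Then, for any $P\in B_1(\hat{P}^k_h(s,a),c^k_h(s,a))$, the defining maximality of $\tilde{P}^k_h(s,a)$ yields directly
\[ U_\beta\!\big([\tilde{P}^k_h\nu^k_{h+1}](s,a)\big)\ \ge\ U_\beta\!\big([P\nu^k_{h+1}](s,a)\big). \]
The last step transfers this from EntRM to EU. Since $E_\beta(F)=e^{\beta U_\beta(F)}$ and, for $\beta>0$, the map $x\mapsto e^{\beta x}$ is strictly increasing, the EntRM inequality is equivalent to $E_\beta([\tilde{P}^k_h\nu^k_{h+1}](s,a))\ge E_\beta([P\nu^k_{h+1}](s,a))$, which is exactly the claim. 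I would keep the write-up to essentially these two displayed lines.

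The only points requiring care — and they are minor rather than genuine obstacles — are the two just flagged. The first is attainment of the $\arg\max$, so that the operator and the inequality are even meaningful; this is the compactness/continuity remark above. The second is the role of the sign of $\beta$: the statement is proved for $\beta>0$, where maximizing $U_\beta$ coincides with maximizing $E_\beta$, so the EU inequality runs in the same direction as the EntRM one. For $\beta<0$ the exponential reverses order and $\rO^1_c$ must instead be read as selecting the model \emph{minimizing} $E_\beta$, so the EU inequality would flip; this is precisely why, as noted before the lemma, optimism is verified in the EU language specifically for $\beta>0$, with the $\beta<0$ case following by the symmetric argument.
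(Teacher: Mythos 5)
Your proposal is correct and follows exactly the paper's (one-line) proof: invoke the defining maximality of $\rO^1_c$ over the $\ell_1$ ball and then transfer the EntRM inequality to EU via the order-preserving map $E_{\beta}(F)=e^{\beta U_{\beta}(F)}$ for $\beta>0$. Your added remarks on attainment of the $\arg\max$ and on the sign of $\beta$ are sensible elaborations of details the paper leaves implicit, not a different route.
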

\begin{proof}
    Use the definition of $\rO_c^1$ and the  equivalence between EntRM and EU.
\end{proof}
\vspace{-3.5ex}
\begin{lemma}[Optimism]
Conditioned on event $\mathcal{G}_{\delta}$, the sequence $\{W^k_1(s^k_1)\}_{k\in[K]}$ produced by Algorithm \ref{alg:RODI-MB} are all greater than or equal to $W^*_1(s^k_1)$, i.e., 
\[ W^k_1(s^k_1)=E_{\beta}(\nu^k_1(s^k_1)) \ge E_{\beta}(\nu^*_1(s^k_1))=W^*_1(s^k_1), \forall k \in [K]. \]
\end{lemma}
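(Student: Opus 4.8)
The plan is to prove optimism for \texttt{RODI-MB} by backward induction on $h$, paralleling the structure of the proof of Proposition \ref{prop:main_opt_r} for \texttt{RODI-MF}, but now exploiting the model-level optimism established in Lemma \ref{lem:opt_model} in place of the CDF-level optimism operator. Throughout I would work conditioned on the good event $\mathcal{G}_{\delta}$ and verify the claim in terms of EU for $\beta > 0$ (the $\beta < 0$ case following analogously), since the exponential transformation preserves order and $U_{\beta}(\nu^k_1) \ge U_{\beta}(\nu^*_1)$ is equivalent to $W^k_1 \ge W^*_1$. The induction hypothesis would be that $W^k_{h+1}(s) \ge W^*_{h+1}(s)$ for all $s$, and the goal at each step is to propagate this to $J^k_h(s,a) \ge J^*_h(s,a)$ and hence $W^k_h(s) = \max_a J^k_h(s,a) \ge \max_a J^*_h(s,a) = W^*_h(s)$.

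First I would handle the base case at step $H$: the reward is deterministic so $\eta^k_H(s,a) = \delta_{r_H(s,a)}$ for visited pairs giving $J^k_H = J^*_H$, while unvisited pairs default to $\delta_{H+1-h}$ with $J^k_H = \exp(\beta) \ge J^*_H$, exactly as in the \texttt{RODI-MF} argument. For the inductive step at a visited pair $(s,a)$ with $N^k_h(s,a) > 0$, the key chain is
\begin{align*}
J^k_h(s,a) &= \exp(\beta r_h(s,a))\, E_{\beta}\!\left(\left[\tilde{P}^k_h \nu^k_{h+1}\right](s,a)\right) \\
&\ge \exp(\beta r_h(s,a))\, E_{\beta}\!\left(\left[P_h \nu^k_{h+1}\right](s,a)\right) \\
&= \exp(\beta r_h(s,a))\, \left[P_h W^k_{h+1}\right](s,a) \\
&\ge \exp(\beta r_h(s,a))\, \left[P_h W^*_{h+1}\right](s,a) = J^*_h(s,a).
\end{align*}
The first inequality is precisely Lemma \ref{lem:opt_model} applied with $P = P_h(s,a)$, which is legitimate because on $\mathcal{G}_{\delta}$ we have $\|\hat{P}^k_h(s,a) - P_h(s,a)\|_1 \le c^k_h(s,a)$, so the true transition lies in the $\ell_1$-ball $B_1(\hat{P}^k_h(s,a), c^k_h(s,a))$ over which $\tilde{P}^k_h$ maximizes the EU. The middle equality uses the linearity of EU to pull the expectation over next states outside, identifying $E_{\beta}([P_h \nu^k_{h+1}](s,a)) = [P_h W^k_{h+1}](s,a)$, and the second inequality is the induction hypothesis together with monotonicity of $P_h$ as a nonnegative mixture. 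The unvisited case again defaults to the maximal return $\delta_{H+1-h}$ and dominates trivially.

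The main obstacle I anticipate is the justification that $P_h(\cdot|s,a)$ genuinely belongs to the $\ell_1$-ball so that Lemma \ref{lem:opt_model} applies — this requires the good event $\mathcal{G}_{\delta}$ and the choice $c^k_h(s,a) = \sqrt{\tfrac{2S}{N^k_h(s,a)\vee 1}\iota}$, and care is needed at the boundary step $h = H$ where no transition optimism is applied (the return is purely the terminal reward). Beyond that, the argument is essentially a re-derivation of the \texttt{RODI-MF} optimism proof with the single substitution of distributional optimism over the model for distributional optimism over the CDF, so the remaining steps are routine given the linearity and monotonicity of EU already recorded in the excerpt.
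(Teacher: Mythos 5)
Your proposal is correct and follows essentially the same route as the paper's own proof: backward induction on $h$ with the identical base case at step $H$, the first inequality supplied by Lemma \ref{lem:opt_model} (valid because on $\mathcal{G}_{\delta}$ the true transition $P_h(\cdot|s,a)$ lies in the $\ell_1$-ball $B_1(\hat{P}^k_h(s,a), c^k_h(s,a))$), and the second inequality from linearity of EU combined with the induction hypothesis, with unvisited pairs dominating trivially. Your rendering is in fact slightly more explicit than the paper's, since you spell out the intermediate identity $E_{\beta}\prt{\brk{P_h \nu^k_{h+1}}(s,a)} = \brk{P_h W^k_{h+1}}(s,a)$, which the paper leaves implicit in its appeal to the induction assumption.
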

The proof uses induction, paralleling the methodology in \texttt{RODI-MF}, and leverages Lemma \ref{lem:opt_model} to ensure that the return distributions are optimistically biased.
\begin{proof}
The induction begins with the terminal stage, $H$, and progresses backwards. For the visited $(s,a)$, we have
\begin{align*}
	J^k_H(s,a)=E_{\beta}(\eta^k_H(s,a))&=\exp(\beta r_H(s,a)) =J^*_H(s,a).
\end{align*}
For the unvisited $(s,a)$, it holds that $J^k_H(s,a)=\exp(\beta) \ge J^*_H(s,a)$. Thus $W^k_H(s)=\max_a J^k_H(s,a)\ge\max_a J^*_H(s,a) =W^*_H(s)$ for any $s$. Assuming $W^k_{h+1}(s)\ge W^*_{h+1}(s),\forall s$ for $h\in[H-1]$. It follows that for the $(s,a)$ with $N^k_h(s,a)>0$
	\begin{align*}
	J^k_h(s,a)&= \exp(\beta r_h(s,a))E_{\beta}\prt{\brk{\tilde{P}^k_h\nu^k_{h+1}}(s,a)} \ge  \exp(\beta r_h(s,a))E_{\beta}\prt{\brk{P_h\nu^k_{h+1}}(s,a)}\\
	&\ge  \exp(\beta r_h(s,a))E_{\beta}\prt{\brk{P_h\nu^*_{h+1}}(s,a)}=J^{*}_h(s,a).
	\end{align*}
The first inequality is due to Lemma \ref{lem:opt_model}. The second inequality follows from the induction assumption. For the unvisited $(s,a)$, we have $J^k_h(s,a)=\exp(\beta (H+1-h))\ge J^*_h(s,a)$. Since  $W^k_h(s)=\max_a J^k_h(s,a) \ge \max_a J^*_h(s,a)=W^*_h(s)$ for any $s$, the induction is completed.
\end{proof}
\subsection{Equivalence to \texttt{ROVI}}
The \textbf{R}isk-sensitive  \textbf{O}ptimistic \textbf{V}alue \textbf{I}teration (\texttt{ROVI}) algorithm, as outlined in Algorithm \ref{alg:ROVI}, is a non-distributional approach that processes value functions directly, as opposed to handling return distributions. The \texttt{RODI-MB} algorithm, however, can be demonstrated to be equivalent to \texttt{ROVI}. This equivalence signifies that both algorithms generate the same policy sequence, implying that their resulting trajectories, denoted as $\cF_{K+1}$, follow the same distribution. This relationship is grounded in the connection between the EntRM and EU, coupled with the linearity property of EU. To formalize this concept of algorithmic equivalence, we define:
\begin{algorithm}[H]
	\caption{\texttt{ROVI}}
	\label{alg:ROVI}
	\begin{algorithmic}[1]
		\State{Input: $T$ and $\delta$}
		\State{$N^1_h(\cdot,\cdot)\leftarrow0$; $\hat{P}^1_h(\cdot,\cdot)\leftarrow \frac{1}{S}\textbf{1}$ $\forall h\in[H]$}
		\For{$k = 1:K$}
		\State{$W^k_{H+1}(\cdot)\leftarrow1$}
		\For{$h = H:1$}
		\If{$N^k_h(\cdot,\cdot)>0$}
		    \State{$\tilde{P}^k_h(\cdot,\cdot)\leftarrow \rO^1_{c^k_h(\cdot,\cdot)}\prt{\hat{P}^k_h(\cdot,\cdot),W^k_{h+1}}$}
		    \State{$J^k_h(\cdot,\cdot)\leftarrow e^{\beta r_h(\cdot,\cdot)}\brk{\tilde{P}^k_h W^k_{h+1}}(\cdot,\cdot)$}
		\Else{}
		    \State{$J^k_h(\cdot,\cdot)\leftarrow \exp(\beta(H+1-h))$}
		\EndIf{}
		\State{$W^k_h(\cdot)\leftarrow\max_a J^k_h(\cdot,a)$}
		\EndFor
		\State{Receive $s^k_1$}
		\For{$h = 1:H$}
		\State{$a^k_h\leftarrow \arg\max_a \text{sign}(\beta)J^k_h(s^k_h,a)$ and transit to $s^k_{h+1}$}
		\State{Compute $N^{k+1}_h(\cdot,\cdot)$ and $\hat{P}^{k+1}_h(\cdot,\cdot)$}
		\EndFor
		\EndFor
	\end{algorithmic}
\end{algorithm}
\begin{definition}
For two algorithms $\sA$ and $\tilde{\sA}$, we say that $\sA$ is equivalent to $\tilde{\sA}$ (vice versa) if for any $k\in[K]$, any $\cF_k$, it holds that $\sA(\cF_k)=\tilde{\sA}(\cF_k)$.
\end{definition}
Under this definition, if two algorithms are equivalent, the trajectories or histories generated by their interactions with any MDP instance will follow the same distribution throughout the episodes. Consequently, these algorithms will enjoy the same regret. 
\begin{proposition}
	\label{prop:equiv}
	Algorithm \ref{alg:RODI-MB} is equivalent to Algorithm \ref{alg:ROVI}.
\end{proposition}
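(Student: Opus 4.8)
The plan is to fix an episode index $k$ and a history $\cF_k$, and to prove that the two algorithms produce identical policies $\pi^k_{1:H}$; since this will hold for every $k$ and every $\cF_k$, the equivalence in the sense of the preceding definition follows immediately. The first observation is that the counts $N^k_h(s,a)$, the empirical model $\hat{P}^k_h(s,a)$, and hence the radii $c^k_h(s,a)$ are deterministic functions of $\cF_k$ and are initialized and updated identically in both algorithms, so these quantities coincide. The whole argument then reduces to tracking how the distributional iterates of \texttt{RODI-MB} and the scalar iterates of \texttt{ROVI} stay in correspondence.

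The core is a backward induction on $h$ from $H+1$ down to $1$ establishing the invariant
\[ W^k_h(s)=E_{\beta}\prt{\nu^k_h(s)}\ \forall s, \qquad J^k_h(s,a)=E_{\beta}\prt{\eta^k_h(s,a)}\ \forall (s,a). \]
The base case is immediate: $\nu^k_{H+1}=\psi_0=\delta_0$ gives $E_{\beta}(\nu^k_{H+1})=1=W^k_{H+1}$. For the inductive step two elementary properties of EU do all the work. \emph{Linearity} gives $E_{\beta}\prt{[P\nu^k_{h+1}](s,a)}=[P\,E_{\beta}(\nu^k_{h+1})](s,a)=[P W^k_{h+1}](s,a)$, turning the distributional push-forward into the scalar one, and the \emph{translation} identity $E_{\beta}(F(\cdot-r))=e^{\beta r}E_{\beta}(F)$ accounts for the reward shift in $\cT$. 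Combining these with the induction hypothesis yields, on visited $(s,a)$, the identity $E_{\beta}(\eta^k_h(s,a))=e^{\beta r_h(s,a)}[\tilde{P}^k_h W^k_{h+1}](s,a)=J^k_h(s,a)$; on unvisited $(s,a)$ both sides equal $e^{\beta(H+1-h)}$. Finally $\nu^k_h=\eta^k_h(\cdot,\pi^k_h(\cdot))$ together with order-preservation of the exponential transform propagates the invariant from $J$ to $W$.

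Two steps inside the induction deserve care and are where I expect the main work. First, I must show the optimistic models coincide, i.e.\ that $\rO^1_{c^k_h}(\hat{P}^k_h,\nu^k_{h+1})$ in \texttt{RODI-MB} equals $\rO^1_{c^k_h}(\hat{P}^k_h,W^k_{h+1})$ in \texttt{ROVI}. By linearity the objective $U_{\beta}([P\nu^k_{h+1}])=\frac{1}{\beta}\log[P W^k_{h+1}]$ depends on $\nu^k_{h+1}$ only through the EU vector $W^k_{h+1}=E_{\beta}(\nu^k_{h+1})$, which equals \texttt{ROVI}'s iterate by the induction hypothesis; since $\frac{1}{\beta}\log$ is monotone, maximizing $U_{\beta}([P\nu^k_{h+1}])$ over the common ball $B_1(\hat{P}^k_h(s,a),c^k_h(s,a))$ is the same optimization problem \texttt{ROVI} solves, so under a fixed tie-breaking rule the maximizer $\tilde{P}^k_h$ is identical. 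Second, the greedy step: maximizing $U_{\beta}(\eta^k_h(s,\cdot))$ is equivalent, by the order-preservation property stated earlier, to maximizing $\text{sign}(\beta)E_{\beta}(\eta^k_h(s,\cdot))=\text{sign}(\beta)J^k_h(s,\cdot)$, which is exactly the action rule of \texttt{ROVI}; hence $\pi^k_h$ agrees across the two algorithms. The main obstacle is thus the careful reduction of the $\ell_1$-ball model optimization to a purely scalar one via linearity, and handling possible non-uniqueness of the $\arg\max$ through a shared tie-breaking convention; the $\beta<0$ case follows by the same reasoning with the order reversed, exactly as recorded by the $\text{sign}(\beta)$ factors.
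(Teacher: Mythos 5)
Your proof is correct and follows essentially the same route as the paper's: a backward induction on $h$ maintaining the invariant $J^k_h=E_{\beta}(\eta^k_h)$ (hence $W^k_h=E_{\beta}(\nu^k_h)$), driven by the linearity of EU, the translation identity $E_{\beta}(F(\cdot-r))=e^{\beta r}E_{\beta}(F)$, and order-preservation of the exponential transform for the greedy step. You are in fact slightly more careful than the paper, which asserts that the two algorithms ``share the optimistic transition model'' without spelling out the linearity-plus-monotonicity argument you give for $\rO^1_c$, and which leaves the tie-breaking convention in the $\arg\max$ implicit.
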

\begin{proof}
We focus on the case where $\beta>0$, noting that the case for $\beta<0$ can be argued in a similar manner. Fix an arbitrary $k\in[K]$ and $\cF_{k}=\{s^1_1,a^1_1,\cdots,s^{k-1}_H,a^{k-1}_H\}$.  Let $\sA$ (and ${\pi^k_h}$) represent Algorithm \ref{alg:ROVI} (and its corresponding policy sequence), while $\tilde{\sA}$ (and ${\tilde{\pi}^k_h}$) denote Algorithm \ref{alg:RODI-MB} (and its respective policy sequence). To establish equivalence, we need to show that $\pi^k$ aligns with $\tilde{\pi}^k$ for the given history $\cF_{k}$. By the definition of the two algorithms
	\[	\tilde{\pi}^k_h(s)=\arg\max_{a}Q^k_h(s,a)=U_{\beta}(\eta^k_h(s,a)),\ \pi^k_h(s)=\arg\max_{a}J^k_h(s,a).		\]
	If $J^k_h(s,a)=E_{\beta}(\eta^k_h(s,a))=\exp(\beta Q^k_h(s,a))$ for any $(s,a)$, then $\pi^k_h=\tilde{\pi}^k_h$ due to the monotonicity of the exponential function. We will prove that $J^k_h(s,a)=E_{\beta}(\eta^k_h(s,a))$ for any $(s,a)$ by the induction. The base case is evident as $J^k_{H}(s,a)=E_{\beta}(\eta^k_H(s,a))$. Assuming $J^k_h(s,a)=E_{\beta}(\eta^k_h(s,a))$ for all $(s,a)$ for some $h\in[H]$, we have $\pi^k_h=\tilde{\pi}^k_h$ and 
	\begin{align*}
	W^k_h(s)&=\max_a J^k_h(s,a)= J^k_h(s,\pi^k_h(s))= E_{\beta}(\eta^k_h(s,\pi^k_h(s)))=E_{\beta}(\eta^k_h(s,\tilde{\pi}^k_h(s)))=E_{\beta}(\nu^k_h(s)).
	\end{align*}
	Given the same history $\cF_k$, both algorithms share the empirical transition model $\hat{P}^k_{h-1}$, the count $N^k_{h-1}$, and the optimism constants $c^k_{h-1}$. Therefore, they also share the optimistic transition model $\tilde{P}^k_{h-1}$. According to the update formula of Algorithm \ref{alg:ROVI}, for any $(s,a)$ with $N^k_h(s,a)>0$, we have
	\begin{align*}
	J^k_{h-1}(s,a)&=\exp(\beta r_h(s,a))\brk{\tilde{P}^k_{h-1} W^k_h}(s,a)=\exp(\beta r_h(s,a))E_{\beta}\prt{\brk{\tilde{P}^k_{h-1} \nu^k_h}(s,a)}\\
	&=E_{\beta}\prt{\brk{\mathcal{B}(\tilde{P}^k_{h-1},r_{h-1})\nu^{k}_{h}}(s,a)}= E_{\beta}\prt{\eta^k_{h-1}(s,a)}.
	\end{align*}
This equality also holds for the unvisited state-action pairs, thereby completing the proof of equivalence between Algorithm \ref{alg:RODI-MB} and Algorithm \ref{alg:ROVI}.
\end{proof}

\subsection{Regret Upper Bound of \texttt{RODI-MB}/\texttt{ROVI}}
\begin{theorem}[Regret upper bound of \texttt{RODI-MB}/\texttt{ROVI}]
	\label{thm:rovi}
	For any $\delta\in(0,1)$, with probability $1-\delta$, the regret of Algorithm \ref{alg:RODI-MF} or Algorithm \ref{alg:ROVI} is bounded as
	\begin{align*}
	\text{Regret}(\texttt{RODI-MF},K)=\text{Regret}(\texttt{ROVI},K) &\leq \mathcal{O}(L_{H}H\sqrt{S^2AK\log(4SAT/\delta)})\\
	&=\tilde{\mathcal{O}}\prt{\frac{\exp(|\beta| H)-1}{|\beta|}H\sqrt{S^2AK}}. 
	\end{align*}
\end{theorem}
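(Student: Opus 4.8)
The plan is to exploit the equivalence established in Proposition~\ref{prop:equiv}: since \texttt{RODI-MB} and \texttt{ROVI} generate identical policy sequences on every history $\cF_k$, they incur identical regret, so it suffices to bound either one. I would carry out the analysis in the distributional language of \texttt{RODI-MB} so as to reuse the optimism machinery, while freely invoking the linearity of EU (so that $E_{\beta}([P\nu]) = [P\,W]$ with $W = E_{\beta}(\nu)$ the vector of EU values) to keep the per-step errors from compounding. First, condition on the good event $\mathcal{G}_{\delta}$, which holds with probability at least $1-\delta$ by Lemma~\ref{lem:main_event}. On this event the optimism lemma for \texttt{RODI-MB} gives $W^k_1(s^k_1) \ge W^*_1(s^k_1)$ for all $k$. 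Exactly as in the proof of Theorem~\ref{thm:rodi-mf}, I would then pass from the EntRM regret to the EU regret through the logarithm: using optimism together with Lemma~\ref{lem:log} and the fact that the relevant EU values exceed $1$ (for $\beta>0$), one obtains $\text{Regret}(K) \le \frac{1}{\beta}\sum_{k}\big(W^k_1(s^k_1)-W^{\pi^k}_1(s^k_1)\big) = \frac{1}{\beta}\sum_k \delta^k_1$, with $\delta^k_h \triangleq W^k_h(s^k_h)-W^{\pi^k}_h(s^k_h)$.

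Next I would set up the one-step recursion for $\delta^k_h$. Writing $\nu^k_h(s^k_h) = [\tilde{P}^k_h \nu^k_{h+1}](s^k_h,a^k_h)(\cdot - r^k_h)$ and $\nu^{\pi^k}_h(s^k_h) = [P^{\pi^k}_h \nu^{\pi^k}_{h+1}](s^k_h)(\cdot - r^k_h)$, and factoring out $\exp(\beta r^k_h)$ via translation invariance, I would insert the intermediate terms $E_{\beta}([\hat{P}^k_h\nu^k_{h+1}])$ and $E_{\beta}([P^{\pi^k}_h\nu^k_{h+1}])$ to split $\delta^k_h$ into three pieces: (a) a model-optimism gap $E_{\beta}([\tilde{P}^k_h\nu^k_{h+1}]) - E_{\beta}([\hat{P}^k_h\nu^k_{h+1}])$, (b) a model-estimation error $E_{\beta}([\hat{P}^k_h\nu^k_{h+1}]) - E_{\beta}([P^{\pi^k}_h\nu^k_{h+1}])$, and (c) a propagation term $E_{\beta}([P^{\pi^k}_h\nu^k_{h+1}]) - E_{\beta}([P^{\pi^k}_h\nu^{\pi^k}_{h+1}])$. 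For (a) and (b), both $\tilde{P}^k_h$ and $P^{\pi^k}_h$ lie within $\ell_1$-distance $c^k_h$ of $\hat{P}^k_h$ (for $\tilde{P}^k_h$ by its defining $\arg\max$ over $B_1(\hat{P}^k_h, c^k_h)$, and for $P^{\pi^k}_h$ by the definition of $\mathcal{G}_{\delta}$), so Lemma~\ref{lem:mix_dis} converts this $\ell_1$ model deviation into a supremum-norm CDF deviation and the Lipschitz property of EU (Lemma~\ref{lem:eu_lip}) yields $(a),(b) \le L_{H-h}c^k_h$. For (c), linearity of EU gives exactly $\exp(\beta r^k_h)[P^{\pi^k}_h \Delta^k_{h+1}](s^k_h) = \exp(\beta r^k_h)(\delta^k_{h+1}+\epsilon^k_h)$, where $\epsilon^k_h \triangleq [P^{\pi^k}_h\Delta^k_{h+1}](s^k_h) - \Delta^k_{h+1}(s^k_{h+1})$ is a martingale difference. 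This yields the recursion $\delta^k_h \le 2L_{H+1-h}c^k_h + \exp(\beta r^k_h)(\delta^k_{h+1}+\epsilon^k_h)$.

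Finally I would unroll the recursion and sum. Using $\prod_{i<h}\exp(\beta r^k_i) \le \exp(\beta(h-1))$ and the bound $L_{H-h}\exp(\beta(h-1)) \le \exp(\beta H)-1$, the unrolled inequality becomes $\delta^k_1 \le 2(\exp(\beta H)-1)\sum_{h} c^k_h + (\text{martingale}) + (\text{terminal})$. Summing over $k$: the leading term is controlled by the standard pigeonhole bound $\sum_{k,h}\sqrt{2S\iota/(N^k_h\vee 1)} = \mathcal{O}(H\sqrt{S^2AK\iota})$; the martingale term is bounded by Azuma--Hoeffding with range $\exp(\beta(H+1))-1$, contributing $\mathcal{O}((\exp(\beta(H+1))-1)\sqrt{KH\iota})$; and the terminal term contributes $\mathcal{O}((\exp(\beta H)-1)SA)$. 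Dividing by $\beta$ gives the claimed $\tilde{\mathcal{O}}\big(\frac{\exp(\beta H)-1}{\beta}H\sqrt{S^2AK}\big)$ for $\beta>0$, and the $\beta<0$ case follows by the symmetric argument in Theorem~\ref{thm:rodi-mf} (using that the EU values exceed $\exp(\beta H)$ and flipping the sign of the logarithm step). The main obstacle, and the only genuine departure from the \texttt{RODI-MF} proof, is handling the model-level optimism in term (a): unlike the CDF-level operator $\rO^{\infty}_c$, whose dominance is immediate, the operator $\rO^1_c$ is defined as an $\arg\max$ over an $\ell_1$ ball, so I must verify that its output stays in that ball and then transfer the $\ell_1$ model bound to a supremum-norm distributional bound via Lemma~\ref{lem:mix_dis} before the EU Lipschitz estimate applies; linearity of EU is exactly what keeps term (c) loss-free and thereby prevents an $\exp(\beta H^2)$ blow-up.
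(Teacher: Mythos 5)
Your proposal is correct and follows essentially the same route as the paper's proof: condition on $\mathcal{G}_{\delta}$, pass from EntRM regret to EU regret via optimism and Lemma \ref{lem:log}, set up the per-step recursion combining the EU Lipschitz bound (through Lemma \ref{lem:mix_dis}) with the martingale propagation term made loss-free by linearity of EU, then unroll and finish with the pigeonhole bound, Azuma--Hoeffding, and the terminal term. The only difference is that you insert the empirical model $\hat{P}^k_h$ as an intermediate point (giving three terms and the factor $2L_{H+1-h}c^k_h$), whereas the paper compares $\tilde{P}^k_h$ to $P^{\pi^k}_h$ directly in a single term; your version is in fact slightly more careful, since the paper's stated bound $\norm{\tilde{P}^k_h-P^{\pi^k}_h}_1\leq c^k_h$ should be $2c^k_h$ by the triangle inequality, a constant immaterial to the final $\tilde{\mathcal{O}}$ bound.
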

\begin{proof}
The regret can be bounded as
\begin{align*}
\text{Regret}(K)\leq \frac{1}{\beta}\sum^K_{k=1} W^{k}_1(s^k_1)-W^{\pi^k}_1(s^k_1)=\frac{1}{\beta}\sum^K_{k=1}\delta^k_1.
\end{align*}
We can decompose $\delta^k_h$ as follows
\begin{align*}
\delta^k_h &= E_{\beta}(\nu^k_h(s^k_h))-E_{\beta}(\nu^{\pi^k}_h(s^k_h))  \\
&=\exp(\beta r^k_h) E_{\beta}\prt{\brk{\tilde{P}^k_h\nu^k_{h+1}}(s^k_h)}-\exp(\beta r^k_h)E_{\beta}\prt{\brk{P^{\pi^k}_h\nu^{\pi^k}_{h+1}}(s^k_h)}\\
&=\underbrace{ \exp(\beta r^k_h) E_{\beta}\prt{\brk{\tilde{P}^k_h\nu^k_{h+1}}(s^k_h)}-\exp(\beta r^k_h)E_{\beta}\prt{\brk{P^{\pi^k}_h\nu^{k}_{h+1}}(s^k_h) }}_{(a)}\\
&+\underbrace{ \exp(\beta r^k_h)E_{\beta}\prt{\brk{P^{\pi^k}_h\nu^{k}_{h+1}}(s^k_h) }-\exp(\beta r^k_h)E_{\beta}\prt{\brk{P^{\pi^k}_h\nu^{\pi^k}_{h+1}}(s^k_h) } }_{(b)}\\
&=\underbrace{ \exp(\beta r^k_h)\brk{\tilde{P}^k_h W^k_{h+1}}(s^k_h)-\exp(\beta r^k_h)\brk{P^{\pi^k}_h W^k_{h+1}}(s^k_h) }_{(a)}\\ 
&+\underbrace{ \exp(\beta r^k_h)\brk{P^{\pi^k}_h W^k_{h+1}}(s^k_h) -\exp(\beta r^k_h)\brk{P^{\pi^k}_h W^{\pi^k}_{h+1}}(s^k_h) }_{(b)}.
\end{align*}
Using the Lipschitz property of EU
\begin{align*}
(a) &\leq  L_{H+1-h}\norm{\brk{\tilde{P}^k_h\nu^k_{h+1}}(s^k_h)(\cdot-r_h^k)-\brk{P^{\pi^k}_h\nu^k_{h+1}}(s^k_h)(\cdot-r_h^k)  }_{\infty}\\
&= L_{H+1-h}\norm{\brk{\tilde{P}^k_h\nu^k_{h+1}}(s^k_h)-\brk{P^{\pi^k}_h\nu^k_{h+1}}(s^k_h) }_{\infty} \\
&\leq L_{H+1-h}\norm{\tilde{P}^k_h-P^{\pi^k}_h}_1 \leq L_{H+1-h} c^k_{h}\\
&= (\exp(\beta(H+1-h))-1)c^k_{h},
\end{align*}
where the second inequality is due to Lemma \ref{lem:mix_dis}.  Term $(b)$ is bounded as
\begin{align*}
(b) &= \exp(\beta r^k_h)\brk{P^{\pi^k}_h(W^k_{h+1}-W^{\pi^k}_{h+1})}(s^k_h)
= \exp(\beta r^k_h)\brk{P^{\pi^k}_h\Delta^k_{h+1}}(s^k_h)\\
&= \exp(\beta r^k_h)(\epsilon^k_h+\delta^k_{h+1}),
\end{align*}
where $\epsilon^k_h\triangleq[P^{\pi^k}_h \Delta^k_{h+1}](s^k_h) -\Delta^k_{h+1}(s^k_{h+1})$ is a  martingale difference sequence with $\epsilon^k_h \in 2D_{h+1}$ a.s. for all $(k,h)\in[K]\times[H]$. In summary, we can bound $\delta^k_h$ recursively as 
\begin{align*}
\delta^k_h &\leq L_{H+1-h}c^k_h + \exp(\beta r^k_h)(\epsilon^k_h+\delta^k_{h+1}).
\end{align*}
Repeating the procedure, we can get
\begin{align*}
\delta^k_1 &\leq \sum^{H-1}_{h=1}L_{H+1-h} \prod^{h-1}_{i=1}\exp(\beta r^k_i) c^k_{h} + \sum^{H-1}_{h=1} \prod^{h}_{i=1}\exp(\beta r^k_i) \epsilon^k_{h}+\prod^{H-1}_{i=1}\exp(\beta r^k_i) \delta^k_{H} \\
&\leq  \sum^{H-1}_{h=1}(\exp(\beta(H+1-h))-1) \exp(\beta (h-1)) c^k_{h} + \sum^{H-1}_{h=1} \prod^{h}_{i=1}\exp(\beta r^k_i) \epsilon^k_{h}+ \exp(\beta (H-1)) \delta^k_{H}\\
&\leq \sum^{H-1}_{h=1}(\exp(\beta H)-1)c^k_{h} + \sum^{H-1}_{h=1} \prod^{h}_{i=1}\exp(\beta r^k_i) \epsilon^k_{h}+ \exp(\beta (H-1)) \delta^k_{H}.
\end{align*}
It follows that
\[  \sum_{k=1}^K\delta^k_1\leq (\exp(\beta H)-1)\sum^K_{k=1}\sum^{H-1}_{h=1}c^k_{h} + \sum^K_{k=1}\sum^{H-1}_{h=1} \prod^{h}_{i=1}\exp(\beta r^k_i) \epsilon^k_{h}+\sum^K_{k=1} \exp(\beta (H-1)) \delta^k_{H}.\]

The following follows analogously: with probability at least $1-\delta$,
\begin{align*}
\text{Regret}(K)&\leq \frac{\exp(\beta (H+1))-1}{\beta}\prt{  2(H-1)\sqrt{2S^2AK\iota} +  \sqrt{2KH\iota} +  SA}\\
&=\tilde{\mathcal{O}}\prt{\frac{\exp(\beta H)-1}{\beta H}H\sqrt{HS^2AT}},
\end{align*}
where $\iota\triangleq\log(2SAT/\delta)$.
\end{proof}
\vspace{-3ex}
\begin{remark}
Compared to the traditional/non-distributional analysis dealing with scalars, our analysis is distribution-centered, and we call it the \emph{distributional analysis}. The distributional analysis deals with the distributions of the return rather than the risk measure values of the return. In particular, it involves the operations of the distributions, the optimism between different distributions, the error caused by estimation of distribution, etc. These distributional aspects fundamentally differ from the traditional analysis that deals with the  scalars (value functions). 
\end{remark}

\section{Regret Lower Bound}
\label{sec:lb}
The section establishes a regret lower bound for EntRM-MDP, serving to understand the fundamental limitations of any learning algorithm in such settings. While previous works, like \cite{fei2020risk}, have approached this problem by drawing parallels to simpler models like the two-armed bandit, leading to lower bounds that are independent of $S, A$, and $H$, this approach does not capture the full complexity of MDPs.

In contrast, the approach motivated by \cite{domingues2021episodic} aims to derive a more comprehensive and tight minimax lower bound that incorporates these factors. For risk-neutral MDPs, the tight minimax lower bound has been established as $H\sqrt{SAT}$, but extending this to the risk-sensitive domain is challenging due to the non-linearity of EntRM. In risk-neutral scenarios, the linearity of expectation allows for the interchange of the risk measure (expectation) and summation, simplifying the analysis. However, in the risk-sensitive setting, the non-linear nature of EntRM precludes such straightforward manipulations, necessitating novel proof techniques.
\begin{assumption}
\label{asp:lb_exp}
Assume $S\ge6, A\ge 2$, and there exists an integer $d$ such that $S=3+\frac{A^d-1}{A-1}$. We further assume that $H\ge3d$ and $\bar{H}\triangleq\frac{H}{3}\ge1$.
\end{assumption}
\begin{theorem}[Tighter lower bound]
\label{thm:lb_exp}
Assume Assumption \ref{asp:lb_exp} holds and $\beta>0$. Let $\bar{L}\triangleq(1-\frac{1}{A})(S-3)+\frac{1}{A}$. Then for any algorithm $\sA$, there exists an MDP $\mathcal{M}_{\sA}$ such that for $K\ge2\exp(\beta(H-\bar{H}-d))\bar{H}\bar{L}A$ we have 
\[ \mathbb{E}[\text{Regret}(\sA,\mathcal{M}_{\sA},K)]\ge\frac{1}{72\sqrt{6}}\frac{\exp(\beta H/6)-1}{\beta H}H\sqrt{SAT}.\]
\end{theorem}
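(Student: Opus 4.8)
The plan is to build a family of hard EntRM-MDPs following the tree construction of \cite{domingues2021episodic}, then to combine a \emph{risk-sensitive} regret decomposition with an information-theoretic change-of-measure argument; the entire difficulty is carrying the non-linearity of $U_{\beta}$ through both steps. First I would fix the state space to consist of three special states (an absorbing ``good'' state $g$, an absorbing ``bad'' state $b$, and a waiting state) together with the nodes of a complete $A$-ary tree of depth $d-1$, which number $1+A+\cdots+A^{d-1}=\frac{A^d-1}{A-1}=S-3$ by Assumption \ref{asp:lb_exp}. The first $\approx d$ steps are a deterministic navigation phase descending the tree; the $\bar L=A^{d-1}$ nodes at the deepest level are the \emph{decision states}, and at each such state $\ell$ the $A$ actions induce a transition to $g$ with probability $\tfrac12+\epsilon\,\mathbb{I}\{a=a^{\star}_{\ell}\}$ and to $b$ otherwise, where $a^{\star}_{\ell}$ is a planted optimal action. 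The remaining $\approx\bar H=H/3$ steps accumulate reward $1/2$ per step in $g$ and $0$ in $b$, so that reaching $g$ yields return $\Theta(H)$ whose exponential weight is $\exp(\beta H/6)$; this is the source of the $\exp(\beta H/6)$ factor. The instances are indexed by the planted vector $(a^{\star}_{\ell})_{\ell}$.

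Next I would lower-bound the expected regret by the per-decision-state suboptimality. Working with the exponential utility $E_{\beta}$ and exploiting its \emph{linearity} (established in the subsection on exponential utility), the EntRM gap at the root decomposes, along each trajectory, into a sum over decision states weighted by the probability of reaching them and by the exponential weight $\exp(\beta\cdot(\text{reward accrued before the decision}))$; this prefix weight is $\exp(\beta(H-\bar H-d))$ and explains the corresponding factor in the sample threshold. A single visit to $\ell$ on which the algorithm plays $a\neq a^{\star}_{\ell}$ costs, in $U_{\beta}$ value, at least
\[ \frac1\beta\log\frac{\tfrac12-\epsilon+(\tfrac12+\epsilon)e^{\beta c}}{\tfrac12+\tfrac12 e^{\beta c}}=\Theta\!\prt{\epsilon\,\frac{e^{\beta c}-1}{\beta}},\qquad c=\Theta(H), \]
so the total expected regret is at least a constant times $\bar H\,\frac{e^{\beta H/6}-1}{\beta}\,\epsilon\,\sum_{\ell}\bE\brk{n_{\ell}\,\mathbb{I}\{\text{algorithm errs at }\ell\}}$, where $n_{\ell}$ is the visit count of $\ell$.

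Then I would lower-bound the number of erroneous visits by a change of measure. Comparing the instance with planted $a^{\star}_{\ell}$ against the one with that coordinate flipped, the per-visit KL divergence of the observed transition is $\Theta(\epsilon^2)$, so by Le Cam/Pinsker (as in \cite{domingues2021episodic}) any algorithm must err on a constant fraction of the $n_{\ell}$ visits unless $\epsilon^2\bE[n_{\ell}]\gtrsim 1$. Summing over the $\bar L$ decision states under $\sum_{\ell}\bE[n_{\ell}]\le K$ and choosing $\epsilon\asymp\sqrt{\bar L A/(\bar H K)}$, which is a valid probability perturbation precisely when $K$ exceeds the stated threshold, balances the two effects and gives
\[ \bE[\text{Regret}]\ \gtrsim\ \frac{e^{\beta H/6}-1}{\beta}\,\sqrt{\bar H\,\bar L A\,K}\ \asymp\ \frac{e^{\beta H/6}-1}{\beta H}\,H\sqrt{SAT}, \]
where the last step uses $\bar L A=A^d=\Theta(SA)$ via $S-3=\frac{A^d-1}{A-1}$, $\bar H=H/3$, and $T=KH$; tracking the explicit constants through these reductions produces the factor $\frac{1}{72\sqrt6}$.

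The main obstacle is exactly reconciling the exponentially weighted (risk-sensitive) regret with the information-theoretic machinery, which natively controls only risk-neutral expected visit counts and trajectory KL divergences. The linearity of $E_{\beta}$ is what bridges the two: it lets the exponential-utility gap at the root be written as an honest average of per-state exponential gaps, after which the standard counting and Pinsker arguments apply verbatim. A secondary technical point is keeping all transition probabilities in $[0,1]$ and aligning the prefix exponential weights, which is precisely what the hypothesis $K\ge 2\exp(\beta(H-\bar H-d))\bar H\bar L A$ secures. Relative to \cite{fei2020risk}, which reduces to a two-armed bandit and thereby loses the $S,A$ dependence and incurs a loose $\sqrt{H}$ factor, the tree construction restores the tight $\sqrt{SAT}$ scaling and, as $\beta\to0$, recovers the risk-neutral bound $\Omega(H\sqrt{SAT})$.
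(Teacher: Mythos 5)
Your construction contains a quantitative error at the single step on which the whole bound rests. With base success probability $\tfrac12$ at the leaves, the per-visit EntRM gap you write down is not $\Theta\prt{\epsilon\,\frac{e^{\beta c}-1}{\beta}}$: the denominator inside your logarithm also contains $\tfrac12 e^{\beta c}$, so
\[
\frac1\beta\log\frac{\tfrac12-\epsilon+(\tfrac12+\epsilon)e^{\beta c}}{\tfrac12+\tfrac12 e^{\beta c}}
=\frac1\beta\log\prt{1+\frac{2\epsilon\,(e^{\beta c}-1)}{e^{\beta c}+1}}\le\frac{2\epsilon}{\beta},
\]
since $\frac{e^{\beta c}-1}{e^{\beta c}+1}<1$. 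The logarithm saturates: when both the planted and the unplanted actions reach the rewarding state with constant probability, the EntRM difference between them is $O(\epsilon/\beta)$ no matter how large the return $c$ is. Plugging this into your change-of-measure argument (where, with $p=\tfrac12$, the per-visit KL is indeed $\Theta(\epsilon^2)$) yields only a bound of order $\frac1\beta\sqrt{SAT}$ for large $\beta H$, exponentially weaker than the theorem. No reward rescaling (your $\tfrac12$ per step) can fix this, because the factor $e^{\beta H/6}$ cannot be manufactured from the return magnitude alone.

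The idea you are missing — and the crux of the paper's proof — is to make the base probability \emph{exponentially small}: the paper sets $p=\tfrac14 e^{-\beta H'}$ with $H'\ge H/3$, so that $e^{\beta H'}(p+\epsilon)+1-p-\epsilon=\Theta(1)$ and the logarithm does not saturate; only then is the per-visit gap genuinely $\Theta\prt{\epsilon\,\frac{e^{\beta H'}-1}{\beta}}$. The price appears in the information term: $\mathrm{kl}(p,p+\epsilon)\le\epsilon^2/p$, so Pinsker forces $\epsilon\asymp\sqrt{p\,\bar{L}A\bar{H}/K}$, which carries a factor $e^{-\beta H'/2}$. The final exponential factor is the product $\frac{e^{\beta H'}-1}{\beta}\cdot\sqrt{p}\asymp\frac{e^{\beta H'/2}-1}{\beta}\ge\frac{e^{\beta H/6}-1}{\beta}$; this interplay, not a return of $H/6$, is the origin of the $H/6$ in the exponent. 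Two secondary discrepancies: the threshold $K\ge 2e^{\beta H'}\bar{H}\bar{L}A$ is needed so that $\epsilon\le\tfrac14 e^{-\beta H'}$ (keeping $p+\epsilon\le e^{-\beta H'}$ and validating $\log(1+x)\ge x/2$), not to control "prefix exponential weights" — in the paper's instances all reward is accrued after absorption, so no such weights arise; and the paper's hard class is indexed by a \emph{single} planted triple $(h^*,\ell^*,a^*)$ including a stage index (the waiting state lets the agent choose when to descend the tree), giving $\bar{H}\bar{L}A$ alternatives and an extra $\sqrt{\bar H}$ in the bound, rather than by a vector of planted actions, one per leaf.
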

\begin{remark}
As $\beta\rightarrow0$, it recovers the \emph{tight} lower bound for risk-neutral episodic MDP $\Omega(H\sqrt{SAT})$ \citep[see][]{domingues2021episodic}.
\end{remark}
\vspace{-4ex}
\begin{remark}
The two conditions in Assumption \ref{asp:lb_exp} are used in the our paper and \cite{domingues2021episodic} to simply the proof. Technically, we can relax these conditions to any MDP with $S \geq 11, A \geq 4$ and $H \geq 6$, which is modestly large. In particular, condition (i) allows us to consider a full $A$-ary tree with $S-3$ nodes, which implies that all the leaves are at the same level $d-1$ in the tree. The proof can be generalized to any $S \geq 6$ by arranging the states in a balanced, but not necessarily full, $A$-ary tree. We can also technically relax condition (ii) to the case $H<3d$. In this case, the resulting bounds will replace $S$ by $\left[A^{\frac{H}{3}-2}\right]$.
\end{remark}
Before presenting the proof of Theorem \ref{thm:lb_exp}, we first fix the lower bound in \cite{fei2020risk}. 
\subsection{Fixing Lower Bound}
\label{sec:cor_lb}
\cite{fei2020risk} presents the following lower bound. 
\begin{proposition}[Theorem 3, \cite{fei2020risk}]
For sufficiently large $K$ and $H$, the regret of any algorithm obeys 
$$
\mathbb{E}[\operatorname{Regret}(K)] \gtrsim \frac{e^{|\beta| H / 2}-1}{|\beta|} \sqrt{T \log T}.
	$$
\end{proposition}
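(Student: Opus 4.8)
The plan is to concentrate all the statistical difficulty in a single action choice so that the risk-amplification factor $\frac{e^{|\beta|H/2}-1}{|\beta|}$ appears cleanly, and then lower bound how often a learner is forced into the wrong choice. Concretely, I would build an EntRM-MDP with a constant number of states in which, from one fixed decision state, action $a_1$ triggers (with probability $p+\epsilon$) an \emph{all-or-nothing jackpot}: a deterministic reward-bearing chain of length $\asymp H$ paying total return $\asymp H$, while action $a_2$ triggers it only with probability $p$; every other transition is reward-free. For an all-or-nothing return $R\asymp H$ with trigger probability $q$ we have $E_\beta=(1-q)+q\,e^{\beta R}$, so keeping $q$ small ($p\asymp e^{-|\beta| H}$) gives $U_\beta\approx \frac{q\,(e^{|\beta| H}-1)}{|\beta|}$, linear in $q$ with slope $\asymp\frac{e^{|\beta| H}-1}{|\beta|}$. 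Hence the per-episode EntRM suboptimality gap of the wrong action is $\Delta\asymp\frac{e^{|\beta| H}-1}{|\beta|}\,\epsilon$. I would take the two-point family $\{\mathcal{M}_+,\mathcal{M}_-\}$ differing only in which action carries the extra $\epsilon$, so that $a_1$ is optimal in one instance and suboptimal in the other.

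The second step is a change-of-measure argument. Writing $N$ for the number of episodes in which the suboptimal action is chosen at the decision state, any learner with small regret on both instances cannot distinguish $\mathcal{M}_+$ from $\mathcal{M}_-$ unless $\bE[N]$ is large: by Bretagnolle--Huber (or Le Cam's two-point method), the KL divergence between the two trajectory laws is $\lesssim \bE[N]\cdot\mathrm{KL}(\mathrm{Ber}(p)\,\|\,\mathrm{Ber}(p+\epsilon))\asymp \bE[N]\,\epsilon^2/p$. Balancing the incurred regret $\Delta\cdot\Omega(K)$ against the testing constraint $K\epsilon^2/p\lesssim 1$ and optimizing with $p\asymp e^{-|\beta| H}$ yields $\frac{e^{|\beta| H}}{|\beta|}\sqrt{pK}\asymp\frac{e^{|\beta| H/2}-1}{|\beta|}\sqrt{K}$. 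This is the clean, defensible part of the argument, and it already exhibits the full amplification factor.

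The remaining step — inflating $\sqrt{K}$ to $\sqrt{T\log T}=\sqrt{KH\log(KH)}$ — is where the real difficulty lies, and it is the step I expect to fail as stated. To gain the $\sqrt{H}$ one would want each episode to deliver $\Theta(H)$ informative samples about the \emph{same} decision, so the effective sample budget becomes $T=KH$ rather than $K$ (this is exactly what lets the risk-neutral single-state bandit reach $\sqrt{AT}$); to gain $\sqrt{\log T}$ one would run a dyadic peeling argument over candidate gaps $\epsilon\in\{2^{-j}\}$ and take the worst case. The obstacle is that these desiderata conflict under the EntRM objective: by the tower/independence property, $U_\beta\big(\sum_h r_h\big)=\sum_h U_\beta(r_h)$ whenever the per-step rewards are independent, so an instance whose hard decision is refreshed $\Theta(H)$ times per episode necessarily has per-decision gap of only $O\big(\frac{e^{|\beta|}-1}{|\beta|}\big)=O(1)$ scale and loses the $e^{|\beta| H/2}$ amplification entirely, whereas the single all-or-nothing jackpot that \emph{produces} that amplification can be sampled only once per episode, giving exactly $K$ (not $KH$) effective samples.

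Accordingly, my proposal faithfully reconstructs the intended route and localizes the gap precisely at the passage from $\sqrt{K}$ to $\sqrt{T\log T}$: I do not believe an $S,A$-independent two-point construction can simultaneously realize the $e^{|\beta| H/2}$ amplification and a $\sqrt{KH}$ statistical rate, and the spurious $\sqrt{\log T}$ in a \emph{minimax} lower bound is itself a red flag, since minimax regret lower bounds are generically $\Theta(\sqrt{T})$ without logarithmic factors. The correct fix, pursued in Theorem \ref{thm:lb_exp}, distributes the hardness across $\Omega(SA)$ distinct state--action bandits arranged in a balanced $A$-ary tree (following \cite{domingues2021episodic}), which recovers the genuine $\sqrt{SAT}$ dependence in place of the claimed $\sqrt{T\log T}$, at the cost of reducing the amplification exponent from $H/2$ to $H/6$ because of the depth budget consumed by the routing tree.
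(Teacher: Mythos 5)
You have correctly recognized that this statement should not be proved as stated: the paper itself offers no proof of it, but quotes it from \cite{fei2020risk} precisely in order to debunk it in Section \ref{sec:cor_lb}. Your verdict --- that the two-point jackpot construction with $p\asymp e^{-|\beta|H}$ honestly delivers $\frac{e^{|\beta|H/2}-1}{|\beta|}\sqrt{K}$, and that the inflation from $\sqrt{K}$ to $\sqrt{T\log T}=\sqrt{KH\log(KH)}$ is the step that fails --- coincides exactly with the paper's conclusion, whose corrected proposition reads $\mathbb{E}[\operatorname{Regret}(K)]\gtrsim \frac{e^{|\beta|H/2}-1}{|\beta|}\sqrt{K\log K}$. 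Your per-episode gap calculus ($U_{\beta}\approx q(e^{|\beta|H}-1)/|\beta|$ for small trigger probability $q$, hence $\Delta\asymp \epsilon(e^{|\beta|H}-1)/|\beta|$) and the change-of-measure balancing also mirror the machinery the paper deploys in its own Theorem \ref{thm:lb_exp}.

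The diagnostic route differs, and the comparison is instructive. The paper performs a close reading of the published proof: it isolates the offending inequality $\frac{e^{\beta H/2}-1}{\beta}\sqrt{K\log K}\gtrsim\frac{e^{\beta H/2}-1}{\beta}\sqrt{KH\log(KH)}$, which \cite{fei2020risk} justify by their Fact 5 (restated as Fact \ref{fct:fei}); that fact only yields $\frac{e^{\beta H/2}-1}{\beta}\gtrsim H$, and hence at best $H\sqrt{KH\log(KH)}$ with the exponential dependence lost, so the argument's honest output is $\sqrt{K\log K}$. You instead give a structural impossibility argument: the single all-or-nothing decision that produces the $e^{|\beta|H/2}$ amplification can be sampled only once per episode (budget $K$, not $T$), while refreshing the decision $\Theta(H)$ times per episode forces, by additivity of EntRM over independent per-step rewards, a per-decision gap of $O(1)$ scale and destroys the amplification. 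This is complementary and somewhat stronger: the paper certifies that the specific proof fails, whereas your argument explains why no $S,A$-independent two-point construction of this type could simultaneously achieve the amplification and a $\sqrt{KH}$ rate --- which is exactly why the paper's repair in Theorem \ref{thm:lb_exp} abandons the bandit reduction and spreads the hardness over $\Omega(SA)$ leaf decisions following \cite{domingues2021episodic}, with the exponent dropping from $H/2$ to $H/6$ because the waiting period $\bar{H}=H/3$ and the tree depth $d\leq H/3$ consume two-thirds of the horizon, leaving $H'\geq H/3$ and amplification $e^{\beta H'/2}\geq e^{\beta H/6}$; your attribution of this loss is accurate. One small caveat: the paper's corrected bound retains the $\sqrt{\log K}$ factor, so your dismissal of the logarithm as inherently spurious slightly overreaches --- the flaw the paper certifies is the $K$-versus-$T$ substitution, not the presence of the log.
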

However, a critical reassessment of the proof reveals inaccuracies that necessitate a revision of the lower bound. The main issue lies in the derivation of the second inequality in the proof provided by \cite{fei2020risk}, specifically:
\begin{align*} 
\mathbb{E}[\operatorname{Regret}(K)] &\gtrsim \frac{\exp(\beta H/2)-1}{\beta}\sqrt{K\log(K)}\gtrsim \frac{\exp(\beta H/2)-1}{\beta}\sqrt{KH\log(KH)}.
\end{align*}
The authors establish the second inequality based on the following fact
\begin{fact}[Fact 5, \cite{fei2020risk} ]
	\label{fct:fei}
For any $\alpha>0$, the function $f_{\alpha}\triangleq\frac{e^{\alpha x}-1}{x}, x>0$ is increasing and satisfies $\lim_{x\rightarrow0}f_{\alpha}=\alpha$.
\end{fact}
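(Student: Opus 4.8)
The plan is to handle the two claims of the fact separately, both by elementary calculus on $(0,\infty)$. For the limiting value, I would note that $f_\alpha(x)=\frac{e^{\alpha x}-1}{x}$ is an indeterminate $0/0$ form as $x\to 0^+$, so a first-order Taylor expansion $e^{\alpha x}=1+\alpha x+\mathcal{O}(x^2)$ gives $f_\alpha(x)=\alpha+\mathcal{O}(x)$, whence $\lim_{x\to 0}f_\alpha(x)=\alpha$. L'H\^{o}pital's rule, differentiating numerator and denominator, yields the same value $\alpha e^{\alpha x}\big|_{x=0}=\alpha$ and could be used instead.

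For the monotonicity claim I would differentiate directly, obtaining $f_\alpha'(x)=\frac{\alpha x e^{\alpha x}-(e^{\alpha x}-1)}{x^2}$, so that the sign of $f_\alpha'$ is governed by the numerator $g(x)\triangleq \alpha x e^{\alpha x}-e^{\alpha x}+1$. The idea is to prove $g(x)>0$ for all $x>0$ by a boundary-plus-derivative argument: one checks $g(0)=0$ and computes $g'(x)=\alpha^2 x e^{\alpha x}$, which is strictly positive for $x>0$. Hence $g$ is strictly increasing on $(0,\infty)$, so $g(x)>g(0)=0$ there, giving $f_\alpha'(x)>0$ and thus that $f_\alpha$ is strictly increasing.

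The only part requiring any thought is the monotonicity: unlike the limit, it cannot be read off from a single expansion, since the sign of $g$ is not evident from its closed form. The clean route is the nested reduction above, which passes the burden from $g$ to the manifestly positive $g'$ together with the vanishing boundary value $g(0)=0$, and so avoids any delicate inequality estimate. Everything else is routine, and the conclusion matches the intuition that the average slope of the convex function $e^{\alpha x}$ over $[0,x]$ increases with $x$.
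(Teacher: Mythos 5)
Your proof is correct and complete: the derivative computation $f_\alpha'(x)=\frac{\alpha x e^{\alpha x}-(e^{\alpha x}-1)}{x^2}$, the reduction to $g(x)\triangleq \alpha x e^{\alpha x}-e^{\alpha x}+1$ with $g(0)=0$ and $g'(x)=\alpha^2 x e^{\alpha x}>0$, and the Taylor (or L'H\^{o}pital) evaluation of the limit are all accurate. Note, however, that the paper itself contains no proof of this statement to compare against: it is quoted verbatim as Fact 5 of \cite{fei2020risk}, and it appears in Section \ref{sec:cor_lb} only as an ingredient in the authors' critique of that paper's lower-bound derivation (their point being that the fact yields $\frac{e^{\beta H/2}-1}{\beta}\gtrsim H$ but not the stronger inequality \cite{fei2020risk} claimed). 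So your argument fills in a proof the paper deliberately leaves to the cited reference. For what it is worth, there are two even shorter routes: (i) your own closing intuition can be made rigorous directly, since $f_\alpha(x)$ is the slope of the chord of the convex function $t\mapsto e^{\alpha t}$ over $[0,x]$, and chord slopes of a convex function are nondecreasing in the right endpoint; (ii) the power series $f_\alpha(x)=\alpha\sum_{k\ge 0}\frac{(\alpha x)^{k}}{(k+1)!}$ has all positive coefficients, which gives monotonicity and the limit $\alpha$ simultaneously. Either would replace the boundary-plus-derivative argument, but your version is perfectly sound as written.
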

In fact, we can only use Fact \ref{fct:fei} to derive $\frac{\exp(\beta H/2)-1}{\beta}\gtrsim H$, which combined with the first inequality yields \begin{align*}
\mathbb{E}[\operatorname{Regret}(K)] \gtrsim H\sqrt{KH\log(KH)},
\end{align*}
which, notably, does not capture the dependency on $\beta$ and $H$ as the original lower bound suggested. Consequently, the corrected version of the lower bound is more conservative and does not reflect the exponential influence of the risk factor and the horizon on the regret. This corrected proposition reads:
\begin{proposition}[Correction of Theorem 3, \cite{fei2020risk}]
	For sufficiently large $K$ and $H$, the regret of any algorithm obeys
	$$
	\mathbb{E}[\operatorname{Regret}(K)] \gtrsim \frac{e^{|\beta| H / 2}-1}{|\beta|} \sqrt{K \log K}.
	$$
\end{proposition}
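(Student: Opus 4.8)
The plan is to retrace the construction of \cite{fei2020risk} and retain only the valid portion of their argument. Since the corrected claim replaces $\sqrt{T\log T}=\sqrt{KH\log(KH)}$ by the smaller quantity $\sqrt{K\log K}$, no new hard instance is needed: it suffices to certify that the \emph{first} inequality in their chain, namely $\mathbb{E}[\operatorname{Regret}(K)]\gtrsim \frac{\exp(\beta H/2)-1}{\beta}\sqrt{K\log K}$, is sound, and then to refrain from the subsequent amplification. First I would set up the hard instance as a reduction to a two-armed bandit embedded in the episodic MDP: at the initial state the learner commits to one of two actions, and each action induces a return over the $H$-step horizon whose law is governed by a single Bernoulli-type parameter, the two arms differing by a small gap $\Delta$. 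All remaining transitions and rewards are deterministic, so over $K$ episodes any algorithm effectively obtains only $K$ binary samples informative about the arm identity.

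Next I would quantify the \emph{effective gap} in value induced by $U_{\beta}$. Using $U_{\beta}(X)=\frac{1}{\beta}\log\bE[e^{\beta X}]$ together with translation-invariance \textbf{(TI)}, the EntRM values of the two arms---whose returns concentrate mass at different points of the interval $[0,H]$---separate by a quantity proportional to $\frac{\exp(\beta H/2)-1}{\beta}\,\Delta$, the exponent $H/2$ reflecting the placement of the reward mass in the construction. This is precisely where the nonlinearity of $U_{\beta}$ produces the exponential amplification. I would then apply a standard information-theoretic two-armed bandit lower bound (Le Cam's two-point method, or the Bretagnolle--Huber inequality), bounding the KL divergence accumulated over $K$ rounds by $O(K\Delta^2)$; choosing $\Delta$ of order $1/\sqrt{K}$ keeps the instances indistinguishable with constant probability and yields $\mathbb{E}[\operatorname{Regret}(K)]\gtrsim \frac{\exp(\beta H/2)-1}{\beta}\sqrt{K\log K}$, the logarithmic factor entering through the confidence level of the two-point construction as in \cite{fei2020risk}.

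Finally I would explicitly \emph{decline} the step that turns $\sqrt{K\log K}$ into $\sqrt{KH\log(KH)}$. As the excerpt notes, Fact \ref{fct:fei} only licenses $\frac{\exp(\beta H/2)-1}{\beta}\gtrsim H$, which governs the risk-amplification prefactor and cannot be traded for extra samples; the bandit reduction genuinely supplies $K$, not $T=KH$, units of information, so no $\sqrt{H}$ gain is available. The main obstacle I anticipate is the middle step: deriving the closed-form EntRM value of each arm and verifying that the separation carries \emph{exactly} the prefactor $\frac{\exp(\beta H/2)-1}{\beta}$, because any change in how the reward mass is distributed along the horizon alters the exponent and hence the claimed rate.
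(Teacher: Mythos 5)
Your proposal takes essentially the same route as the paper: the paper's entire correction consists of retaining the valid first inequality from the proof in \cite{fei2020risk} (the two-armed-bandit reduction giving $\frac{e^{|\beta|H/2}-1}{|\beta|}\sqrt{K\log K}$) and refusing the amplification to $\sqrt{T\log T}$, since Fact \ref{fct:fei} only yields $\frac{e^{|\beta|H/2}-1}{|\beta|}\gtrsim H$ and cannot trade the risk-amplification prefactor for extra samples. One minor caution: a plain two-point/Le Cam argument with $\Delta\asymp 1/\sqrt{K}$ and constant-probability indistinguishability produces only $\sqrt{K}$, so the $\sqrt{\log K}$ factor must be inherited from the specific construction of \cite{fei2020risk} (as you implicitly do by deferring to it) rather than from the standard method as you sketch it.
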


\subsection{Proof of Theorem \ref{thm:lb_exp}}
\label{pf_lb}
We define $\mathrm{kl}(p,q)\triangleq p\log\frac{p}{q}+(1-p)\log\frac{1-p}{1-q}$ as the KL divergence between two Bernoulli distributions with parameters $p$ and $q$. We define the probability measure induced by an algorithm $\sA$ and an MDP instance $\mathcal{M}$ as 
\[   \bP_{\sA\cM}(\cF^{K+1})\triangleq\prod_{k=1}^K \bP_{\sA_k(\cF^k)\cM}(\cI^k_H|s^k_1),\]
where $\bP_{\pi\cM}$ is the  probability measure induced by a policy $\pi$ and $\mathcal{M}$, which is defined as
\[   \bP_{\pi\cM}(\cI_H|s_1)\triangleq\prod_{h=1}^H \pi_h(a_h|s_h)P^{\cM}_h(s_{h+1}|s_h,a_h). \]
The probability measure for the truncated history $\cH^k_h$ can be obtained by marginalization
\[   \bP_{\sA\cM}(\cH^{k}_h)=\bP_{\sA\cM}(\cF^{k})\bP_{\sA_k(\cF^k)\cM}(\cI^k_h) .\]  
We denote by $\mathbb{P}_{\sA\mathcal{M}}$ and $\mathbb{E}_{\sA\mathcal{M}}$ the probability measure and expectation induced by $\sA$ and $\mathcal{M}$. We omit $\sA$  and $\cM$ if it is clear in the context. 
\begin{proof}
Fix an arbitrary algorithm $\sA$. We introduce three types of special states for the hard MDP class: a waiting state $s_w$ where the agent starts and may stay until stage $\bar{H}$, after that it has to leave; a good state $s_g$ which is absorbing and is the only rewarding state; a bad state $s_b$ that is absorbing and provides no reward. The rest $S-3$ states are part of a $A$-ary tree of depth $d-1$. The agent can only arrive $s_w$ from the root node $s_{root}$ and can only reach $s_g$ and $s_b$ from the leaves of the tree.
	
Let $\bar{H}\in[H-d]$ be the first parameter of the MDP class. We define $\tilde{H}\triangleq\bar{H}+d+1$ and $H^{\prime}\triangleq H+1-\tilde{H}$. We denote by  $\mathcal{L}\triangleq\{s_1,s_2,...,s_{\bar{L}}\}$  the set of $\bar{L}$ leaves of the tree. For each $u^*\triangleq\left(h^{*}, \ell^{*}, a^{*}\right) \in [d+1:\bar{H}+d]\times\mathcal{L}\times\mathcal{A}$, we define an MDP $\mathcal{M}_{u^*}$ as follows. The transitions in the tree are deterministic, hence taking action $a$ in state $s$ results in the $a$-th child of node $s$. The transitions from $s_w$ are defined as
\[
	P_{h}\left(s_{\mathrm{w}} \mid s_{\mathrm{w}}, a\right) \triangleq \mathbb{I}\left\{a=a_{\mathrm{w}}, h \leq \bar{H}\right\} \quad \text { and } \quad P_{h}\left(s_{\text {root }} \mid s_{\mathrm{w}}, a\right) \triangleq 1-P_{h}\left(s_{\mathrm{w}} \mid s_{\mathrm{w}}, a\right).
	\]
The transitions from any leaf $s_i\in\mathcal{L}$  are specified  as
	\[ P_{h}\left(s_{g} \mid s_{i}, a\right) \triangleq p+\Delta_{u^*}\left(h, s_{i}, a\right) \quad \text { and } \quad P_{h}\left(s_{b} \mid s_{i}, a\right) \triangleq 1- p-\Delta_{u^*}\left(h, s_{i}, a\right), \]
	where $\Delta_{u^*}\left(h, s_{i}, a\right)\triangleq\epsilon\mathbb{I}\{(h, s_{i}, a)=(h^{*}, s_{\ell^{*}}, a^{*})\}$ for some constants  $p\in[0,1]$ and $\epsilon\in[0,\min(1-p,p)]$ to be determined later. $p$ and $\epsilon$ are the second and third parameters of the MDP class. Observe that  $s_g$ and $s_b$ are absorbing, therefore we have $\forall a, P_{h}\left(s_{g} \mid s_{g}, a\right)\triangleq P_{h}\left(s_{b} \mid s_{b}, a\right)\triangleq1$. The reward  is a deterministic function of the state 
	\[   r_h(s,a)\triangleq \mathbb{I}\{s=s_g, h\ge\tilde{H}\}.\]
	Finally we define a reference MDP $\mathcal{M}_0$ which differs from the previous MDP instances only in that $\Delta_0(h,s_i,a)\triangleq 0$ for all $(h,s_i,a)$. For each $\epsilon,p$ and $\bar{H}$, we define the MDP class 
	\[  \mathcal{C}_{\bar{H},p,\epsilon}\triangleq\mathcal{M}_0\cup\{\mathcal{M}_{u^*}\}_{u^*\in[d+1:\bar{H}+d]\times\mathcal{L}\times\mathcal{A}}.\]
	The total expected ERM value of $\sA$ is given by 
	\begin{align*}
	&\ \ \ \ \mathbb{E}_{\sA,\mathcal{M}_{u^*}}\brk{\sum_{k=1}^K U_{\beta}\prt{\sum^H_{h=1}r_h(s^k_h,a^k_h)|\pi^k}}\\
	&=\mathbb{E}_{\sA,\mathcal{M}_{u^*}}\brk{ \sum_{k=1}^K \frac{1}{\beta}\log\mathbb{E}_{\sA,\mathcal{M}_{u^*}}\brk{\exp\prt{\beta \sum^H_{h=1}r_h(s^k_h,a^k_h)}}}\\
	&=\mathbb{E}_{\sA,\mathcal{M}_{u^*}}\brk{\sum_{k=1}^K \frac{1}{\beta}\log\mathbb{E}_{\pi^k,\mathcal{M}_{u^*}}\brk{\exp\prt{\beta \sum^H_{h=\Tilde{H}}\mathbb{I}\{s^k_h=s_g\}}}}\\
	&=\mathbb{E}_{\sA,\mathcal{M}_{u^*}}\brk{\sum_{k=1}^K \frac{1}{\beta}\log\mathbb{E}_{\pi^k,\mathcal{M}_{u^*}}\brk{\exp(\beta H^{\prime}\mathbb{I}\{s^k_{\tilde{H}}=s_g\})}}\\
	&=\mathbb{E}_{\sA,\mathcal{M}_{u^*}}\brk{\sum_{k=1}^K \frac{1}{\beta}\log(\exp(\beta H^{\prime})\mathbb{P}_{\pi^k,\mathcal{M}_{u^*}}(s^k_{\tilde{H}}=s_g)+\mathbb{P}_{\pi^k,\mathcal{M}_{u^*}}(s^k_{\tilde{H}}=s_b))},
	\end{align*} 
	where the second equality follows from the fact that the reward is non-zero only after step $\tilde{H}$, the third equality is due to that the agent gets into absorbing state when $h\ge\tilde{H}$. Define $x^k_h\triangleq(s^k_h,a^k_h)$ for each $(k,h)$ and $x^*\triangleq(s_{\ell^{*}},a^*)$, then it is not hard to obtain that
	\begin{align*}
	\mathbb{P}_{\pi^k,u^*}\left[s_{\tilde{H}}^{k}=s_{g}\right]&=\sum_{h=1+d}^{\bar{H}+d} p \mathbb{P}_{\pi^k,u^*}\left(s_{h}^{k} \in \mathcal{L}\right)+\mathbb{I}\left\{h=h^{*}\right\} \mathbb{P}_{\pi^k,u^*}(x^k_h=x^*) \varepsilon \\
	&=p+\epsilon\mathbb{P}_{\pi^k,u^*}(x^k_{h^*}=x^*).
	\end{align*}
	For an  MDP $\mathcal{M}_{u^*}$, the optimal policy $\pi^{*,\mathcal{M}_{u^*}}$  starts to traverse the tree at step $h^*-d$ then chooses to reach the leaf $s_{l^*}$ and performs action $a^*$. The corresponding optimal value in any of the MDPs is $V^{*,\mathcal{M}_{u^*}}=\frac{1}{\beta}\log(\exp(\beta H^{\prime})(p+\epsilon)+1-p-\epsilon)$. Define $p^k_{u^*}\triangleq\mathbb{P}_{\pi^k,u^*}(x^k_{h^*}=x^*)$, then the expected regret of $\sA$ in $\mathcal{M}_{u^*}$ can be bounded below as 
	\begin{align*}
	&\ \ \ \ \bE_{\sA,\cM_{u^*}}[\text{Regret}(\sA,\mathcal{M}_{u^*},K)]\\
	&=\bE_{\sA,\cM_{u^*}}\brk{ \sum^K_{k=1} V^{*,\mathcal{M}_{u^*}}-U_{\beta}\prt{\sum^H_{h=1}r_h(x^k_h)|\pi^k} } \\
	&=\bE_{\sA,\cM_{u^*}}\brk{ \sum_{k=1}^K \frac{1}{\beta}\log \frac{\exp(\beta H^{\prime})(p+\epsilon)+1-p-\epsilon}{\exp(\beta H^{\prime})(p+\epsilon p^k_{u^*})+1-p-\epsilon p^k_{u^*}} } \\
	&=\bE_{\sA,\cM_{u^*}} \brk{ \sum_{k=1}^K \frac{1}{\beta}\log\prt{1+\frac{\epsilon(1-p^k_{u^*})(\exp(\beta H^{\prime})-1)}{\exp(\beta H^{\prime})(p+\epsilon p^k_{u^*})+1-p-\epsilon p^k_{u^*}}} } \\
	&\ge \bE_{\sA,\cM_{u^*}} \brk{  \sum_{k=1}^K \frac{1}{\beta}\log\prt{1+\frac{\epsilon(1-p^k_{u^*})(\exp(\beta H^{\prime})-1)}{1+1}} }\\
	&\ge \bE_{\sA,\cM_{u^*}} \brk{  \frac{\exp(\beta H^{\prime})-1}{4\beta}\epsilon\sum_{k=1}^K (1-p^k_{u^*}) }\\
	&= \frac{\exp(\beta H^{\prime})-1}{4\beta}\epsilon\sum_{k=1}^K (1-\bE_{\sA,\cM_{u^*}}[p^k_{u^*}]) \\
	&= \frac{\exp(\beta H^{\prime})-1}{4\beta}K\epsilon\prt{1-\frac{1}{K}\mathbb{E}_{\sA,\cM_{u^*}}[N_K(u^*)] }.
	\end{align*}
	The first inequality holds by setting $p+\epsilon\leq\exp(-\beta H^{\prime})$. The second inequality holds by letting $\epsilon\leq2\exp(-\beta H^{\prime})$ since $\log(1+x)\ge\frac{x}{2}$ for $x\in[0,1]$. The last equality follows from the fact that 
	\[  \bE_{\sA,\cM_{u^*}}[p^k_{u^*}]=\bE_{\sA,\cM_{u^*}}[\mathbb{P}_{\pi^k,u^*}(x^k_{h^*}=x^*)]=\mathbb{P}_{\sA,u^*}(x^k_{h^*}=x^*)=\mathbb{E}_{\sA,u^*}[\mathbb{I}\{(x^k_{h^*}=x^*)\}] \]
	and the definition of $N_K(u^*)\triangleq\sum_{k=1}^K\mathbb{I}\{x^k_{h^*}=x^*\}$. 
	
	The maximum of the regret can be bounded below by the mean over all instances as
	\begin{align*}
	&\max_{u^*\in[d+1:\bar{H}+d]\times\mathcal{L}\times\mathcal{A}}\text{Regret}(\sA,\mathcal{M}_{u^*},K)\ge\frac{1}{\bar{H}\bar{L}A}\sum_{u^*\in[d+1:\bar{H}+d]\times\mathcal{L}\times\mathcal{A}}\text{Regret}(\sA,\mathcal{M}_{u^*},K)\\
	&\ge \frac{\exp(\beta H^{\prime})-1}{4\beta}K\epsilon\prt{1-\frac{1}{\bar{L}AK\bar{H}}\sum_{u^*\in[d+1:\bar{H}+d]\times\mathcal{L}\times\mathcal{A}}\mathbb{E}_{u^*}[N_K(u^*)]}.
	\end{align*}
	Observe that it can be further bounded if we can obtain an upper bound on \\ $\sum_{u^*\in[d+1:\bar{H}+d]\times\mathcal{L}\times\mathcal{A}}\mathbb{E}_{u^*}[N_K(u^*)]$, which can be done by relating each expectation to the expectation  under the reference MDP $\mathcal{M}_0$. \\
	By applying Fact \ref{fct:fund_inq} with $Z=\frac{N_K(u^*)}{K}\in[0,1]$, we have
	\[ \operatorname{kl}\prt{\frac{1}{K} \mathbb{E}_{0}\left[N_K(u^*)\right], \frac{1}{K} \mathbb{E}_{u^*}[N_K(u^*)]} \leq \operatorname{KL}\left(\mathbb{P}_{0}, \mathbb{P}_{ u^*}\right). \]
	By Pinsker's inequality, it implies that
	\[ \frac{1}{K} \mathbb{E}_{u^*}[N_K(u^*)]\leq \frac{1}{K} \mathbb{E}_{0}\left[N_K(u^*)\right] + \sqrt{\frac{1}{2}\operatorname{KL}\left(\mathbb{P}_{0}, \mathbb{P}_{ u^* }\right)}. \]
	Since $\mathcal{M}_0$ and $\mathcal{M}_{u^*}$ only differs at stage $h^*$ when $(s,a)=x^*$, it follows from Fact \ref{fct:div_dec} that
	\[ \operatorname{KL}\left(\mathbb{P}_{0}, \mathbb{P}_{u^*}\right)=\mathbb{E}_{0}\left[N_{K}(u^*)\right] \mathrm{kl}(p,p+\varepsilon). \]
	By Lemma \ref{lem:kl_bd}, we have $\operatorname{kl}(p,p+\epsilon)\leq\frac{\epsilon^2}{p}$ for $\epsilon\ge0$ and $p+\epsilon\in[0,\frac{1}{2}]$. Consequently,
	\begin{align*}
	&\ \ \ \ \frac{1}{K} \sum_{u^*\in[d+1:\bar{H}+d]\times\mathcal{L}\times\mathcal{A}}\mathbb{E}_{u^*}[N_K(u^*)]\\
	&\leq \frac{1}{K} \mathbb{E}_{0}\left[\sum_{u^*\in[d+1:\bar{H}+d]\times\mathcal{L}\times\mathcal{A}}N_K(u^*)\right] + \frac{\epsilon}{\sqrt{2p}}\sum_{u^*\in[d+1:\bar{H}+d]\times\mathcal{L}\times\mathcal{A}}\sqrt{\mathbb{E}_{0}\left[N_{K}(u^*)\right]}\\
	&\leq 1+\frac{\epsilon}{\sqrt{2p}}\sqrt{\bar{L}AK\bar{H}},
	\end{align*}
	where the second inequality is due to the Cauchy-Schwartz inequality and the fact that $\sum_{u^*\in[d+1:\bar{H}+d]\times\mathcal{L}\times\mathcal{A}}N_K(u^*)=K$.\\
	It follows that 
	\begin{align*}
	\max_{u^*\in[d+1:\bar{H}+d]\times\mathcal{L}\times\mathcal{A}}\text{Regret}(\sA,\mathcal{M}_{u^*},K)\ge \frac{\exp(\beta H^{\prime})-1}{4\beta}K\epsilon\prt{1-\frac{1}{\bar{L}A\bar{H}}-\frac{\frac{\epsilon}{\sqrt{2p}}\sqrt{\bar{L}AK\bar{H}}}{\bar{L}A\bar{H}}}.
	\end{align*}
	Choosing $\epsilon=\sqrt{\frac{p}{2}}(1-\frac{1}{LA\bar{H}})\sqrt{\frac{LA\bar{H}}{K}}$ maximizes the lower bound
	\begin{align*}
	\max_{u^*\in[d+1:\bar{H}+d]\times\mathcal{L}\times\mathcal{A}}\text{Regret}(\sA,\mathcal{M}_{u^*},K)\ge \frac{\sqrt{p}}{8\sqrt{2}}\frac{\exp(\beta H^{\prime})-1}{\beta}\prt{1-\frac{1}{\bar{L}A\bar{H}}}^2\sqrt{\bar{L}AK\bar{H}}.
	\end{align*}
	Since $S\ge6$ and $A\ge2$, we have $\bar{L}=(1-\frac{1}{A})(S-3)+\frac{1}{A}\ge\frac{S}{4}$ and $1-\frac{1}{\bar{L}A\bar{H}}\ge1-\frac{1}{\frac{6}{4}\cdot2}=\frac{2}{3}$. Choose $\bar{H}=\frac{H}{3}$ and use the assumption that $d\leq\frac{H}{3}$ to obtain that $H^{\prime}=H-d-\bar{H}\ge\frac{H}{3}$. Now we choose $p=\frac{1}{4}\exp(-\beta H^{\prime})$
	and $\epsilon=\sqrt{\frac{p}{2}}(1-\frac{1}{\bar{L}A\bar{H}})\sqrt{\frac{LA\bar{H}}{K}}\leq\frac{1}{2\sqrt{2}}\exp(-\beta H^{\prime}/2)\sqrt{\frac{\bar{L}A\bar{H}}{K}}\leq\frac{1}{4}\exp(-\beta H^{\prime})$ if $K\ge2\exp(\beta H^{\prime})\bar{L}A\bar{H}$. Such choice of $p$ and $\epsilon$ guarantees the assumption of Lemma \ref{lem:kl_bd} and  that $p+\epsilon\leq\exp(-\beta H^{\prime})$,
	$\epsilon\leq2\exp(-\beta H^{\prime})$. Finally we use the fact that $\sqrt{\bar{L}AK\bar{H}}\ge\frac{1}{2\sqrt{3}}\sqrt{SAKH}$ to obtain
	\[ \max_{u^*\in[d+1:\bar{H}+d]\times\mathcal{L}\times\mathcal{A}}\text{Regret}(\sA,\mathcal{M}_{u^*},K)\ge\frac{1}{72\sqrt{6}}\frac{\exp(\beta H/6)-1}{\beta}\sqrt{SAKH}.\]
\end{proof}
Theorem \ref{thm:lb_exp} recovers the tight lower bound for standard episodic MDP, implying that the exponential dependence on $|\beta|$ and $H$ in the upper bounds is indispensable. Yet, it is not clear whether a similar lower bound holds for $\beta<0$, which is left as a future direction.

\section{Discussion}
\label{sec:dis}
In this section, we provide a comprehensive comparison of DRL algorithms (\texttt{RODI-MB}, \texttt{RODI-MF}), DRL with distribution representation (\texttt{RODI-OTP}, \texttt{RODI-PTO}), \texttt{RSVI2} \citep{fei2021exponential}, \texttt{RSVI} \citep{fei2020risk}, and \texttt{UCBVI} \citep{azar2017minimax} in terms of regret guarantees and computational complexity. The comparison is also succinctly encapsulated in Table \ref{tab:comp}.

\subsection{Numerical Results}
To validate the empirical performance of our algorithms, we conducted numerical experiments comparing \texttt{RODI-MB}, \texttt{RODI-MF}, and \texttt{RODI-Rep} with the risk-neutral algorithm \texttt{UCBVI} \citep{azar2017minimax}, \texttt{RSVI} in \cite{fei2020risk}, and \texttt{RSVI2} in \cite{fei2021exponential}. 

The experimental setup involved an MDP with $S=5$ states, $A=5$ actions, and a horizon $H=5$, mirroring the setup in \cite{du2022provably}.  The MDP consists of a fixed initial state denoted as state 0, and $S$ additional states.  The agent started in state 0 and could take actions from the set $[A]$, transitioning to one of the states in $[S]$ in the next step. The transition probabilities and reward functions were defined as follows for $2\leq h\leq H$:
\begin{align*}
    \forall a \in [A-1]: P_h(s'|s,a)&=\frac{0.5}{S-2} \ \forall s'\in [2:S-1],
    P_h(1|s,a)=0.5,  \\
    P_h(s'|s,A)&=\frac{0.001}{S-1}, \forall s'\in [S-1],
    P_h(S|s,A)=0.999 \\
    \forall a \in [A]: r_h(1,a)&=1, r_h(S,a)=0.4, r_h(s,a)=0 \ \forall s \in [2:S-1].
\end{align*}
This MDP was designed to be highly risky, with the risk-neutral optimal policy leading to a mean reward of 0.5 but with a chance of receiving no reward. A risk-sensitive policy might prefer the last action $A$, which offers slightly less mean reward but a more consistent return, indicating lower risk.

We set $\delta=0.005$ and $\beta=-1.1$. The results, as illustrated in Figure \ref{fig:exp}, demonstrates the regret ranking of these algorithms :
\begin{align*} 
    \underbrace{\texttt{RODI-MB}<\texttt{RODI-MF}}_{\texttt{RODI}}<\underbrace{\texttt{RODI-OTP}
    <\texttt{RODI-PTO}}_{\texttt{RODI-Rep}}\lesssim\texttt{RSVI2}<\texttt{RSVI}<\texttt{UCBVI}.
\end{align*}
Figure \ref{fig:exp} includes the following key observations: \\
(i) Advantage of distributional over non-Distributional algorithms: DRL algorithms (\texttt{RODI} and \texttt{RODI-Rep}) outperforms non-distributional algorithms, demonstrating the effectiveness of distributional optimism over bonus-based optimism. \\
(ii) Performance of \texttt{RODI} vs. \texttt{RODI-Rep}: While \texttt{RODI} shows better performance than \texttt{RODI-Rep}, the latter offers a balance between statistical and computational efficiency. \\
(iii) Comparison of \texttt{RODI-Rep} with \texttt{RSVI2}: \texttt{RODI-Rep} demonstrates advantages over \texttt{RSVI2} in terms of sample efficiency, while also maintaining computational efficiency.
\begin{figure}[ht]
\vskip -0.1in
\begin{center}
\centering
	\includegraphics[width=.5 \textwidth]{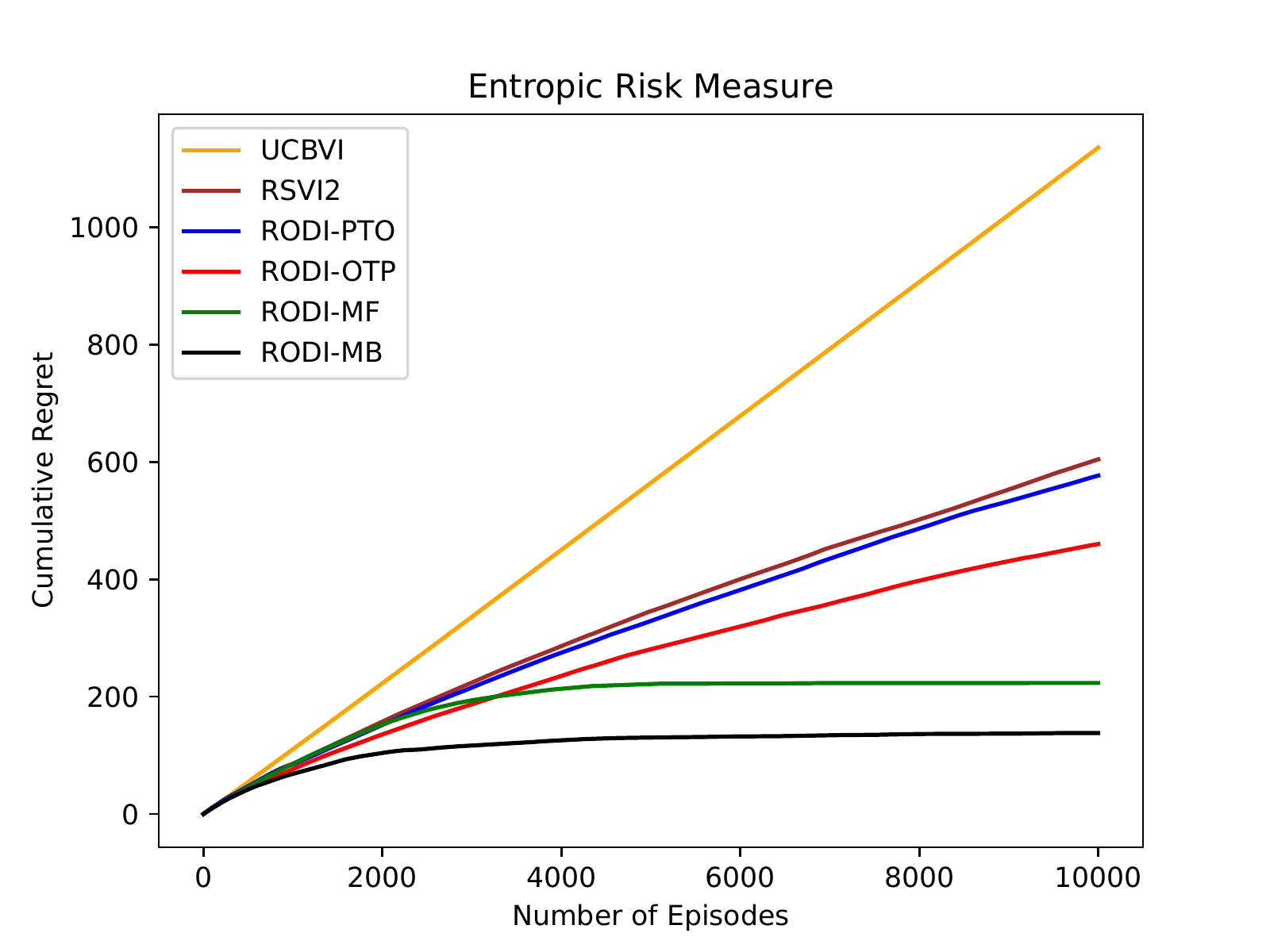}
	\caption{Comparison of regret for different algorithms.}
 \label{fig:exp}
 \end{center}
 \vskip -0.1in
\end{figure}
\subsection{Theoretical Comparisons}
\subsubsection{\texttt{RODI} vs. \texttt{RSVI2}}
We first provide theoretical justifications regarding the regret ranking of \texttt{RSVI} \citep{fei2020risk}, \texttt{RSVI2} \citep{fei2021exponential}, \texttt{RODI-MF}, and \texttt{RODI-MB}, which demonstrates the advantage of distributional optimism over bonus-based optimism used in \texttt{RSVI} and \texttt{RSVI2}. A key observation regarding the ranking of their value functions $V^k$ is that:
\[ \text{value functions}: \texttt{RSVI}>\texttt{RSVI2}>\texttt{RODI-MF}>\texttt{RODI-MB}\ge V^*.\]
This ordering will be formally presented in Equation \ref{eqt:v_comp}. The last part of this inequality sequence indicates that all these value functions are indeed optimistic. Given that the level of optimism is mirrored in the value functions, we can deduce:
\[ \text{optimism level}: \texttt{RSVI}>\texttt{RSVI2}>\texttt{RODI-MF}>\texttt{RODI-MB}.\]
Considering the relationship between regret and the optimistic value function $V^k$
\[ \text{Regret}=\sum_{k\in[K]} V^*_1-V^{\pi^k}_1 \leq \sum_{k\in[K]} V^{k}_1-V^{\pi^k}_1,\]
it is intuitive that a smaller $V^k$ or less optimism induces reduced regret. Consequently, their regret can be ranked as:
\[ \text{regret}:\texttt{RSVI}>\texttt{RSVI2}>\texttt{RODI-MF}>\texttt{RODI-MB},\]
which explains Figure \ref{fig:exp}. The regret bounds of \texttt{RODI} should at least match those of \texttt{RSVI2}, explaining the ranking of their regret bounds reported in Table \ref{tab:comp}:
\[ \text{regret bound}:\texttt{RSVI}>\texttt{RSVI2}=\texttt{RODI-MF}=\texttt{RODI-MB}.\]
Despite sharing same regret bounds with \texttt{RSVI2}, \texttt{RODI} outperforms \texttt{RSVI2} both theoretically and empirically. Formally speaking, let $V',V'',V$ denote the value functions generated by \texttt{RSVI}, \texttt{RSVI2}, and \texttt{RODI-MB} respectively. Let $\tilde{\eta}$ denote the distribution generated by \texttt{RODI-MF}. We omit $k$ for simplicity.
\begin{proposition}
\label{prop:v_com}
    Fix $(s,a,k,h)$. The comparison of their values is as follows:
\begin{equation}
\label{eqt:v_comp}
    \begin{aligned}
        \texttt{RSVI} \ \ \frac{1}{\beta}\log\prt{\brk{\hat{P}_h e^{\beta V'_{h+1}}}+b'_h}
    &\overset{(a)}{>}\frac{1}{\beta}\log\prt{\brk{\hat{P}_h e^{\beta V''_{h+1}}}+b'_h}\\
    &\overset{(b)}{>}\frac{1}{\beta}\log\prt{\brk{\hat{P}_h e^{\beta V''_{h+1}}}+b''_h} \ \ \texttt{RSVI2}\\
    &\overset{(c)}{>} \erm{\tilde{\eta}_h} \ \ \texttt{RODI-MF} \\
    &\overset{(d)}{>}\frac{1}{\beta}\log\prt{\brk{\tilde{P}_h e^{\beta V_{h+1}}}} \ \ \texttt{RODI-MB}\\
    &\overset{(e)}{>}\frac{1}{\beta}\log\prt{\brk{P_h e^{\beta V^*_{h+1}}}}.
    \end{aligned}
\end{equation}
\end{proposition}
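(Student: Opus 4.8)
The plan is to prove the whole chain by a single backward induction on $h$, carrying as the inductive invariant the pointwise ordering of the five value functions at the next step, $V'_{h+1}\ge V''_{h+1}\ge \erm{\nu^{\mathrm{MF}}_{h+1}}\ge V^{\mathrm{MB}}_{h+1}\ge V^*_{h+1}$. The base case at the terminal step $h=H+1$ is trivial, since all five value functions equal the common terminal value; and at $h=H$ the return is the deterministic reward, so every algorithm produces the same EU value with no future uncertainty to inflate. Throughout I would work with $\beta>0$ and in the EU domain, since $\tfrac1\beta\log(\cdot)$ is increasing, each expression carries the common factor $e^{\beta r_h}$, and the order is preserved under $E_\beta$ by Proposition \ref{prop:eq_eerm}; the $\beta<0$ case follows by the usual sign reversal. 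Once the step-$h$ ordering is shown for every $(s,a)$, taking $\max_a$ propagates the invariant to $V_h$ and closes the induction.

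For the inductive step I would establish (a)--(e) one at a time at a fixed $(s,a)$. Inequality (a) is pure monotonicity: $V'_{h+1}\ge V''_{h+1}$ gives $e^{\beta V'_{h+1}}\ge e^{\beta V''_{h+1}}$ pointwise, and applying the nonnegative kernel $\hat{P}_h$ and adding the common bonus $b'_h$ preserves it. Inequality (b) is the bonus comparison $b'_h\ge b''_h$, read off from the explicit schedules in \cite{fei2020risk} and \cite{fei2021exponential} (RSVI2's doubly decaying bonus removes the $\exp(|\beta|H^2)$ factor carried by RSVI's bonus). Inequality (e) is exactly the model optimism of Lemma \ref{lem:opt_model}: conditioned on $\mathcal{G}_\delta$ we have $P_h\in B_1(\hat{P}_h,c^k_h)$, so $[\tilde{P}_h e^{\beta V^{\mathrm{MB}}_{h+1}}]=\max_{P\in B_1(\hat{P}_h,c)}[P e^{\beta V^{\mathrm{MB}}_{h+1}}]\ge [P_h e^{\beta V^{\mathrm{MB}}_{h+1}}]\ge [P_h e^{\beta V^*_{h+1}}]$, the last step using the invariant $V^{\mathrm{MB}}_{h+1}\ge V^*_{h+1}$.

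The two substantive steps are (c) and (d). For (c) I would bound RODI-MF's distributional optimism against RSVI2's scalar bonus: by the Lipschitz property (Lemma \ref{lem:eu_lip}) and linearity of EU,
\[ E_\beta\!\big(\rO^\infty_{c}([\hat{P}_h\nu^{\mathrm{MF}}_{h+1}])\big)\le E_\beta\!\big([\hat{P}_h\nu^{\mathrm{MF}}_{h+1}]\big)+L_{H-h}c=[\hat{P}_h e^{\beta V^{\mathrm{MF}}_{h+1}}]+L_{H-h}c, \]
while the invariant gives $[\hat{P}_h e^{\beta V''_{h+1}}]\ge[\hat{P}_h e^{\beta V^{\mathrm{MF}}_{h+1}}]$; the inequality then reduces to $b''_h\ge L_{H-h}c^k_h$, i.e. to checking that RSVI2's bonus dominates the distributional-optimism increment. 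For (d) the key idea is to route the comparison through RODI-MB evaluated at RODI-MF's \emph{own} next distribution. Writing $\mathrm{MB}(\nu)\triangleq\max_{P\in B_1(\hat{P}_h,c)}E_\beta([P\nu])$ and $\mathrm{MF}(\nu)\triangleq E_\beta(\rO^\infty_c([\hat{P}_h\nu]))$, I would show $\mathrm{MF}(\nu^{\mathrm{MF}}_{h+1})\ge \mathrm{MB}(\nu^{\mathrm{MF}}_{h+1})\ge \mathrm{MB}(\nu^{\mathrm{MB}}_{h+1})$. The first inequality holds at a fixed next distribution because Lemma \ref{lem:mix_dis} places every $[P\nu]$ with $P\in B_1(\hat{P}_h,c)$ inside the sup-norm ball $B_\infty([\hat{P}_h\nu],c)$, over which $\rO^\infty_c$ maximizes $E_\beta$ by Lemma \ref{lem:opt_opt}; the second holds because $\mathrm{MB}(\nu)=\max_P\langle P,(e^{\beta V_{h+1}(s')})_{s'}\rangle$ depends on $\nu$ only through the EU-vector and is monotone in it, so the invariant $V^{\mathrm{MF}}_{h+1}\ge V^{\mathrm{MB}}_{h+1}$ suffices.

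I expect the main obstacle to be inequality (c): unlike the others it is not purely structural but needs the explicit bonus schedule of RSVI2 together with the quantitative claim $b''_h\ge L_{H-h}c^k_h$, i.e. that the scalar bonus is never smaller than the exact EntRM gain produced by the distributional optimism operator. The routing trick in (d) --- comparing $\mathrm{MF}$ and $\mathrm{MB}$ at the \emph{same} next distribution before invoking monotonicity --- is the other delicate point, since the naive attempt to show $\mathrm{MF}$ is monotone in the EU-order of $\nu$ fails: $E_\beta(\rO^\infty_c F)$ is not monotone in $E_\beta(F)$ because $[\,\cdot-c\,]^+$ is nonlinear, so only the stochastic-order argument at fixed $\nu$ goes through. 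A minor bookkeeping point is that the strict inequalities hold only where the confidence radii and bonuses are strictly positive (visited pairs, $h<H$); elsewhere the chain degenerates to equalities, and it is the non-strict ordering that the induction actually transports.
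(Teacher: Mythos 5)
Your proposal is correct and follows essentially the same route as the paper's proof: backward induction on the invariant $V'_{h+1}\ge V''_{h+1}\ge \erm{\tilde{\nu}_{h+1}}\ge V_{h+1}\ge V^*_{h+1}$, with (a) by monotonicity, (b) by comparing the bonus multipliers, (c) by EU-linearity plus the Lipschitz bound $L_{H-h}c_h\le b''_h$, (d) by placing the $\ell_1$-ball model images inside the sup-norm ball (Lemmas \ref{lem:mix_dis} and \ref{lem:opt_opt}) and then invoking EU-monotonicity, and (e) by model optimism. The only cosmetic difference is that in (d) you compare against $\max_{P\in B_1(\hat{P}_h,c)}E_\beta\prt{[P\nu^{\mathrm{MF}}_{h+1}]}$ whereas the paper evaluates the specific optimistic model $\tilde{P}_h$ at $\tilde{\nu}_{h+1}$, and your remark on strict versus non-strict inequalities is a fair bookkeeping point that the paper itself glosses over.
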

The proof is detailed in  Appendix \ref{app:pf}. Both \texttt{RSVI} and \texttt{RSVI2} use exploration bonuses, defined as $b'_h=|e^{\beta H}-1|c_h$ and $b''_h=|e^{\beta (H+1-h)}-1|c_h$ respectively, where $c_h(s,a)$ represents the model estimation error
\[  \norm{\hat{P}_h(s,a)-P_h(s,a)}_1 \leq c_h(s,a)=\sqrt{\frac{S\iota}{N_h(s,a)}}.\]
Both $b''_h$ and $b'_h$ are formulated as a multiplier times $c_h$. Notably, $b''_h$, referred to as the the \emph{doubly decaying bonus} \citep{fei2021exponential}, decreases its multiplier exponentially across stages $h$, contrasting with $b'_h$ in \texttt{RSVI}.  In comparison, \texttt{RODI} directly incorporates optimism into the return distribution using an optimism constant $c_h$. Our \textit{distributional analysis} establishes a connection between $c_h$ and the bonus via the Lipschitz constant of EU:
\[  b''_h = L(E_{\beta},H-h)c_h < L(E_{\beta},H) c_h =b'_h,\]
where $L(E_{\beta},M)$ denotes the Lipschitz constant of EU over the distributions supported in $[0,M]$. This distributional perspective posits that \texttt{RSVI} and \texttt{RSVI2} design bonuses to offset the error in value estimates, which is bounded by the product of the Lipschitz constant of EU and the error in the return distribution:
\[  V^k_h-V_h \leq L(E_{\beta},H-h)\norm{\eta^k_h-\eta_h} \leq L(E_{\beta},H-h)\norm{P^k_h-P_h} \leq L(E_{\beta},H-h)c^k_h. \]
Under the distributional perspective, the multiplier in the bonus $b''_h$ is interpreted as the Lipschitz constant that links the return estimation error $c_h$ to the value estimation error $b''_h$. The Lipschitz constant decreases exponentially in $h$ as the range $[0,H-h]$ of the return distribution narrows. Furthermore, $b''_h$ used in \texttt{RSVI2} is not improvable in the sense that its corresponding Lipschitz constant is proven to be tight, as shown in Lemma \ref{lem:eu_lip}. 

In conclusion, bonus-based optimism requires an exponentially decaying multiplier or Lipschitz constant, whereas distributional optimism functions directly at the distributional level, obviating the need for a multiplier. Next, we theoretically justify the regret ranking of \texttt{RODI-OTP} and  \texttt{RODI-PTO}, which interpolates between \texttt{RODI} and \texttt{RSVI2}.
\subsubsection{\texttt{RODI-Rep} vs. \texttt{RSVI2}}
We delve into the analysis by first explaining why \texttt{RODI-PTO} achieves marginally lower regret compared to \texttt{RSVI2}, and subsequently, we justify the advantage of \texttt{RODI-OTP} over \texttt{RODI-PTO}.
\paragraph{Near-equivalence between \texttt{RSVI2} and \texttt{RODI-PTO}.} We can show the  near-equivalence between \texttt{RSVI2} and \texttt{RODI-PTO} using induction. Let $V$ and $V'$  denote the value functions generated by \texttt{RODI-PTO} and \texttt{RSVI2} respectively. We start with the base case that $h=H$. By the construction of \texttt{RODI-PTO}, we have 
\begin{align*}
    &q_H(s,a)=q(r_H(s,a);0,1) \Longrightarrow \\
    &Q_H(s,a)=\frac{1}{\beta}\log\prt{(1-q_H(s,a))e^0+q_H(s,a)e^{\beta}}=r_H(s,a)=Q'_H(s,a) \Longrightarrow\\
    &V_H(s)=\max_a Q_H(s,a) = \max_a Q'_H(s,a) = V'_H(s),
\end{align*}
verifying the equivalence at step $H$. Now fix $h\in[H-1]$. Suppose the following holds
\begin{align*}
    V_{h+1}(s) &= \frac{1}{\beta}\log\prt{1-q_{h+1}(s)+q_{h+1}(s)e^{\beta (H-h)}}\leq V'_{h+1}(s),\forall s\in\cS \Longrightarrow \\
    &1-q_{h+1}(s)+q_{h+1}(s)e^{\beta (H-h)} \leq e^{\beta V'_{h+1}(s)},\forall s\in\cS .
\end{align*}
Recall the recursion of $q_h(s,a)$ in \texttt{RODI-PTO}
\begin{align*}
    \hat{q}_{h}(s,a) &\leftarrow [\hat{P}_h q_{h+1}](s,a)  \\
    \bar{q}_{h}(s,a) &\leftarrow (1-\hat{q}_{h}(s,a))q^L_h(s,a)+\hat{q}_{h}(s,a)q^R_h(s,a)  \\
    q_{h}(s,a) &\leftarrow \min(\bar{q}_{h}(s,a)+c_h(s,a),1).  
\end{align*}
It follows that
\begin{align*}
    Q_h(s,a)&=\frac{1}{\beta}\log\prt{(1-q_{h}(s,a))e^0+q_{h}(s,a)e^{\beta (H+1-h)}}\\
    &=\frac{1}{\beta}\log\prt{1+q_{h}(s,a)(e^{\beta (H+1-h)}-1)}\\
    &\leq\frac{1}{\beta}\log\prt{1+\bar{q}_{h}(s,a)(e^{\beta (H+1-h)}-1)+c_h(s,a)(e^{\beta (H+1-h)}-1)},
\end{align*}
where the last inequality becomes equality if $\bar{q}_{h}(s,a)+c_h(s,a)\leq1$. By the definition of projection, we obtain
\begin{align*}
    1+\bar{q}_{h}(s,a)(e^{\beta (H+1-h)}-1) &= 1-\bar{q}_{h}(s,a)+\bar{q}_{h}(s,a)e^{\beta (H+1-h)} \\
    &=(1-\hat{q}_{h}(s,a))e^{\beta r_h(s,a)}+\hat{q}_{h}(s,a)e^{\beta(r_h(s,a)+H-h)} \\
    &=[\hat{P}_h (1-q_{h+1})](s,a)e^{\beta r_h(s,a)}+[\hat{P}_h q_{h+1}](s,a)e^{\beta(r_h(s,a)+H-h)}\\
    &=\sum_{s'}\hat{P}_h(s'|s,a)\prt{(1-q_{h+1}(s'))e^{\beta r_h(s,a)}+q_{h+1}(s')e^{\beta(r_h(s,a)+H-h)}}\\
    &=e^{\beta r_h(s,a)}\sum_{s'}\hat{P}_h(s'|s,a)\prt{(1-q_{h+1}(s'))+q_{h+1}(s')e^{\beta(H-h)}}\\
    &= e^{\beta r_h(s,a)}\sum_{s'}\hat{P}_h(s'|s,a)e^{\beta V_{h+1}(s')}\\
    &\leq e^{\beta r_h(s,a)}\sum_{s'}\hat{P}_h(s'|s,a)e^{\beta V'_{h+1}(s')},
\end{align*}
which implies 
\begin{align*}
    Q_h(s,a)\leq \frac{1}{\beta}\log\prt{e^{\beta r_h(s,a)}\sum_{s'}\hat{P}_h(s'|s,a)e^{\beta V'_{h+1}(s')}+c_h(s,a)(e^{\beta (H+1-h)}-1)} =Q'_h(s,a).
\end{align*}
Then we have $V_h(s)=\max_a Q_h(s,a) \leq \max_a Q'_h(s,a) = V'_h(s)$. The induction is completed.
Moreover, it holds that $V_h=V'_h$ for every $h\in[H]$ if $\bar{q}_{h}(s,a)+c_h(s,a)\leq1$ for every $(h,s,a)$. This condition is likely to be met for large values of $k$, considering that
\[  k\uparrow \Longrightarrow N^k_h \downarrow \Longrightarrow c^k_h\propto 1/\sqrt{N^k_h} \downarrow.\]
\paragraph{Advantage of \texttt{RODI-OTP} over \texttt{RSVI2}} Let $V$ and $V'$  denote the value functions generated by \texttt{RODI-OTP} and \texttt{RSVI2} respectively. The recursion of $q_h(s,a)$ in \texttt{RODI-OTP} writes
\begin{align*}
    \hat{q}_{h}(s,a) &\leftarrow [\hat{P}_h q_{h+1}](s,a)  \\
    \tilde{q}_{h}(s,a) &\leftarrow \min(\hat{q}_{h}(s,a)+c_h(s,a),1)   \\
    q_{h}(s,a) &\leftarrow (1-\tilde{q}_{h}(s,a))q^L_h(s,a)+\tilde{q}_{h}(s,a)q^R_h(s,a).  
\end{align*}
Fix $(h,s,a)\in[H-1]\times\cS\times\cA$. Note that
\begin{align*}
    V_{h+1}(s) = \frac{1}{\beta}\log\prt{1-q_{h+1}(s)+q_{h+1}(s)e^{\beta (H-h)}},\forall s\in\cS, \\
    \Longrightarrow [\hat{P}_h e^{\beta V_{h+1}}](s,a) = (1-\hat{q}_{h}(s,a))+\hat{q}_{h}(s,a)e^{\beta (H-h)}, \forall (s,a), 
\end{align*}
then we have
\begin{align*}
    Q_h(s,a) &= \frac{1}{\beta}\log\prt{1-q_{h}(s,a)+q_{h}(s,a)e^{\beta(H+1-h)}}\\
    &= \frac{1}{\beta}\log\prt{(1-\tilde{q}_{h}(s,a))e^{\beta r_h(s,a)}+\tilde{q}_{h}(s,a)e^{\beta(r_h(s,a)+H-h)}}\\
    &\leq \frac{1}{\beta}\log\prt{(1-\hat{q}_{h}(s,a))e^{\beta r_h(s,a)}+\hat{q}_{h}(s,a)e^{\beta(r_h(s,a)+H-h)}+c_h(s,a)(e^{\beta(r_h(s,a)+H-h)}-e^{\beta r_h(s,a)})} \\
    &= \frac{1}{\beta}\log\prt{e^{\beta r_h(s,a)}[\hat{P}_h e^{\beta V_{h+1}}](s,a)+c_h(s,a)e^{\beta r_h(s,a)}(e^{\beta(H-h)}-1)} \\
    &<\frac{1}{\beta}\log\prt{e^{\beta r_h(s,a)}[\hat{P}_h e^{\beta V'_{h+1}}](s,a)+c_h(s,a)e^{\beta r_h(s,a)}(e^{\beta(H-h)}-1)} \\
    &<\frac{1}{\beta}\log\prt{e^{\beta r_h(s,a)}[\hat{P}_h e^{\beta V'_{h+1}}](s,a)+c_h(s,a)(e^{\beta(H+1-h)}-1)} =Q'_h(s,a).
\end{align*}
\begin{remark}
This explains why \texttt{RODI-OTP} achieves an order of magnitude improvement in regret compared with \texttt{RSVI2} as well as \texttt{RODI-PTO}, as the "optimism level ratio" of \texttt{RODI-OTP} to  \texttt{RSVI2} at step $h$ is quantifiable by
    \[  \frac{e^{\beta(r_h(s,a)+H-h)}-e^{\beta r_h(s,a)}}{e^{\beta(H+1-h)}-1}<1. \]
\end{remark}
\begin{remark}
The difference in the optimism level between the two algorithms stems from originates from their respective approaches to bounding the estimation error:
    \[ \sum_{s'}\hat{P_h}(s'|s,a)e^{\beta(r_h(s,a)+V'_{h+1}(s'))}.\]
Specifically, \texttt{RSVI2} treats $e^{\beta(r_h(s,a)+V'_{h+1})}$ as a variable within the range $[1,e^{\beta(H+1-h)}]$. However, since $e^{\beta r_h(s,a)}$ is deterministic and known,  the bonus can be refined by acknowledging
\[  \sum_{s'}\hat{P}_h(s'|s,a)e^{\beta(r_h(s,a)+V'_{h+1}(s'))}=e^{\beta(r_h(s,a)}[\hat{P}_he^{\beta V_{h+1}}](s,a), \]
where $e^{\beta V_{h+1}}\in[1,e^{\beta(H-h)}]$.
\end{remark}

\paragraph{Why \texttt{OTP} is better than \texttt{PTO}.} The superiority of \texttt{OTP}  over \texttt{PTO} can be substantiated through an insightful observation about the optimization problem:
\begin{equation}
\begin{aligned}
& \underset{q}{\text{min}} & & \erm{L,R;q} \\
& \text{s.t.}
& & \erm{L,R;q}\ge\erm{\eta} \\
&&& \norm{\eta-\hat{\eta}}_{\infty} \leq c \\
&&& \eta=D(\text{Supp}(\hat{\eta}))
\end{aligned}
\end{equation} 
Let $(L,R;\tilde{q})$ be the optimal solution to this problem. It turns out that the optimal solution is given by $(L,R;\tilde{q})=\Pi\rO_c\hat{\eta}$, aligning with the \texttt{OTP} principle.  Fixing $(h,s,a)$, we interpret $\hat{\eta}\triangleq [\hat{\cT}_h\nu_{h+1}](s,a)$ as the empirical Bellman operator applied to $\nu_{h+1}$. Suppose $\nu_{h+1}$ is optimistic relative to the true distribution $\nu^*_{h+1}$, i.e., $\erm{\nu_{h+1}}\ge\nu^*_{h+1}$. Define $\check{\eta}\triangleq [\cT_h\nu_{h+1}](s,a)$, which is the exact Bellman operator applied to $\nu_{h+1}$. Given that
\[ \norm{\hat{\eta}-\check{\eta}}_{\infty}=\norm{[(\hat{\cT}_h-\cT_h)\nu_{h+1}](s,a) }_{\infty}\leq c_h(s,a), \]
the optimal solution satisfies 
\[ \erm{L,R;\tilde{q}} \ge \erm{\check{\eta}}=\erm{[\hat{\cT}_h\nu_{h+1}](s,a)}\ge \erm{[\hat{\cT}_h\nu^*_{h+1}](s,a)}=\erm{\eta^*_h(s,a)}=Q^*_h(s,a). \]
Hence, the optimal solution $(L,R;\tilde{q})$ is optimistic over $\eta^*_h(s,a)$. The nature of the optimization problem compels $(L,R;\tilde{q})$ to be the Bernoulli distribution with support $(L_h,R_h)$ that necessitates minimal optimism over $\eta^*_h(s,a)$. Notably, the \texttt{PTO} solution $\rO_c\Pi\hat{\eta}$ is also a feasible solution.  Consequently, \texttt{OTP} induces less optimism than \texttt{PTO}:
\[  \erm{\Pi\rO_c\hat{\eta}} < \erm{\rO_c\Pi\hat{\eta}}.\]
This analysis elucidates the inherent advantage of the \texttt{OTP} approach over \texttt{PTO}. By inverting the order of the projection and optimism operators, \texttt{OTP} not only ensures an optimism over the true distribution but also guarantees that the induced optimism is minimal and necessary.

\subsection{Distributional Perspective}
The distributional perspective is crucial in both the algorithm design and the regret analysis of \texttt{RODI}, offering  advantages and novel approaches.
\subsubsection{Algorithm design}
\textit{Revisiting \texttt{RSVI2}}: \texttt{RSVI2} effectively operates as a model-based algorithm, implicitly maintaining an empirical model through visiting counts. We rewrite the key step in \texttt{RSVI2} as:
\[  Q''_h =\min\cbrk{H+1-h,r_h+\frac{1}{\beta}\log\prt{\brk{\hat{P}_h e^{\beta V''_{h+1}}}+b''_h}}. \]
Here, $b''_h$ is chosen to ensure optimism:
\[ \brk{\hat{P}_h e^{\beta V''_{h+1}}}+ b''_h \ge \brk{P_h e^{\beta V''_{h+1}}} \ge \brk{P_h e^{\beta V^*_{h+1}}} \Longleftarrow b''_h \ge \brk{(P_h-\hat{P}_h) e^{\beta V''_{h+1}}}\]
\textit{Distributional perspective}: In contrast, the distributional perspective leads to a fundamentally different algorithm design. The primary distinction of \texttt{RODI} is its implementation of return distribution iterations based on approximate distributional Bellman equation. When $\beta\rightarrow0$, \texttt{RODI} transitions to a risk-neutral algorithm, unlike \texttt{RSVI2}, where the log term becomes constant. \texttt{RODI} also introduces \textit{distributional optimism}, yielding optimistic return distributions without needing a multiplier, unlike bonus-based optimism. This approach not only contrasts sharply with bonus-based methods but also demonstrates improved theoretical and empirical performance. \\
\subsubsection{Regret analysis} 
Our regret analysis, which we term \emph{distributional analysis}, stands apart from traditional scalar-focused approaches. This analysis is centered around the distributions of returns rather than the risk values of these returns. It involves various distributional operations, including understanding the optimism between different distributions and the errors caused by distribution estimation. These elements fundamentally differ from classical analysis methods that focus on scalars (value functions). Let's highlight some novel aspects of our distributional analysis compared to traditional approaches \citep{fei2020risk,fei2021exponential}. \\
\textbf{(i) Distributional optimism}. Traditional analysis typically employs OFU to construct a series of optimistic value functions. In contrast, our distributional approach implements optimism directly at the distribution level, leading to a sequence of optimistic return distributions. This involves defining a high probability event under which the true return distribution is close to the estimated one within a certain confidence radius, followed by the application of a distributional optimism operator.  \\
\textbf{(ii) Lipschitz continuity and linearity in EU.} We leverage key properties of EU, such as Lipschitz continuity and linearity, that are crucial in establishing regret upper bounds. The Lipschitz continuity of EU relates the distance between distributions to their EU values' difference. In contrast, EntRM is non-linear w.r.t. the distribution, potentially introducing a factor of $\exp(|\beta| H)$ in error propagation across time steps, leading to a compounded factor of $\exp(|\beta| H^2)$ in the regret bound. \\
\textbf{(iii) Better interpretability}. Both \texttt{RODI} and \texttt{RSVI2} share a same regret bound of 
\[ \tilde{\mathcal{O}}\prt{\frac{\exp(|\beta| H)-1}{|\beta|}H\sqrt{S^2AK}}. \]
From the distributional perspective, the exponential term $\frac{\exp(|\beta| H)-1}{|\beta|}$ is interpreted as the Lipschitz constant of EntRM, highlighting the impact of EntRM’s nonlinearity on sample complexity. A larger Lipschitz constant implies a greater estimation error in values, thus leading to a more unfavorable regret bound.

\section{Closing Remarks}
\label{sec:rk}
In this paper, we present a novel framework for risk-sensitive distributional dynamic programming. We then introduce two types of computationally efficient DRL algorithms, which implement the OFU principle at the distributional level to strike a balance between exploration and exploitation under the risk-sensitive setting. We provide  theoretical justification and numerical results demonstrating that these algorithms outperforms existing methods while maintaining computational efficiency compared. Furthermore, we prove that DRL can attain near-optimal regret
upper bounds compared with our improved lower bound. 

Looking forward, there are several promising avenues for future research. Our current regret upper bound has an additional factor of $\sqrt{HS}$ compared to the lower bound, and it may be possible to eliminate this factor through further algorithmic improvements or refined analysis techniques. Additionally, extending the DRL algorithm from tabular MDP to function approximation settings would be an interesting and valuable direction for future investigation. Lastly, it would be worthwhile to explore whether our DDP framework can be applied to other risk measures beyond the ones considered in this paper.

\appendix
\section{Table of Notation}
\label{app:not}
$$
\begin{array}{ll}
\hline \text { Symbol } & \text {Explanation } \\
\hline \mathscr{D} & \text {The space of all CDFs} \\
\mathscr{D}(a,b) & \text {The space of all CDFs supported on $[a,b]$} \\
\mathscr{D}_M & \text {The space of all CDFs supported on $[0,M]$} \\
B_{\infty}(F,c) & \text {The $\norm{\cdot}_{\infty}$ norm ball centered at $F$ with radius $c$ } \\
\delta_c & \text {the step function with parameter $c$}  \\
L_M & \text {The  Lipschitz constant of EntRM w.r.t. $\infty{\cdot}_{\infty}$ over $\mathscr{D}_M$ } \\
\boldsymbol{\rO_{c}^{\infty}} & \text {The optimism operator  w.r.t. $\norm{\cdot}_{\infty}$ with coefficient $c$ } \\
\mathcal{M} & \text {MDP instance } \\
\mathcal{S} & \text {finite state space} \\
\mathcal{A} & \text {finite action space} \\
r_h & \text {deterministic function at step $h$} \\
S & \text {number of states } \\
A & \text {number of actions } \\
H & \text {Number of time-steps per episode } \\
K & \text {Number of episodes } \\
Z^{\pi}_h(s,a) & \text {return of $(s,a)$ at step $h$ with policy $\pi$} \\
Y^{\pi}_h(s) & \text {return of $s$ at step $h$ with policy $\pi$} \\
Z^{*}_h(s,a) & \text {optimal return of $(s,a)$ at step $h$} \\
Y^{*}_h(s) & \text {optimal return of $s$ at step $h$} \\
\eta^{\pi}_h(s,a) & \text {distribution of $Z^{\pi}_h(s,a)$} \\
\nu^{\pi}_h(s) & \text {distribution of $Y^{\pi}_h(s)$} \\
\eta^{*}_h(s,a) & \text {distribution of $Z^{*}_h(s,a)$} \\
\nu^{*}_h(s) & \text {distribution of $Y^{*}_h(s)$} \\
Q^{\pi}_h(s,a) & \text {EntRM value of $Z^{\pi}_h(s,a)$} \\
V^{\pi}_h(s) & \text {EntRM value of $Y^{\pi}_h(s)$} \\
Q^{*}_h(s,a) & \text {EntRM value of $Z^{*}_h(s,a)$} \\
V^{*}_h(s) & \text {EntRM value of $Y^{*}_h(s)$} \\
J^{\pi}_h(s,a) & \text {EU value of $Z^{\pi}_h(s,a)$} \\
W^{\pi}_h(s) & \text {EU value of $Y^{\pi}_h(s)$} \\
J^{*}_h(s,a) & \text {EU value of $Z^{*}_h(s,a)$} \\
W^{*}_h(s) & \text {EU value of $Y^{*}_h(s)$} \\
\cH^k_h & \text {history up to step $h$ of episode $k$} \\
\cF_k & \text {history up to episode $k-1$} \\
\sA & \text {RL algorithm } \\
\pi & \text {policy} \\
N^k & \text {visiting count} \\
\hat{P}^k & \text {empirical transition function in episode $k$} \\
\cT & \text {distributional Bellman operator} \\
\Pi & \text {projection operator} \\
(x_1,x_2;p) & \text {a distribution taking values $x_1,x_2$ with probability $1-p$ and $p$} \\
(x;p) & \text {a discrete distribution with $\bP(X=x_i)=p_i$.} \\
|\eta| & \text {the number of atoms of the distribution $\eta$} \\
\hline
\end{array}
$$

\section{Missing Proofs}
\label{app:pf}
\subsection{Missing Proofs in Section \ref{sec:rodi_mf}}
\textbf{Proof of Lemma \ref{lem:eu_lip}}
\begin{proof}
	We first provide the proof for the case $\beta >0$. For any $F,G\in\mathscr{D}_M$, without loss of generality we assume $\int_0^M G(x)d\exp(\beta x)-\int_0^M F(x)d\exp(\beta x) \ge0$, otherwise we switch the order.
	\begin{align*}
	&|E_{\beta}(F) - E_{\beta}(G)| = \abs{\int_0^M \exp(\beta x)dF(x)-\int_0^M \exp(\beta x)dG(x)} \\
	&=\abs{\exp(\beta x)F(x)|_0^M-\int_0^M F(x)d\exp(\beta x)-\exp(\beta x)G(x)|_0^M+\int_0^M G(x)d\exp(\beta x) } \\
	&= \int_0^M (G(x)-F(x))d\exp(\beta x)\leq \int_0^M \abs{G(x)-F(x)} d\exp(\beta x)\\
	&\leq \norm{F-G}_{\infty}\int_0^M 1 d\exp(\beta x)=(\exp(\beta M)-1)\norm{F-G}_{\infty}.
	\end{align*}
	For the case $\beta<0$, we assume $\int_0^M G(x)d\exp(\beta x)-\int_0^M F(x)d\exp(\beta x) \ge0$. 
	\begin{align*}
	&|E_{\beta}(F) - E_{\beta}(G)| = \int_0^M (G(x)-F(x))d\exp(\beta x) =\int_0^M (G(x)-F(x)) \beta \exp(\beta x)dx \\
	&\leq \int_0^M \abs{G(x)-F(x)} |\beta|\exp(\beta x)dx  \\
	&\leq \norm{F-G}_{\infty}\int_0^M -1 d\exp(\beta x)=(1-\exp(\beta M))\norm{F-G}_{\infty}\\
	&=\abs{\exp(\beta M)-1}\norm{F-G}_{\infty}.
	\end{align*}
	Thus $L_M=\abs{\exp(\beta M)-1}$ for EU. To show the tightness of the constant, consider two scaled Bernoulli distributions $F=(1-\mu_1)\psi_0+\mu_1\psi_M$ and  $G=(1-\mu_2)\psi_0+\mu_2\psi_M$, where $\mu_1,\mu_2\in(0,1)$ are some constants. It holds that
	\begin{align*}
	\abs{E_{\beta}(F) - E_{\beta}(G)} &= \abs{\mu_1\exp(\beta M)+1-\mu_1-(\mu_2\exp(\beta M)+1-\mu_2)}\\
	&=\abs{\mu_1-\mu_2}\abs{\exp(\beta M)-1}=\norm{F-G}_{\infty}L_M,
	\end{align*}
	where the last  equality holds since $\|F-G\|_{\infty}=\abs{F(0)-G(0)}=\abs{\mu_1-\mu_2}$ (independent of $M$). More formally, we have
	\[\inf_{M>0,\beta>0}\sup_{F,G\in\mathscr{D}_M}\frac{|E_{\beta}(F) - E_{\beta}(G)|}{\|F-G\|_{\infty}}= \abs{\exp(\beta M)-1}=L_M.\]
\end{proof}
\textbf{Proof of Lemma \ref{lem:main_event}}
\begin{fact}[$\ell_1$ concentration bound, \cite{weissman2003inequalities}]
\label{fct:l1_con}
Let $P$ be a probability distribution over a finite discrete measurable space $(\mathcal{X}, \Sigma)$. Let $\widehat{P}_{n}$ be the empirical distribution of $P$ estimated from $n$ samples. Then with probability at least $1-\delta$,
$$
\left\|\widehat{P}_{n}-P\right\|_{1} \leq \sqrt{\frac{2|\mathcal{X}|}{n} \log \frac{1}{\delta}}.
$$
\end{fact}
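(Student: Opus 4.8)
The plan is to recognize this as the classical Weissman--Ord\'o\~nez--Weinberger $\ell_1$ deviation bound and to derive it from first principles via a union bound over subsets of $\mathcal{X}$ combined with Hoeffding's inequality. The starting observation is the variational characterization of the total-variation/$\ell_1$ distance: writing $f(x)\triangleq\widehat{P}_n(x)-P(x)$, both distributions sum to one, so $\sum_x f(x)=0$, which forces the positive and negative parts of $f$ to have equal mass. Consequently
\[
\norm{\widehat{P}_n-P}_1=2\max_{A\subseteq\mathcal{X}}\prt{\widehat{P}_n(A)-P(A)},
\]
where the maximum is attained at $A^{+}\triangleq\cbrk{x:\widehat{P}_n(x)>P(x)}$. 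This reduces controlling an $\ell_1$ norm (a sum of $|\mathcal{X}|$ dependent fluctuations) to controlling the single worst-case subset probability deviation.

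The key steps, in order, are as follows. First, fix a subset $A\subseteq\mathcal{X}$ and note that $\widehat{P}_n(A)=\frac1n\sum_{i=1}^n\mathbb{I}\cbrk{X_i\in A}$ is an average of $n$ i.i.d.\ Bernoulli$(P(A))$ variables taking values in $[0,1]$; Hoeffding's inequality then gives $\bP\prt{\widehat{P}_n(A)-P(A)\ge t}\le\exp(-2nt^2)$. Second, take a union bound over all subsets, discarding $A=\emptyset$ and $A=\mathcal{X}$ (for which the deviation is identically zero), leaving $2^{|\mathcal{X}|}-2$ nontrivial sets, to obtain
\[
\bP\prt{\tfrac12\norm{\widehat{P}_n-P}_1\ge t}\le\prt{2^{|\mathcal{X}|}-2}\exp(-2nt^2).
\]
Third, substitute $\varepsilon=2t$ to get $\bP\prt{\norm{\widehat{P}_n-P}_1\ge\varepsilon}\le(2^{|\mathcal{X}|}-2)\exp(-n\varepsilon^2/2)$, and finally invert by setting the right-hand side equal to $\delta$, yielding $\varepsilon=\sqrt{\tfrac{2}{n}\log\tfrac{2^{|\mathcal{X}|}-2}{\delta}}$.

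The main obstacle is cosmetic rather than conceptual: it is reconciling this sharp exponential bound with the clean form $\sqrt{\tfrac{2|\mathcal{X}|}{n}\log\tfrac1\delta}$ stated in the fact, since the exponent of the union bound contributes an additive $\log(2^{|\mathcal{X}|}-2)$ term under the square root. I would bound $\log(2^{|\mathcal{X}|}-2)\le|\mathcal{X}|\log 2\le|\mathcal{X}|$, giving the honest intermediate bound $\varepsilon\le\sqrt{\tfrac{2}{n}\prt{|\mathcal{X}|+\log\tfrac1\delta}}$, and then observe that in the relevant regime $|\mathcal{X}|\ge2$ and $\log(1/\delta)\ge2$ one has $|\mathcal{X}|+\log\tfrac1\delta\le|\mathcal{X}|\log\tfrac1\delta$ (equivalently $(|\mathcal{X}|-1)(\log\tfrac1\delta-1)\ge1$), which recovers exactly the asserted form. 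I would flag that the stated inequality is thus the standard convenient simplification valid for the small-$\delta$ regime in which the good event $\mathcal{G}_\delta$ is invoked; the substantive content is entirely in the subset-reduction identity and Hoeffding's inequality, both of which are routine.
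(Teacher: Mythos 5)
Your proposal is correct, but it cannot be compared step-by-step against a proof in the paper for a simple reason: the paper does not prove this statement at all. It is imported verbatim as a black-box citation to \cite{weissman2003inequalities} and invoked only inside the proof of Lemma \ref{lem:main_event}. What you have reconstructed is precisely the standard proof of the cited theorem, and every step checks out: the variational identity $\norm{\widehat{P}_n-P}_1=2\max_{A\subseteq\mathcal{X}}\prt{\widehat{P}_n(A)-P(A)}$ (valid because $\sum_x\prt{\widehat{P}_n(x)-P(x)}=0$ forces equal positive and negative mass, with the maximum attained at $A^{+}$), Hoeffding's inequality for the fixed-$A$ Bernoulli average, the union bound over the $2^{|\mathcal{X}|}-2$ nontrivial subsets, and the inversion giving $\varepsilon=\sqrt{\tfrac{2}{n}\log\tfrac{2^{|\mathcal{X}|}-2}{\delta}}$. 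Your most useful contribution is the flag at the end: the clean form $\sqrt{\tfrac{2|\mathcal{X}|}{n}\log\tfrac{1}{\delta}}$ as stated is \emph{not} implied by the exact bound for all $\delta\in(0,1)$; one genuinely needs $|\mathcal{X}|+\log\tfrac{1}{\delta}\leq|\mathcal{X}|\log\tfrac{1}{\delta}$, and your sufficient condition $\prt{|\mathcal{X}|-1}\prt{\log\tfrac{1}{\delta}-1}\geq1$ (e.g., $|\mathcal{X}|\geq2$ and $\delta\leq e^{-2}$) is exactly right. This hidden hypothesis is harmless where the paper uses the fact --- in Lemma \ref{lem:main_event} the failure probability is rescaled by a union bound over all $(s,a,k,h)$, so the effective per-event $\delta$ is far below $e^{-2}$ --- but it is a real condition that the bare statement suppresses, and a fully rigorous version would either assume it or state the bound in the unconditional form $\sqrt{\tfrac{2}{n}\prt{|\mathcal{X}|\log 2+\log\tfrac{1}{\delta}}}$, which changes nothing downstream since $\iota=\log(SAT/\delta)$ already absorbs such constants.
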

Lemma \ref{lem:main_event} does not directly follow from a union bound together with Fact \ref{fct:l1_con} since the case $N^k_h(s,a)=0$ need to be checked.
\begin{proof}
	Fix some $(s,a,k,h)\in\mathcal{S}\times\mathcal{A}\times[K]\times[H]$. If $N^k_h(s,a)=0$, then we have $\hat{P}^k_h(\cdot|s,a)=\frac{1}{S}\textbf{1}$. A simple calculation yields that for  any $P_h(\cdot|s,a)$
	\begin{align*}
     \norm{\frac{1}{S}\textbf{1}-P_h(\cdot|s,a)}_{1}\leq2\leq\sqrt{2S\log(1/\delta)}.
	\end{align*}
	It follows that 
	\begin{align*}
	\mathbb{P} &\left(\norm{\hat{P}^k_h(\cdot|s,a)-P_h(\cdot|s,a)}_{1}\leq \left. \sqrt{\frac{2S}{N^k_h(s,a)\vee 1}\log(1/\delta)}\right| N^k_h(s,a)=0 \right) =1>1-\delta. 
	\end{align*}
	The event is true for the unseen state-action pairs. Now we consider the case that $N^k_h(s,a)>0$. By Fact \ref{fct:l1_con} , we have that for any integer $n\ge1$
	\begin{align*}
	\mathbb{P} & \left(\norm{\hat{P}^k_h(\cdot|s,a)-P_h(\cdot|s,a)}_{1}  \left.\leq\sqrt{\frac{2S}{N^k_h(s,a)}\log(1/\delta)} \right| N^k_h(s,a)=n \right)\ge1-\delta.
	\end{align*}
	Thus,
	\begin{align*}
	&\mathbb{P} \left(   \norm{\hat{P}^k_h(\cdot|s,a)-P_h(\cdot|s,a)}_{1}\leq\sqrt{\frac{2S\log(1/\delta)}{N^k_h(s,a)}} \right)\\
	&=\sum_{n=0,1,\cdots}\mathbb{P} \left(   \norm{\hat{P}^k_h(\cdot|s,a)-P_h(\cdot|s,a)}_{1} \leq \left.\sqrt{\frac{2S\log(1/\delta)}{N^k_h(s,a)\vee 1}} \right| N^k_h(s,a)=n\right)\mathbb{P}(N^k_h(s,a)=n ) \\
	&\ge (1-\delta) \sum_{n=0,1,\cdots}\mathbb{P}(N^k_h(s,a)=n )=1-\delta.
	\end{align*}
	Applying a union bound over all $(s,a,k,h)\in\mathcal{S}\times\mathcal{A}\times[K]\times[H]$ and rescaling $\delta$ leads to the result.
\end{proof}
\begin{lemma}
\label{lem:log}
    Let $0<m\leq a<b$, it holds that $\log(b)-\log(a) \leq \frac{1}{m}(b-a)$.
\end{lemma}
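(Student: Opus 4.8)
The plan is to exploit the integral representation of the logarithm together with the monotonicity of $1/x$. Since $0<a<b$, the difference of logarithms can be written as
\[
\log(b)-\log(a)=\int_a^b \frac{1}{x}\,dx.
\]
The whole argument then reduces to a pointwise bound on the integrand.

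First I would observe that for every $x$ in the interval of integration we have $x\ge a\ge m>0$, so that $\frac{1}{x}\le\frac{1}{m}$. This is the one place where the hypothesis is used in full: we need $m\le a$ (not merely $m\le b$), because the integrand attains its largest value at the left endpoint $x=a$, and we are bounding it by $1/m$. Substituting this pointwise bound into the integral yields
\[
\log(b)-\log(a)=\int_a^b \frac{1}{x}\,dx\le \int_a^b \frac{1}{m}\,dx=\frac{1}{m}(b-a),
\]
which is exactly the claimed inequality.

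There is essentially no obstacle here; the only point requiring minor care is to invoke the correct hypothesis, namely $m\le a$, when asserting $1/x\le 1/m$ on $[a,b]$. An equally short alternative would be the mean value theorem applied to $\log$ on $[a,b]$: there exists $\xi\in(a,b)$ with $\log(b)-\log(a)=\frac{1}{\xi}(b-a)$, and since $\xi>a\ge m$ we get $\frac{1}{\xi}\le\frac{1}{m}$, giving the same conclusion. I would present the integral version as the primary argument since it avoids invoking an existence statement and makes the role of the hypothesis $m\le a$ most transparent.
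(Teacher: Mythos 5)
Your proof is correct and complete. The paper in fact states this lemma without any proof, treating it as a standard calculus fact, so there is no paper argument to compare against; your integral representation $\log(b)-\log(a)=\int_a^b \frac{1}{x}\,dx \le \frac{1}{m}(b-a)$, which correctly uses the hypothesis $m\leq a$ to bound the integrand by $1/m$ on all of $[a,b]$, supplies exactly the missing justification, and the mean value theorem variant you mention is equally valid.
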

\subsection{Missing Proofs in Section \ref{sec:rodi-mb}}
\subsection{Missing Proofs in Section \ref{sec:lb}}
\textbf{Proof of Lemma \ref{lem:kl_bd}}
\begin{proof}
	Fix $q\in[0,1]$, let $h(p):=\operatorname{kl}(p,q)$. It is immediate that
	\begin{align*}
	h^{\prime}(p)&=\log\frac{p}{q}-\log\frac{1-p}{1-q},\\
	h^{\prime\prime}(p)&=\frac{1}{p(1-p)}>0.
	\end{align*}
	Therefore $h(p)$ is strictly convex, increasing in $(q,1)$ and decreasing in $(0,q)$. By Taylor's expansion, we have that
	\[ h(p)=h(q)+h^{\prime}(q)(p-q)+\frac{1}{2}h^{\prime\prime}(r)(p-q)^2=\frac{(p-q)^2}{2r(1-r)}\]
	for some $r\in[p,q]$ ($p<q$) or $r\in[q,p]$ ($p>q$). In particular, for any $\epsilon\ge0$ such that $q=p+\epsilon\leq\frac{1}{2}$ it follows that
	\[ \operatorname{kl}(p,p+\epsilon)=\frac{(p-q)^2}{2r(1-r)}\arrowvert_{q=p+\epsilon}=\frac{\epsilon^2}{2r(1-r)}\leq\frac{\epsilon^2}{2p(1-p)}\leq\frac{\epsilon^2}{p},\]
	where the first inequality follows from the fact that $r\mapsto r(1-r)$ is increasing in $[p,p+\epsilon]\subset[0,\frac{1}{2}]$ and the second inequality is due to that $1-p\ge\frac{1}{2}$.
\end{proof}
\begin{lemma}
	\label{lem:kl_bd}
	If $\epsilon\ge0$, $p\ge0$ and $p+\epsilon\in[0,\frac{1}{2}]$, then $\operatorname{kl}(p,p+\epsilon)\leq \frac{\epsilon^2}{2p(1-p)}\leq\frac{\epsilon^2}{p}$.
\end{lemma}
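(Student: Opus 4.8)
The plan is to fix the second argument and treat $\mathrm{kl}$ as a one-variable function of the first. Concretely, I would set $q \triangleq p+\epsilon$ and define $h(x)\triangleq\mathrm{kl}(x,q)=x\log\frac{x}{q}+(1-x)\log\frac{1-x}{1-q}$. The natural anchor point for a Taylor expansion is $x=q$, where $h$ vanishes and, as it turns out, so does its first derivative.

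First I would compute the first two derivatives: $h'(x)=\log\frac{x}{q}-\log\frac{1-x}{1-q}$ and $h''(x)=\frac{1}{x}+\frac{1}{1-x}=\frac{1}{x(1-x)}$. Evaluating at the anchor gives $h(q)=0$ and $h'(q)=0$, so the zeroth- and first-order terms of the expansion both drop out. This is what makes the quadratic bound possible and is the reason the anchor $x=q$ is the right choice rather than $x=p$.

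Next I would apply Taylor's theorem with Lagrange remainder around $x=q$, which yields $h(p)=\tfrac{1}{2}h''(r)(p-q)^2=\frac{(p-q)^2}{2r(1-r)}$ for some intermediate point $r$ lying between $p$ and $q=p+\epsilon$. Substituting $(p-q)^2=\epsilon^2$ gives $\mathrm{kl}(p,p+\epsilon)=\frac{\epsilon^2}{2r(1-r)}$, so everything reduces to lower-bounding $r(1-r)$.

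The crucial step—and the only place the hypothesis $p+\epsilon\le\frac12$ is genuinely used—is bounding this remainder. Since $r\in[p,p+\epsilon]\subseteq[0,\frac12]$ and the map $t\mapsto t(1-t)$ is increasing on $[0,\frac12]$, I obtain $r(1-r)\ge p(1-p)$, hence $\frac{\epsilon^2}{2r(1-r)}\le\frac{\epsilon^2}{2p(1-p)}$, which is the first claimed inequality. For the second inequality I would note that $p\le p+\epsilon\le\frac12$ forces $1-p\ge\frac12$, so $\frac{\epsilon^2}{2p(1-p)}\le\frac{\epsilon^2}{2p\cdot\frac12}=\frac{\epsilon^2}{p}$, completing the chain. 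The main obstacle is ensuring the intermediate point $r$ stays inside $[0,\frac12]$ where $t(1-t)$ is monotone; this is exactly what $p+\epsilon\le\frac12$ guarantees, and without it the monotonicity argument—and thus the remainder bound—could fail.
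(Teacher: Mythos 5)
Your proposal is correct and follows essentially the same route as the paper's proof: both fix the second argument $q=p+\epsilon$, Taylor-expand $h(x)=\mathrm{kl}(x,q)$ around $x=q$ (where $h$ and $h'$ vanish) with Lagrange remainder $\frac{(p-q)^2}{2r(1-r)}$, and then bound the remainder using the monotonicity of $t\mapsto t(1-t)$ on $[0,\tfrac12]$ together with $1-p\ge\tfrac12$. The only difference is presentational: you make explicit that $h(q)=h'(q)=0$, which the paper leaves implicit.
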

\begin{fact}[Lemma 1, \cite{garivier2019explore}]
	\label{fct:fund_inq}
	Consider a measurable space $(\Omega, \mathcal{F})$ equipped with two distributions $\mathbb{P}_{1}$ and $\mathbb{P}_{2}$. For any $\mathcal{F}$-measurable function $Z: \Omega \rightarrow[0,1]$, we have
	$$
	\mathrm{KL}\left(\mathbb{P}_{1}, \mathbb{P}_{2}\right) \geq \mathrm{kl}\left(\mathbb{E}_{1}[Z], \mathbb{E}_{2}[Z]\right),
	$$
	where $\mathbb{E}_{1}$ and $\mathbb{E}_{2}$ are the expectations under $\mathbb{P}_{1}$ and $\mathbb{P}_{2}$ respectively.
\end{fact}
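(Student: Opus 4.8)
The plan is to establish this data-processing inequality in two stages: first for the special case where $Z=\mathbb{I}_A$ is the indicator of an event $A\in\mathcal{F}$, and then to reduce the general case $Z:\Omega\to[0,1]$ to the indicator case by a randomization argument. Throughout I may assume $\mathbb{P}_1\ll\mathbb{P}_2$, since otherwise $\mathrm{KL}(\mathbb{P}_1,\mathbb{P}_2)=+\infty$ and the bound is vacuous; I write $f\triangleq d\mathbb{P}_1/d\mathbb{P}_2$ for the Radon--Nikodym derivative and adopt the convention $0\log 0=0$. The boundary cases $\mathbb{E}_2[Z]\in\{0,1\}$ are consistent because $\mathbb{E}_2[Z]=0$ forces $Z=0$ $\mathbb{P}_2$-a.s., hence $\mathbb{P}_1$-a.s. under $\mathbb{P}_1\ll\mathbb{P}_2$, so $\mathbb{E}_1[Z]=0$ as well (and symmetrically at $1$).

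For the indicator case, set $p\triangleq\mathbb{P}_1(A)=\int_A f\,d\mathbb{P}_2$ and $q\triangleq\mathbb{P}_2(A)$, so that $\mathbb{E}_i[Z]=\mathbb{P}_i(A)$. The key tool is the convexity of $\phi(t)\triangleq t\log t$. Splitting $\mathrm{KL}(\mathbb{P}_1,\mathbb{P}_2)=\int_\Omega \phi(f)\,d\mathbb{P}_2$ over $A$ and $A^c$ and applying Jensen's inequality to $\phi$ against the normalized probability measures $\tfrac{1}{q}\mathbb{P}_2|_A$ and $\tfrac{1}{1-q}\mathbb{P}_2|_{A^c}$ yields
\[
\int_A \phi(f)\,d\mathbb{P}_2\ \ge\ q\,\phi\!\left(\tfrac{1}{q}\int_A f\,d\mathbb{P}_2\right)=p\log\tfrac{p}{q},\qquad \int_{A^c}\phi(f)\,d\mathbb{P}_2\ \ge\ (1-p)\log\tfrac{1-p}{1-q}.
\]
Adding the two bounds gives $\mathrm{KL}(\mathbb{P}_1,\mathbb{P}_2)\ge p\log\tfrac{p}{q}+(1-p)\log\tfrac{1-p}{1-q}=\mathrm{kl}(\mathbb{E}_1[Z],\mathbb{E}_2[Z])$, settling the indicator case.

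For general $Z$, I would enlarge the space to $\Omega\times[0,1]$ equipped with the product measures $\mathbb{Q}_i\triangleq\mathbb{P}_i\otimes\lambda$, where $\lambda$ is Lebesgue measure on $[0,1]$, and consider the event $A\triangleq\{(\omega,u):u\le Z(\omega)\}$. Since $Z(\omega)\in[0,1]$, Fubini gives $\mathbb{Q}_i(A)=\int_\Omega\!\big(\int_0^1\mathbb{I}\{u\le Z(\omega)\}\,du\big)\,d\mathbb{P}_i=\int_\Omega Z\,d\mathbb{P}_i=\mathbb{E}_i[Z]$, so the two means reappear as event-probabilities on the enlarged space. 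Moreover the Radon--Nikodym derivative $d\mathbb{Q}_1/d\mathbb{Q}_2=f(\omega)$ does not depend on $u$, so tensorization of KL gives $\mathrm{KL}(\mathbb{Q}_1,\mathbb{Q}_2)=\mathrm{KL}(\mathbb{P}_1,\mathbb{P}_2)$. Applying the already-established indicator bound to $\mathbb{I}_A$ on $(\Omega\times[0,1],\mathbb{Q}_1,\mathbb{Q}_2)$ then yields $\mathrm{KL}(\mathbb{P}_1,\mathbb{P}_2)=\mathrm{KL}(\mathbb{Q}_1,\mathbb{Q}_2)\ge \mathrm{kl}(\mathbb{E}_1[Z],\mathbb{E}_2[Z])$, which is the claim.

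The routine parts are the Jensen step and the measure-theoretic bookkeeping (absolute continuity, Fubini, tensorization). The main obstacle --- and the only genuinely non-elementary idea --- is the randomization reduction: recognizing that a $[0,1]$-valued statistic can be converted into an indicator on an enlarged product space \emph{without changing the KL divergence}, thereby turning a ``soft'' functional into a two-cell partition. Equivalently, one may phrase the whole argument as applying the data-processing inequality to the Bernoulli channel $\omega\mapsto\mathrm{Ber}(Z(\omega))$, whose pushforwards of $\mathbb{P}_1$ and $\mathbb{P}_2$ are $\mathrm{Ber}(\mathbb{E}_1[Z])$ and $\mathrm{Ber}(\mathbb{E}_2[Z])$; I would note this alternative, as it makes the structural reason for the inequality transparent.
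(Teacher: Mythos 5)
This statement is imported verbatim as Lemma 1 of \cite{garivier2019explore}; the paper itself states it as a Fact without proof, so there is no in-paper argument to compare against. Your proof is correct and is essentially the standard argument behind the citation: the Jensen step with $\phi(t)=t\log t$ against the normalized restrictions of $\mathbb{P}_2$ to $A$ and $A^c$ gives the data-processing inequality for the two-cell partition, and the enlargement $\bigl(\Omega\times[0,1],\ \mathbb{P}_i\otimes\lambda\bigr)$ with the event $\{(\omega,u):u\le Z(\omega)\}$ is exactly the uniform-randomization device that turns the $[0,1]$-valued statistic into a $\mathrm{Ber}(Z(\omega))$ observation while leaving the KL divergence unchanged, since the density $f(\omega)$ does not depend on $u$. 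Your handling of the degenerate cases ($\mathbb{P}_1\not\ll\mathbb{P}_2$, and $\mathbb{E}_2[Z]\in\{0,1\}$ under absolute continuity) is also sound, and your closing reformulation via the Bernoulli channel is precisely the data-processing viewpoint taken in the cited source.
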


\begin{fact}[Lemma 5, \cite{domingues2021episodic}]
	\label{fct:div_dec}
	Let $\mathcal{M}$ and $\mathcal{M}^{\prime}$ be two MDPs that are identical except for their transition probabilities, denoted by $ P_{h}$ and $P_{h}^{\prime}$, respectively. Assume that we have $\forall(s, a)$, $P_{h}(\cdot \mid s, a) \ll P_{h}^{\prime}(\cdot \mid s, a).$ Then, for any stopping time $\tau$ with respect to $\left(I_k\right)_{k \geq 1}$ that satisfies $\mathbb{P}_{\mathcal{M}}[\tau<\infty]=1$
	$$
	\mathrm{KL}\left(\mathbb{P}_{\mathcal{M}}, \mathbb{P}_{\mathcal{M}^{\prime}}\right)=\sum_{(s,a,h) \in\mathcal{S}\times\mathcal{A}\times[H-1]} \mathbb{E}_{\mathcal{M}}\left[N^{\tau}_h(s,a)\right] \mathrm{KL}\left(P_{h}(\cdot \mid s, a), P_{h}^{\prime}(\cdot \mid s, a)\right).
	$$
\end{fact}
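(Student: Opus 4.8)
The plan is to prove this as an instance of the \emph{chain rule for relative entropy} (the divergence decomposition), isolating the contribution of each realized transition and then regrouping visits by state-action-step triples. Throughout, both measures arise from the \emph{same} fixed algorithm $\sA$, and this is what makes the decomposition clean.

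First I would write out the likelihood of a full history under $\sA$ paired with each MDP. Because the policies $\sA_k(\cF^k)$ are identical measurable functions of the shared history, the factorizations
\[ \bP_{\sA\cM}(\cF^{K+1}) = \prod_{k}\prod_{h=1}^{H-1} \sA_k(\cF^k)_h(a^k_h|s^k_h)\, P_h(s^k_{h+1}|s^k_h,a^k_h) \]
and the analogous one for $\cM'$ differ \emph{only} in the transition factors $P_h$ versus $P_h'$; every action-selection factor cancels pointwise. The absolute-continuity hypothesis $P_h(\cdot|s,a)\ll P_h'(\cdot|s,a)$ guarantees $\bP_{\sA\cM}\ll\bP_{\sA\cM'}$, so the log-likelihood ratio is well defined and telescopes into
\[ \log\frac{d\bP_{\sA\cM}}{d\bP_{\sA\cM'}} = \sum_{k}\sum_{h=1}^{H-1} \log\frac{P_h(s^k_{h+1}|s^k_h,a^k_h)}{P_h'(s^k_{h+1}|s^k_h,a^k_h)}. \]
Taking $\bE_{\sA\cM}$ of this ratio (which by definition is $\mathrm{KL}(\bP_{\sA\cM},\bP_{\sA\cM'})$) and conditioning each summand on $\cH^k_h$, the next state is drawn from $P_h(\cdot|s^k_h,a^k_h)$ under $\bP_{\sA\cM}$, so the tower property replaces each term by a conditional divergence $\mathrm{KL}(P_h(\cdot|s^k_h,a^k_h),P_h'(\cdot|s^k_h,a^k_h))$. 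Regrouping by $(s,a,h)$ via $N_h(s,a)=\sum_k\mathbb{I}\{(s^k_h,a^k_h)=(s,a)\}$ gives the identity for a deterministic episode budget.

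To upgrade to the stopping time $\tau$, I would run the same decomposition as a martingale. Define the centered partial sums
\[ M_n \triangleq \sum_{k\le n}\sum_{h=1}^{H-1}\left(\log\frac{P_h(s^k_{h+1}|s^k_h,a^k_h)}{P_h'(s^k_{h+1}|s^k_h,a^k_h)} - \mathrm{KL}\big(P_h(\cdot|s^k_h,a^k_h),P_h'(\cdot|s^k_h,a^k_h)\big)\right). \]
Iterating the tower property step by step within each episode shows that the episode-$k$ increment has conditional mean zero given $\cF^k$, so $(M_n)$ is a $\bP_{\sA\cM}$-martingale w.r.t. the episode filtration. Optional stopping / Wald's identity at $\tau$ gives $\bE_{\sA\cM}[M_\tau]=0$, whence $\mathrm{KL}(\bP_{\sA\cM},\bP_{\sA\cM'})$ equals $\bE_{\sA\cM}\big[\sum_{k\le\tau}\sum_{h}\mathrm{KL}(P_h(\cdot|s^k_h,a^k_h),P_h'(\cdot|s^k_h,a^k_h))\big]$; regrouping by $(s,a,h)$ and recognizing $\sum_{k\le\tau}\mathbb{I}\{(s^k_h,a^k_h)=(s,a)\}=N^\tau_h(s,a)$ completes the proof.

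The main obstacle I anticipate is the stopping-time bookkeeping: one must justify that the relative entropy computed on the $\tau$-stopped $\sigma$-algebra genuinely coincides with the stopped martingale identity, i.e. that optional stopping is applicable. This is handled by $\bP_{\sA\cM}[\tau<\infty]=1$ together with the uniform boundedness of the per-step conditional divergence over the finite state-action space (which bounds the increments and supplies the required integrability). A secondary subtlety is confirming the action-selection factors cancel even for a history-dependent algorithm — this holds precisely because $\sA$ is fixed and each $\sA_k(\cF^k)$ is the same measurable function of the realized history under both $\cM$ and $\cM'$.
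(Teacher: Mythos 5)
Your proposal cannot be compared against an in-paper argument, because the paper never proves this statement: it is imported verbatim as Lemma~5 of \cite{domingues2021episodic}, stated as a Fact with no proof. Measured against the standard argument in that literature, your sketch follows exactly the canonical route: factorize the likelihood of the history, observe that the action-selection factors of the common algorithm $\sA$ cancel in the ratio, telescope the log-likelihood ratio into per-transition terms, apply the tower property so each term becomes a conditional divergence $\mathrm{KL}\prt{P_h(\cdot\mid s^k_h,a^k_h),P'_h(\cdot\mid s^k_h,a^k_h)}$, and regroup by $(s,a,h)$ through the visit counts. For a deterministic episode budget this part of your argument is complete and correct, and it is worth noting that this is all the present paper ever uses: in the proof of Theorem~\ref{thm:lb_exp} the fact is invoked with the fixed horizon $K$, i.e.\ $\tau\equiv K$, so your first, elementary computation already suffices for every application in this paper.

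The genuine gap is in your upgrade to a general stopping time. You invoke optional stopping to get $\bE_{\sA\cM}[M_\tau]=0$ from $\bP_{\cM}[\tau<\infty]=1$ together with bounded increments, but that pair of hypotheses does not license the conclusion: the simple symmetric random walk stopped at the first hitting time of $1$ has bounded increments and an a.s.\ finite stopping time, yet $\bE[M_\tau]=1\neq 0$. A correct handling must either assume $\bE_{\cM}[\tau]<\infty$ (Wald), or establish uniform integrability of $(M_{\tau\wedge n})_n$, or --- the route actually taken in the literature --- stop at $\tau\wedge n$ where optional stopping is free, use monotone convergence on the \emph{nonnegative} KL compensator $A_n=\sum_{k\le n}\sum_h \mathrm{KL}\prt{P_h(\cdot\mid s^k_h,a^k_h),P'_h(\cdot\mid s^k_h,a^k_h)}$ to get $\bE[A_{\tau\wedge n}]\uparrow\bE[A_\tau]$, and separately identify $\lim_n \bE[L_{\tau\wedge n}]$ with the relative entropy of the two measures restricted to the stopped $\sigma$-field $\cI_\tau$ via the martingale structure of the likelihood ratio; the identity then holds with both sides possibly equal to $+\infty$ simultaneously (the statement as quoted does not assume $\bE_{\cM}[N^\tau_h(s,a)]<\infty$). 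Relatedly, you should make explicit that the left-hand side $\mathrm{KL}(\bP_{\cM},\bP_{\cM'})$ means relative entropy on the stopped history $\sigma$-algebra --- your per-step boundedness observation (finite spaces, $P_h\ll P'_h$, and $\bP_{\cM}$-null transitions never occurring) gives integrability of each increment, but it does not by itself justify the limit interchange at $\tau$.
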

\subsection{Missing Proofs in Section \ref{sec:dis}}
\textbf{Proof of Proposition \ref{prop:v_com}}
\begin{proof}
Recall that
\[ \eu{x,P}=\sum_{i\in[n]} e^{\beta x_i}P_i =\brk{P \circ e^{\beta x}} \]
We can prove the above inequalities by induction. We only show the proof for $\beta>0$. Assume that
\[ V'_{h+1}\ge V''_{h+1}\ge \erm{\tilde{\nu}_{h+1}}\ge V_{h+1}\ge V^*_{h+1}. \]
$(a) \Longleftarrow$ induction $V'_{h+1}>V''_{h+1}$ \\
$(b) \Longleftarrow$  $b''_h=|e^{\beta (H+1-h)}-1|c_h<|e^{\beta H}-1|c_h=b'_h$ \\
$(c) \Longleftarrow$   $\erm{\tilde{\nu}_{h+1}(s)}\leq V''_{h+1}(s)$ for all $s\in\cS$ is equivalent to $\eu{\tilde{\nu}_{h+1}(s)} \leq e^{\beta V''_{h+1}(s)}$. Given the linearity of EU, we have
\begin{align*}
    \eu{ \brk{\hat{P}_h \tilde{\nu}_{h+1}} }-\brk{\hat{P}_h e^{\beta V''_{h+1}}}=\brk{\hat{P}_h \eu{\tilde{\nu}_{h+1}}}-\brk{\hat{P}_h e^{\beta V''_{h+1}}}\leq0.
\end{align*}
On the other hand, 
\begin{align*}
    \eu{O_{c_h}\prt{ \brk{\hat{P}_h \tilde{\nu}_{h+1}} }}-\eu{ \brk{\hat{P}_h \tilde{\nu}_{h+1}} } &\leq L(\text{EU},H-h) \norm{ O_{c_h}\prt{ \brk{\hat{P}_h \tilde{\nu}_{h+1}} } - \brk{\hat{P}_h \tilde{\nu}_{h+1}} }_{\infty} \\
    &\leq L(\text{EU},H-h) c_h \leq b''_h.
\end{align*}
Therefore,
\begin{align*}
    &\eu{O_{c_h}\prt{ \brk{\hat{P}_h \tilde{\nu}_{h+1}} }}\leq \eu{ \brk{\hat{P}_h \tilde{\nu}_{h+1}} }+b''_h   \leq \brk{\hat{P}_h e^{\beta V''_{h+1}}}+b''_h  
    \\
    &\Longrightarrow \erm{\tilde{\eta}_h} \leq \frac{1}{\beta}\log\prt{\brk{\hat{P}_h e^{\beta V''_{h+1}}}+b''_h} 
\end{align*}
$(d) \Longleftarrow$ Since $\norm{\brk{\tilde{P}_h \tilde{\nu}_{h+1}}-\brk{\hat{P}_h \tilde{\nu}_{h+1}}}_{\infty}\leq\norm{\tilde{P}_h-\hat{P}_h}_1\leq c_h$, we have $O_{c_h}\prt{ \brk{\hat{P}_h \tilde{\nu}_{h+1}} }\succeq\brk{\tilde{P}_h \tilde{\nu}_{h+1}}$.  $\erm{\tilde{\nu}_{h+1}(s)}\ge V_{h+1}(s)$ for all $s\in\cS$ implies $\eu{\tilde{\nu}_{h+1}(s)} \ge e^{\beta V_{h+1}(s)}$.
\begin{align*}
    \eu{\tilde{\eta}_h}&=\eu{O_{c_h}\prt{ \brk{\hat{P}_h \tilde{\nu}_{h+1}} }} \ge \eu{ \brk{\tilde{P}_h \tilde{\nu}_{h+1}} } \\
    &\ge  \brk{\tilde{P}_h \circ\eu{\tilde{\nu}_{h+1}} } \ge \brk{\tilde{P}_h \circ e^{\beta V_{h+1}}}\\
    &= \eu{V_{h+1},\tilde{P}_h}.
\end{align*}
$(c+d) \Longleftarrow$
\begin{align*}
\brk{\prt{\tilde{P}_h-\hat{P}_h} e^{\beta V''_{h+1}}} &= \eu{V''_{h+1},\tilde{P}_h}-\eu{V''_{h+1},\hat{P}_h} \\
&\leq L(\text{EU},H-h)\norm{\prt{V''_{h+1},\tilde{P}_h}-\prt{V''_{h+1},\hat{P}_h}}_{\infty}\\
&\leq L(\text{EU},H-h)\norm{\tilde{P}_h-\hat{P}_h}_{1} \\
&\leq |e^{\beta (H+1-h)}-1|c_h
\end{align*}
$(e) \Longleftarrow$
\begin{align*}
    \brk{\tilde{P}_h e^{\beta V_{h+1}}} \ge \brk{P_h e^{\beta V_{h+1}}} \ge \brk{P_h e^{\beta V^*_{h+1}}}
\end{align*}
Observe that
\begin{align*}
    Q'_h&=\min\cbrk{H+1-h,r_h+\frac{1}{\beta}\log\prt{\brk{\hat{P}_h e^{\beta V'_{h+1}}}+b'_h}} \\
    &\ge \min\cbrk{H+1-h,r_h+\frac{1}{\beta}\log\prt{\brk{\hat{P}_h e^{\beta V''_{h+1}}}+b''_h}}=Q''_h \\
    &\ge \erm{\tilde{\eta}_h} \\
    &\ge \min\cbrk{H+1-h,r_h+\frac{1}{\beta}\log\prt{\brk{\tilde{P}_h e^{\beta V_{h+1}}}}}=Q_h,
\end{align*}
which implies that
\begin{align*}
    V'_h=\max_a Q'_h(\cdot,a) \ge \max_a Q''_h(\cdot,a)=V''_h \ge \max_a\erm{\tilde{\eta}_h(\cdot,a)}=\erm{\tilde{\nu}_h} \ge \max_a Q_h(\cdot,a)=V_h.
\end{align*}
\end{proof}

\section{Additional Property of EntRM}
\label{app:property}
We state some  lemmas about the monotonicity-preserving property and their proofs here. The results hold for general risk measures satisfying the monotonicity-preserving property. 

\begin{lemma}
\label{lem:mp_cor}
    Let $\rho$ be a risk measure satisfying \textbf{(I)}. For any $F$ and $G$ such that $\rho(F)<\rho(G)$ and $0\leq\theta^{\prime}<\theta\leq1$,
    \[ \rho(\theta F+(1-\theta)G) < \rho(\theta^{\prime} F+(1-\theta^{\prime})G). \]
\end{lemma}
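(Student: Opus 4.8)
The plan is to reduce the claim to a strict version of the independence property \textbf{(I)} by writing the mixture with the smaller weight on $F$ as a further mixture of the two distributions being compared. First I would record the strict form of \textbf{(I)}: if $\rho(A) < \rho(B)$, then $\rho(\mu A + (1-\mu)C) < \rho(\mu B + (1-\mu)C)$ for every $C \in \mathscr{D}$ and every $\mu \in (0,1]$. This is exactly what the proof of Fact~1 establishes for $U_{\beta}$ --- there the strict inequality $U_{\beta}(F) > U_{\beta}(G)$ is propagated verbatim to the mixtures through the strict monotonicity of $\log$ and the positivity of the coefficients --- so I may invoke it directly.

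Write $M_{\theta} \triangleq \theta F + (1-\theta)G$ and $M_{\theta'} \triangleq \theta' F + (1-\theta')G$. The key algebraic observation is that, since $0 \le \theta' < \theta \le 1$, setting $\lambda \triangleq \theta'/\theta \in [0,1)$ gives
\[ \lambda M_{\theta} + (1-\lambda)G = \lambda\theta F + (1-\lambda\theta)G = \theta' F + (1-\theta')G = M_{\theta'}. \]
Thus $M_{\theta'}$ is a convex combination of $M_{\theta}$ and $G$ in which $G$ receives the strictly positive weight $1-\lambda \in (0,1]$.

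With this decomposition the argument is two applications of strict \textbf{(I)}. First, since $\theta \in (0,1]$ (as $\theta > \theta' \ge 0$), applying strict \textbf{(I)} with $A = F$, $B = G$, common part $C = G$, and weight $\mu = \theta$ yields $\rho(M_{\theta}) = \rho(\theta F + (1-\theta)G) < \rho(\theta G + (1-\theta)G) = \rho(G)$. Second, using $\rho(M_{\theta}) < \rho(G)$ just obtained, I apply strict \textbf{(I)} again with common part $C = M_{\theta}$ and weight $\mu = 1-\lambda \in (0,1]$, comparing $A = M_{\theta}$ against $B = G$:
\[ \rho(M_{\theta}) = \rho\bigl(\lambda M_{\theta} + (1-\lambda)M_{\theta}\bigr) < \rho\bigl(\lambda M_{\theta} + (1-\lambda)G\bigr) = \rho(M_{\theta'}), \]
which is precisely the assertion. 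The edge cases $\theta' = 0$ (where $M_{\theta'} = G$ and only the first step is needed) and $\theta = 1$ (where $M_{\theta} = F$) are covered by the same chain.

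The step I expect to be the main obstacle --- or at least the one requiring care --- is justifying the \emph{strict} inequality rather than the weak one: the statement of Fact~1 is phrased with $\le$, so I must either extract the strict form from its proof (which does preserve strictness) or, for $U_{\beta}$ specifically, bypass \textbf{(I)} entirely and use the linearity of the exponential utility $E_{\beta}$, since $E_{\beta}(M_{\theta}) = E_{\beta}(G) + \theta\bigl(E_{\beta}(F) - E_{\beta}(G)\bigr)$ is strictly monotone in $\theta$ whenever $E_{\beta}(F) \ne E_{\beta}(G)$, and $E_{\beta}$ preserves the order of $U_{\beta}$. Everything else is routine bookkeeping of the mixing weights.
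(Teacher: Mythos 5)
Your proof is correct, but it takes a genuinely different route from the paper's. The paper uses a single application of strict \textbf{(I)}: setting $\bar{\theta}=\theta-\theta^{\prime}$ and $\tilde{\theta}=\frac{\theta^{\prime}}{\theta^{\prime}+1-\theta}$, it rewrites both mixtures around the common intermediate distribution $H=\tilde{\theta}F+(1-\tilde{\theta})G$, namely $\theta F+(1-\theta)G=\bar{\theta}F+(1-\bar{\theta})H$ and $\theta^{\prime}F+(1-\theta^{\prime})G=\bar{\theta}G+(1-\bar{\theta})H$, so that one invocation of \textbf{(I)} (comparing $F$ against $G$ with common part $H$ and weight $\bar{\theta}>0$) finishes the proof. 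You instead exploit the segment geometry: $M_{\theta^{\prime}}=\lambda M_{\theta}+(1-\lambda)G$ with $\lambda=\theta^{\prime}/\theta$, and then apply strict \textbf{(I)} twice, first to get $\rho(M_{\theta})<\rho(G)$ and then to compare $M_{\theta}$ against $G$ with common part $M_{\theta}$. Your decomposition is arguably more transparent (it reads as monotonicity of $\rho$ along the segment from $G$ to $F$) at the cost of one extra application of \textbf{(I)}, while the paper's is more economical but requires guessing the less obvious intermediate weight $\tilde{\theta}$. One further point in your favor: both arguments need the \emph{strict} form of \textbf{(I)}, which Fact~1 states only weakly though its proof establishes strictness; you explicitly flag and resolve this (and offer the $E_{\beta}$-linearity bypass), whereas the paper silently uses the strict version.
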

\begin{proof}
    Let $\tilde{\theta}=\frac{\theta^{\prime}}{\theta^{\prime}+1-\theta}\in[\theta^{\prime},\theta]$ and $\bar{\theta}=\theta-\theta^{\prime}\in(0,1]$. It holds that
    \begin{align*}
        \theta F+(1-\theta)G = \bar{\theta} F+(1-\bar{\theta})(\tilde{\theta} F+(1-\tilde{\theta})G)\\
        \theta^{\prime} F+(1-\theta^{\prime})G = \bar{\theta} G+(1-\bar{\theta})(\tilde{\theta} F+(1-\tilde{\theta})G).
    \end{align*}
    The result follows from \textbf{(I)}
    \[ \rho(\bar{\theta} F+(1-\bar{\theta})(\tilde{\theta} F+(1-\tilde{\theta})G)) < \rho(\bar{\theta} G+(1-\bar{\theta})(\tilde{\theta} F+(1-\tilde{\theta})G)). \]
\end{proof}

\begin{lemma}
\label{lem:ind_mul}
	Let $\rho$ be a risk measure satisfying \textbf{(I)} and $n\ge2$ be an arbitrary integer.  If $\rho(F_i)\ge\rho(G_i), \forall i \in [n]$ (and $\rho(F_j)\neq\rho(G_j)$ for some $j \in [n]$) then $\rho\left(\sum^n_{i=1} \theta_i F_i\right)\ge(>)\rho(\sum^n_{i=1} \theta_i G_i)$ for any $\theta \in \Delta_n$ (and $\theta_j \neq0$).
\end{lemma}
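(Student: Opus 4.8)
The plan is to prove the inequality by a telescoping (hybrid) argument that converts each $G_i$ into the corresponding $F_i$ one index at a time, invoking the independence property \textbf{(I)} at each individual swap. Concretely, for $k\in\{0,1,\dots,n\}$ I would define the hybrid mixtures
\[
M_k \triangleq \sum_{i=1}^{k}\theta_i F_i + \sum_{i=k+1}^{n}\theta_i G_i ,
\]
so that $M_0=\sum_{i}\theta_i G_i$ is the all-$G$ mixture and $M_n=\sum_{i}\theta_i F_i$ is the all-$F$ mixture. It then suffices to establish $\rho(M_{k-1})\le \rho(M_k)$ for every $k\in[n]$ and to chain these inequalities together.

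The key observation is that $M_{k-1}$ and $M_k$ differ only in their $k$-th summand: $M_{k-1}$ carries $\theta_k G_k$ where $M_k$ carries $\theta_k F_k$, while the remaining mass $R_k\triangleq\sum_{i<k}\theta_i F_i+\sum_{i>k}\theta_i G_i$ is identical for both. When $0<\theta_k<1$, I would set $\tilde R_k\triangleq R_k/(1-\theta_k)$, which is a legitimate distribution since its weights are nonnegative and sum to one. Then
\[
M_{k-1}=\theta_k G_k+(1-\theta_k)\tilde R_k , \qquad M_k=\theta_k F_k+(1-\theta_k)\tilde R_k ,
\]
and since $\rho(G_k)\le\rho(F_k)$ by hypothesis (i.e.\ $G_k\le F_k$ in the preference notation), the independence property \textbf{(I)} applied with mixing weight $\theta_k$ and common component $\tilde R_k$ yields $\rho(M_{k-1})\le\rho(M_k)$. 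The boundary cases are immediate: if $\theta_k=0$ then $M_{k-1}=M_k$, and if $\theta_k=1$ then $M_{k-1}=G_k$ and $M_k=F_k$, so $\rho(M_{k-1})=\rho(G_k)\le\rho(F_k)=\rho(M_k)$ directly. Chaining $\rho(M_0)\le\rho(M_1)\le\cdots\le\rho(M_n)$ gives the non-strict conclusion.

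For the strict statement I would isolate the single index $j$ with $\rho(F_j)>\rho(G_j)$ and $\theta_j>0$. At the step $k=j$ the identical decomposition applies, but now under the strict hypothesis $\rho(G_j)<\rho(F_j)$; because $\theta_j>0$, the \emph{strict} form of the independence property forces $\rho(M_{j-1})<\rho(M_j)$ (and if $\theta_j=1$ this is simply $\rho(G_j)<\rho(F_j)$). Combining this one strict step with the non-strict inequalities at all other indices yields $\rho(M_0)<\rho(M_n)$, the desired strict conclusion. I would reuse the strict version of \textbf{(I)} exactly as it is employed in the proof of Lemma~\ref{lem:mp_cor}, where \textbf{(I)} is invoked with a strictly positive mixing weight $\bar\theta>0$ to obtain a strict comparison.

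The only point requiring genuine care is that \textbf{(I)} is stated for a \emph{two-component} mixture, so the leftover mass must first be repackaged as a single admissible distribution $\tilde R_k$ before \textbf{(I)} can be applied; this is precisely why the normalization by $1-\theta_k$ and the separate treatment of the endpoints $\theta_k\in\{0,1\}$ are needed. For the strict part, the subtle ingredient is that the strict independence implication holds for \emph{any} positive mixing weight $\theta_j$ (not merely one bounded away from $0$ and $1$), which is exactly what the integral computation in the proof of the independence property establishes. Beyond these two points the argument is a routine telescoping chain.
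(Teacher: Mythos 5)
Your proof is correct and is essentially the paper's own argument: the paper also performs a one-index-at-a-time replacement (organized as an induction rather than an explicit hybrid chain $M_0,\dots,M_n$), isolating the $k$-th component, renormalizing the leftover mass by $1-\theta_k$ into a single distribution in $\mathscr{D}$, and invoking \textbf{(I)}, with the strict case handled by the strict form of \textbf{(I)} at the index $j$ with $\theta_j>0$. The only difference is cosmetic: you treat the degenerate weights $\theta_k\in\{0,1\}$ explicitly, a detail the paper's proof leaves implicit.
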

\begin{proof}
	The proof follows from induction. Note that $\sum^n_{i=1} \theta_i F_i=\theta_1 F_1 +(1-\theta_1)\sum^n_{i=2}\frac{\theta_i}{1-\theta_1}F_i$ and $\sum^n_{i=2}\frac{\theta_i}{1-\theta_1}F_i \in \mathscr{D}$,
	therefore by Lemma  \ref{lem:mp_cor} we have $\rho(\sum^n_{i=1} \theta_i F_i) \ge \rho(\theta_1 G_1 +\sum^n_{i=2} \theta_i F_i)$. Suppose that for some $k\in[n-1]$ it holds that $\rho(\sum^n_{i=1} \theta_i F_i) \ge \rho(\sum^k_{i=1}\theta_i G_i +\sum^n_{i=k+1} \theta_i F_i)$. Since
	\begin{align*}
	\sum^k_{i=1}\theta_i G_i +\sum^n_{i=k+1} \theta_i F_i &= \theta_{k+1}F_{k+1}+\sum^k_{i=1}\theta_i G_i +\sum^n_{i=k+2} \theta_i F_i \\
	&=\theta_{k+1}F_{k+1}+(1-\theta_{k+1})\left[\sum^k_{i=1}\frac{\theta_i}{1-\theta_{k+1}} G_i +\sum^n_{i=k+2} \frac{\theta_i}{1-\theta_{k+1}} F_i\right]
	\end{align*} and $\frac{1}{1-\theta_{k+1}}\brk{\sum^k_{i=1}\theta_i G_i +\sum^n_{i=k+2} \theta_i F_i} \in \mathscr{D}$, it follows that
	$$
	\rho\prt{\sum^n_{i=1} \theta_i F_i} \ge \rho\prt{\sum^k_{i=1}\theta_i G_i +\sum^n_{i=k+1} \theta_i F_i} \ge\rho\prt{\sum^{k+1}_{i=1}\theta_i G_i +\sum^n_{i=k+2} \theta_i F_i}.
	$$
	The induction is completed. If in addition  $\rho(F_j)>\rho(G_j)$ for some $j\in[n]$,  the proof follows analogously by replacing the inequality to the strict inequality and the fact that $\theta_j>0$. 
\end{proof}

\begin{lemma}[Monotonicity-preserving under pairwise transport]
	\label{lem:smp}
	Let $\rho$ be a risk measure satisfying the monotonicity-preserving property. Suppose $n\ge2$ and  $ (F_i)_{ i \in [n]}$ satisfies $\rho(F_1)\leq\rho(F_2)...\leq\rho(F_n)$. For any $\theta,\theta^{\prime} \in \Delta_n$ and any $1\leq i <j \leq n$ such that
	\[ \begin{cases}
	\theta^{\prime}_i\leq\theta_i, \\
	\theta^{\prime}_j\ge\theta_j,  \\
	\theta^{\prime}_k=\theta_k, \quad k \neq i,j
	\end{cases}\]
	It holds that $\rho(\sum^n_{i=1} \theta_i F_i) \leq \rho(\sum^n_{i=1} \theta^{\prime}_i F_i)$.
\end{lemma}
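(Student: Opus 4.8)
The plan is to reduce the comparison of the two mixtures to a single application of the independence property \textbf{(I)} by isolating their common part, rather than running an induction over coordinates as in Lemmas \ref{lem:mp_cor} and \ref{lem:ind_mul}. First I would record the conservation of mass: since $\theta,\theta'\in\Delta_n$ agree on every coordinate $k\neq i,j$ and both sum to $1$, the total mass on $\{i,j\}$ is preserved, so $\theta_i+\theta_j=\theta'_i+\theta'_j$. Combined with $\theta'_i\le\theta_i$ and $\theta'_j\ge\theta_j$, this identifies a single transported amount
\[ \delta\triangleq\theta_i-\theta'_i=\theta'_j-\theta_j\ge0. \]

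Before the generic case I would dispatch the two degenerate endpoints. If $\delta=0$ then $\theta=\theta'$ and the claim is an equality; if $\delta=1$ then necessarily $\theta_i=1$ and $\theta'_j=1$ (all other weights vanishing), so the two mixtures are $F_i$ and $F_j$ and the claim reduces to $\rho(F_i)\le\rho(F_j)$, which holds because $i<j$ and $\rho(F_1)\le\cdots\le\rho(F_n)$.

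For the generic case $\delta\in(0,1)$, I would split $\theta_iF_i=\theta'_iF_i+\delta F_i$ and $\theta'_jF_j=\theta_jF_j+\delta F_j$, which exposes a common remainder carrying total weight $1-\delta$. Defining the normalized distribution
\[ H_0\triangleq\frac{1}{1-\delta}\Big(\theta'_iF_i+\theta_jF_j+\sum_{k\neq i,j}\theta_kF_k\Big)\in\mathscr{D}, \]
whose weights are nonnegative and sum to $1-\delta>0$, both mixtures then share the factor $(1-\delta)H_0$:
\[ \sum_{k=1}^n\theta_kF_k=\delta F_i+(1-\delta)H_0,\qquad \sum_{k=1}^n\theta'_kF_k=\delta F_j+(1-\delta)H_0. \]
Since $i<j$ gives $\rho(F_i)\le\rho(F_j)$, i.e. $F_i\le F_j$ in the $\rho$-ordering, applying the independence property \textbf{(I)} with mixing weight $\delta$ and common distribution $H_0$ yields $\rho(\delta F_i+(1-\delta)H_0)\le\rho(\delta F_j+(1-\delta)H_0)$, which is exactly $\rho(\sum_k\theta_kF_k)\le\rho(\sum_k\theta'_kF_k)$.

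The only thing that requires care is the bookkeeping that forces the two mixtures to share precisely the factor $(1-\delta)H_0$; once this decomposition is in place the proof collapses to a single invocation of \textbf{(I)}. I do not expect a genuine obstacle beyond verifying the nonnegativity and normalization of $H_0$ and handling the endpoints $\delta\in\{0,1\}$ separately, as above.
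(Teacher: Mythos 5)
Your proof is correct, but it takes a genuinely different route from the paper's. The paper splits off the untouched coordinates as a common block, reduces the claim to comparing the two \emph{normalized} two-component submixtures $\frac{1}{\theta_i+\theta_j}(\theta_i F_i+\theta_j F_j)$ and $\frac{1}{\theta_i+\theta_j}(\theta'_i F_i+\theta'_j F_j)$ (using mass conservation $\theta_i+\theta_j=\theta'_i+\theta'_j$), and settles that comparison by invoking Lemma \ref{lem:mp_cor}. You instead fold \emph{all} of the untransported mass --- including the residual weights $\theta'_i$ on $F_i$ and $\theta_j$ on $F_j$ --- into the common component $H_0$, so that the two mixtures differ only in where the transported mass $\delta=\theta_i-\theta'_i=\theta'_j-\theta_j$ sits, and then a single direct application of the independence property \textbf{(I)} finishes the argument. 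In effect you have inlined the splitting trick that the paper uses inside its proof of Lemma \ref{lem:mp_cor}, but applied it directly at the level of the $n$-fold mixture; this buys a shorter dependency chain (no appeal to Lemma \ref{lem:mp_cor} at all) and, because you treat $\delta\in\{0,1\}$ separately and verify that $H_0$'s weights are nonnegative and sum to $1-\delta$, it also handles the degenerate configurations more carefully than the paper, whose normalizations by $\theta_i+\theta_j$ and by the mass of the remaining block are left implicit and break down at the boundary cases.
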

\begin{proof}
	Observe that
	\begin{align*}
	\sum^n_{k=1} \theta^{\prime}_k F_k &= \theta^{\prime}_i F_i + \theta^{\prime}_j F_j +\sum_{k\neq i,j} \theta^{\prime}_{k} F_k
	= \theta^{\prime}_i F_i + \theta^{\prime}_j F_j + \sum_{k\neq i,j}\theta_{k} F_k \\
	&= (\theta^{\prime}_i F_i + \theta^{\prime}_j F_j) + (1-\theta_i -\theta_j)\sum_{k\neq i,j}\theta_{k} F_k.
	\end{align*}
	By Lemma \ref{lem:mp_cor}, it suffices to prove $\rho(\frac{1}{\theta_i +\theta_j}(\theta^{\prime}_i F_i + \theta^{\prime}_j F_j)) \ge \rho(\frac{1}{\theta_i +\theta_j}(\theta_i F_i + \theta_j F_j))$. The result follows from the definition and the fact that $\rho(F_i)\leq\rho(F_j)$ and $\theta^{\prime}_i\leq\theta_i$.
\end{proof}
\begin{lemma}[Monotonicity-preserving under block-wise transport]
	\label{lem:smp_mul}
Suppose $n\ge2$ and  $ (F_i)_{ i \in [n]}$ satisfies $\rho(F_1)\leq\rho(F_2)...\leq\rho(F_n)$. It holds that $\rho(\sum^n_{i=1}\theta_i F_i)\leq\rho(\sum^n_{i=1}\theta^{\prime}_i F_i)$ for any $\theta,\theta^{\prime}\in\Delta_n$ satisfying $\exists k\in[n], \theta^{\prime}_i\leq\theta_i $ if $i\leq k$ and $\theta^{\prime}_i\ge \theta_i $ otherwise.
\end{lemma}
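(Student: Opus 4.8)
The plan is to reduce the block-wise transport to a finite composition of the single-pair transports already handled by Lemma \ref{lem:smp}, and then chain the resulting inequalities. First I would record conservation of mass: since $\theta,\theta'\in\Delta_n$ and the low-index block only loses mass while the high-index block only gains it,
\[ \sum_{i\le k}(\theta_i-\theta'_i)=\sum_{j>k}(\theta'_j-\theta_j)=:D\ge0, \]
so the total mass leaving the low-risk block $\{1,\dots,k\}$ equals the total mass entering the high-risk block $\{k+1,\dots,n\}$. This identity is what lets me realize the whole rearrangement by moving mass only from indices $\le k$ to indices $>k$.

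Next I would build a path $\theta=\mu^{(0)},\mu^{(1)},\dots,\mu^{(T)}=\theta'$ inside $\Delta_n$ by a greedy transportation scheme. At stage $t$, maintain residual deficits $d_i\triangleq\mu^{(t)}_i-\theta'_i\ge0$ for $i\le k$ and residual surpluses $s_j\triangleq\theta'_j-\mu^{(t)}_j\ge0$ for $j>k$. While some deficit is positive, pick $i\le k$ with $d_i>0$ and $j>k$ with $s_j>0$, set $\delta=\min(d_i,s_j)>0$, and let $\mu^{(t+1)}$ differ from $\mu^{(t)}$ only in $\mu^{(t+1)}_i=\mu^{(t)}_i-\delta$ and $\mu^{(t+1)}_j=\mu^{(t)}_j+\delta$. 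Each step preserves total mass and keeps every coordinate in $[0,1]$ (we subtract at most the residual deficit and add the matching surplus), so $\mu^{(t+1)}\in\Delta_n$; moreover $\sum_{i\le k}d_i$ strictly decreases by $\delta$ each step, so the scheme terminates after finitely many steps exactly at $\theta'$.

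Then I would observe that every elementary step is precisely an instance of Lemma \ref{lem:smp}: between $\mu^{(t)}$ and $\mu^{(t+1)}$ only coordinates $i$ and $j$ change, with $\mu^{(t+1)}_i\le\mu^{(t)}_i$ and $\mu^{(t+1)}_j\ge\mu^{(t)}_j$, and crucially $i\le k<j$ forces $i<j$, hence $\rho(F_i)\le\rho(F_j)$ under the assumed ordering. Lemma \ref{lem:smp} therefore yields
\[ \rho\Big(\sum_{m=1}^n \mu^{(t)}_m F_m\Big)\le\rho\Big(\sum_{m=1}^n \mu^{(t+1)}_m F_m\Big), \]
and chaining over $t=0,\dots,T-1$ gives $\rho(\sum_m\theta_m F_m)\le\rho(\sum_m\theta'_m F_m)$, which is the claim.

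The main obstacle is the bookkeeping of the transportation scheme: verifying that every intermediate $\mu^{(t)}$ is a genuine element of $\Delta_n$ and that each elementary move legitimately fits the single-pair hypothesis of Lemma \ref{lem:smp}, in particular that the block structure prevents any mass from ever travelling \emph{backward} from a high-index to a low-index coordinate. Once it is clear that the conservation identity forces every transported unit to flow from an index $\le k$ to an index $>k$, monotonicity of $\rho$ along the path is automatic and the remaining argument is purely combinatorial.
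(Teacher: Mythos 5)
Your proof is correct and takes essentially the same approach as the paper's: both decompose the block-wise rearrangement into a finite chain of pairwise low-to-high mass transports, each of which is an instance of Lemma \ref{lem:smp}, and then chain the resulting inequalities. The only difference is bookkeeping --- the paper empties the low indices $1,\dots,k$ one at a time, distributing each excess $\delta_l$ sequentially across the high indices, whereas you use a greedy deficit/surplus matching --- and this difference is immaterial.
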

\begin{proof}
Fix $k\in[n]$.  We rewrite the assumption imposed to $\theta^{\prime}$  as $\theta^{\prime}_i=\theta_{i}-\delta_i$ for $i\leq k$ and $\theta^{\prime}_i=\theta_{i}+\delta_i$ for $i> k$, where each $\delta_i\ge0$. It will be shown that there exists a sequence $\{\theta^l\}_{l\in[k]}$ satisfying $\theta^0=\theta$ and $\theta^k=\theta^{\prime}$ such that $\rho(\theta^l)\leq\rho(\theta^{l+1})$, then the proof shall be completed. 
	
sequence is constructed as follows: at the $l$-th iteration, we transport  probability mass $\delta_l$ of $\theta_l$ to the probability mass of $k+1,...,n$. Specifically, we start from moving to the least number $i_l\ge i_{l-1}$ that satisfy $\theta^{l-1}_{i_l} < \theta^{\prime}_{i_l}$ and sequentially move to the  next one if there is remaining mass. The iteration stops until all the mass $\delta_l$ are transported. Repeating the procedure for $k$ times we obtain $\theta^k=\theta^{\prime}$. The inequality $\rho(\theta^l)\leq\rho(\theta^{l+1})$ for each iteration follows from Lemma \ref{lem:smp}.
\end{proof}
Recall that the distributional optimism operator $\rO_c^1:\sD(\cS)\mapsto \sD(\cS)$ over space of PMFs with level $c$ and future return $\nu\in\sD^{\cS}$ as 
\[	\rO_c^1\prt{\widehat{P},\nu}\triangleq	\arg\max_{P\in B_1(\widehat{P},c)}U_{\beta}([P\nu]).\]
By Lemma \ref{lem:smp_mul}, $\rO_c^1\prt{\widehat{P},\nu}$ can be computed as follows
\begin{itemize}
    \item sort $\nu$ in the ascending order such that $U_{\beta}(\nu^1)\leq U_{\beta}(\nu^2)\cdots \leq U_{\beta}(\nu^S)$
    \item permute $\hat{P}$ in the order of $\nu$
    \item move  probability mass $\frac{c}{2}$ of the first $S-1$ states sequentially to the $S$-th state
\end{itemize}
The computational complexity of the three steps are $O(S\log(S))$, $O(S)$, and $O(S)$. Therefore the computational complexity of applying  $\rO_c^1$ in Line 6 of Algorithm \ref{alg:RODI-MB} is only $O(S\log(S))$.

\vskip 0.2in
\bibliography{sample}

\end{document}